\definecolor{blue}{rgb}{0,0,1}
\definecolor{green}{rgb}{0,1,0}
\definecolor{orange}{rgb}{0.9,0.4,0}
\newcommand{\reva}[3]{%
{#2}%
}
\newcommand{\revb}[3]{%
{#2}%
}
\DeclareMathOperator*{\argmax}{arg\,max}
\DeclareMathOperator*{\argmin}{arg\,min}
\newtheorem*{theorem*}{Theorem}
\newtheorem*{definition*}{Definition}
\newtheorem*{corollary*}{Corollary}
\newcommand*\widebar[1]{\overline{#1}}
\newcommand*{\smallsquare}{     
     \vcenter{\hbox{\scalebox{0.45}{$\;\mathbin{ \blacksquare }\;$}}}
}
\DeclarePairedDelimiterX{\kldivx}[2]{\big[}{\big]}{%
  #1\;\delimsize\|\;#2%
}
\newcommand{\kldiv}{D_{\text{KL}}\kldivx}
\newcolumntype{L}[1]{>{\raggedright\let\newline\\\arraybackslash\hspace{0pt}}m{#1}}
\newcolumntype{C}[1]{>{\centering\let\newline\\\arraybackslash\hspace{0pt}}m{#1}}
\newcolumntype{R}[1]{>{\raggedleft\let\newline\\\arraybackslash\hspace{0pt}}m{#1}}
\newlength{\myl}
\let\origequation=\equation
\let\origendequation=\endequation
\resizebox{\linewidth}{!}{\ensuremath{\BODY}}}  
\begin{document}

\title{Bayesian Unsupervised Disentanglement \\ of Anatomy and Geometry for Deep \\ Groupwise Image Registration}

\author{Xinzhe~Luo, Xin~Wang, Linda~Shapiro,~\IEEEmembership{Fellow,~IEEE,} Chun~Yuan, Jianfeng~Feng, Xiahai~Zhuang
  \IEEEcompsocitemizethanks{
    \IEEEcompsocthanksitem X. Luo and X. Wang contributed equally to this work.
    \IEEEcompsocthanksitem Corresponding author: Xiahai Zhuang.
    \IEEEcompsocthanksitem X. Luo and X. Zhuang are with the School of Data Science, Fudan University, Shanghai 200433, China.  
    E-mail: \{xzluo19, zxh\}@fudan.edu.cn. 
    \IEEEcompsocthanksitem X. Luo is now with the Department of Electrical and Electronic Engineering and I-X, Imperial College London, London SW7 2AZ, United Kingdom.
    E-mail: x.luo@imperial.ac.uk.
    \IEEEcompsocthanksitem X. Wang is with the Department of Electrical and Computer Engineering, University of Washington, Seattle 98195, United States.
    E-mail: xwang99@uw.edu.
    \IEEEcompsocthanksitem L. Shapiro is with the Paul G. Allen School of Computer Science and Engineering, University of Washington, Seattle 98195, United States.
    E-mail: shapiro@cs.washington.edu.
    \IEEEcompsocthanksitem C. Yuan is with the Department of Radiology and Imaging Sciences, University of Utah, and the Department of Radiology, University of Washington, Seattle 98109, United States.
    E-mail: cyuan@uw.edu.
    \IEEEcompsocthanksitem J. Feng is with the Shanghai Center for Mathematical Sciences and the Institute of Science and Technology for Brain-Inspired Intellengence, Fudan University, Shanghai 200433, China.
    Email: jffeng@fudan.edu.cn
    \IEEEcompsocthanksitem X. Luo and X. Zhuang were funded by the National Natural Science Foundation of China (grant No. 62372115) Shanghai Municipal Education Commission-Artificial Intelligence Initiative to Promote Research Paradigm Reform and Empower Disciplinary Advancement Plan (grant no. 24KXZNA13).
  }
  \thanks{Manuscript received ..; revised ..}
}

\IEEEtitleabstractindextext{%
\begin{abstract}
This article presents a general Bayesian learning framework for multi-modal groupwise image registration.
The method builds on probabilistic modelling of the image generative process, where the underlying common anatomy and geometric variations of the observed images are explicitly disentangled as latent variables.
Therefore, groupwise image registration is achieved via hierarchical Bayesian inference.
We propose a novel hierarchical variational auto-encoding architecture to realise the inference procedure of the latent variables, where the registration parameters can be explicitly estimated in a mathematically interpretable fashion.
Remarkably, this new paradigm learns groupwise image registration in an unsupervised closed-loop self-reconstruction process, sparing the burden of designing complex image-based similarity measures. 
The computationally efficient disentangled network architecture is also inherently scalable and flexible, allowing for groupwise registration on large-scale image groups with variable sizes.
Furthermore, the inferred structural representations from multi-modal images via disentanglement learning are capable of capturing the latent anatomy of the observations with visual semantics.
Extensive experiments were conducted to validate the proposed framework, including four different datasets from cardiac, brain, and abdominal medical images.
The results have demonstrated the superiority of our method over conventional similarity-based approaches in terms of accuracy, efficiency, scalability, and interpretability.

\end{abstract}

\begin{IEEEkeywords}
  Bayesian Deep Learning, Disentangled Representation, Groupwise Registration, Multi-Modality, Interpretability
\end{IEEEkeywords}}

\maketitle
\IEEEdisplaynontitleabstractindextext
\IEEEpeerreviewmaketitle

\IEEEraisesectionheading{\section{Introduction}\label{sec:intro}}
\IEEEPARstart{G}{roupwise} image registration aims to find the hidden spatial correspondence that aligns multiple observations.
For medical images, when the observations reflect some common anatomy, their intrinsic structural correspondence, which can be independent of the multi-modal imaging acquisition protocol, is of particular interest.
However, conventional methods on multi-modal groupwise registration usually rely on intensity-based similarity measures to iteratively optimise the spatial transformations.
For instance, multivariate joint entropy and mutual information were proposed as groupwise similarity measures in \cite{conference/miccai/boes1999,journal/prl/studholme2004,journal/tip/spiclin2012,journal/tpami/wachinger2012}, followed by more computationally favourable template-based approaches \cite{journal/media/polfliet2018,journal/tpami/luo2022}.
Nevertheless, devising proper similarity measures and choosing the correct registration hyperparameters for heterogeneous medical images can be tedious and challenging.
The high computational burden and the instability in registration accuracy may also prevent real-world applications of conventional groupwise registration algorithms.

In this work, we seek to establish a new unified and interpretable learning framework for large-scale unsupervised multi-modal groupwise registration.
Inspired by recent progress in disentangled representation learning \cite{journal/tpami/bengio2013,journal/arxiv/higgins2018}, we propose to disentangle the underlying common anatomy and geometric variations from the observed images, which can be regarded as reverting the data generating process of the observations.
To this end, a probabilistic generative model is constructed, where the common anatomy and spatial transformations are disentangled as latent variables.
Thus, the problem of groupwise registration (recovery of the intrinsic structural correspondence) devolves to the estimation of the latent posterior distribution, which is then solved via variational inference.
Besides, since unsupervised learning of disentangled representation is impossible without proper inductive biases \cite{conference/icml/locatello2019,conference/aistats/khemakhem2020}, we have designed a novel hierarchical variational auto-encoding architecture to realise the inference procedure of latent variables, where the decoder network complies with the equivariance assumption of the imaging process.
Accordingly, groupwise registration is learnt in an unsupervised closed-loop self-reconstruction process:
i) the encoder extracts the intrinsic structural representations from the multi-modal observations, based on which the common anatomy and the spatial correspondence are estimated;
ii) the decoder emulates the equivariant image generative process from the common anatomy, and reconstructs the observations via the inverse spatial transformations.

To improve the efficiency and scalability of groupwise registration,
particularly in achieving deep groupwise registration with variable (and potentially
very large) group sizes for the first time, we make the following contributions:
\begin{itemize}
  \item We propose a new learning paradigm for multi-modal groupwise registration based on Bayesian inference and disentangled representation learning. 
  Remarkably, we can achieve registration in an unsupervised closed-loop self-reconstruction process, which spares the burden of designing complex image-based similarity measures.
  \item We propose a novel hierarchical variational auto\hyp{}encoding architecture for the joint inference of latent variables.
  The network is able to reveal the underlying structural representations from the observations with visual semantics, based on which the intrinsic structural correspondence can be explicitly estimated in a mathematically interpretable fashion.
  \item Based on the equivariance assumption and the decomposition of the symmetry group actions on the observational space, we prove that under certain conditions, our model can identify the desired registration parameters, which constitutes the theoretical underpinnings of the established framework.
  \item Our registration model, while trained using small image groups, can be readily adapted to large-scale and variable-size test groups, significantly enhancing its computational efficiency and applicability.
  \item We validated the proposed framework on four publicly available medical image datasets, demonstrating its superiority over similarity-based methods in terms of accuracy, efficiency, scalability, and interpretability.
\end{itemize}

The remainder of the article is organised as follows.
\cref{sec:related_work} discusses the related work to this study.
\cref{sec:bayes_reg} elucidates the proposed Bayesian inference framework for groupwise registration.
\cref{sec:bayes_network} describes the proposed hierarchical variational auto-encoder architecture for learning and estimation of the latent variables.
\cref{sec:experiment} presents the experimental setups and evaluation results of our method on four different public datasets.
\cref{sec:discussion} notes implications of the proposed framework and concludes the study.
For conciseness, in \cref{tab:notation} we specify the main mathematical symbols used in the rest of this paper.

\begin{table*}[t]
  \small
  \centering
  \caption{Definition of the main mathematical symbols used in this paper.}
  \begin{tabular}{p{0.18\columnwidth}p{0.75\columnwidth}|p{0.18\columnwidth}p{0.75\columnwidth}}
    \toprule
    Symbol & Description & Symbol & Description \\
    \midrule
    $L$ & the number of levels of latent variables & $N$ & the number of images in the group \\
    $\bm{Z}$ & random variable of the common anatomy & $\bm{U}$ & random vector of the image group $\{U_j\}_{j=1}^N$ \\
    $\Omega$ & the coordinate space of the common anatomy & $\Omega_j$ & the $j$-th image coordinate space \\
    $\bm{\phi}$ & the set of diffeomorphic maps $\bm{\phi}\triangleq\{\phi_j\}_{j=1}^N$ & $\phi_j$ & the spatial transformation from $\Omega$ to $\Omega_j$ \\ 
    $\bm{z}$ & the latent common structural representation & $\bm{u}$ & an observed sample of the image group \\
    $\bm{z}^l$ & the latent common structural representation at the $l$-th level & $\bm{v}^l$ & the latent stationary velocity fields registering the image group at the $l$-th level \\ $q^*(\bm{z}^l\,|\,\bm{u},\bm{v})$ & the geometric mean variational distribution of $\bm{z}^l$ given $\bm{u}\circ\bm{\phi}\triangleq\{u_j\circ\phi_j\}_{j=1}^N$ & $\bm{v}_j^l$ & the stationary velocity field of the $j$-th diffeomorphism at the $l$-th level, $\bm{v}^l=\{\bm{v}_j^l\}_{j=1}^N$ \\ 
    $q_j^{\diamond}(\bm{z}^l\,|\, u_j,\bm{v}_j)$ & the $j$-th single-view variational distribution of $\bm{z}^l$ given $u_j\circ\phi_j$  & $\bm{\mu}_{\bm{v},j}^l,\bm{\Sigma}_{\bm{v},j}^l$ & parameters of the Gaussian variational distribution $q(\bm{v}_j^l\mid \bm{u},\bm{v}^{<l})$ \\
    $\widetilde{q}_{j}(\bm{z}^l\,|\, u_j)$ & the $j$-th single-view variational distribution of $\bm{z}^l$ given $u_j$  & $\bm{v}_j^+$ & the total velocity field aggregating all $\{\bm{v}_j^l\}_{l=1}^L$ upsampled to the top level \\
    $\bm{\psi}$ & parameters of the variational model $\{\bm{\psi}_j\}_{j=1}^N$ & $\bm{\theta}$ & parameters of the generative model \\
    \bottomrule
  \end{tabular}
  \label{tab:notation}
\end{table*}

\section{Related Work}\label{sec:related_work}
In this section, we review related work to the proposed framework, including similarity-based and deep feature-based approaches to (groupwise) image registration, as well as literature on multi-modal representation learning that facilitates disentanglement of latent variables from multiple modalities. 
We hope to expose the connections between them that motivate our established framework.

\subsection{Groupwise Image Registration}
Let $\bm{U}=\{U_j\}_{j=1}^N$ be the random vector representing the image group and $\bm{u}=\{u_j\}_{j=1}^N$ an observed sample of $\bm{U}$, with $U_j:\mathds{R}^d\supset\Omega_j\rightarrow\mathds{R}$ the intensity mapping, $\Omega_j$ the image domain, and $d$ the dimensionality.
In the classical pattern matching and computational anatomy regime \cite{book/oxford/grenander1993,journal/qam/grenander1998,journal/ijcv/trouve1998,journal/qam/dupuis1998,journal/ijcv/miller2001}, the observed homogeneous image group is considered as samples from the orbit of a transformation group $\mathcal{G}$ acting on a deformable template $U_0$ (\emph{a.k.a.} atlas), namely
\begin{equation}
  \bm{U} \subset \mathcal{G}\cdot U_0\triangleq \{U_0\circ\phi_j^{-1}:\phi_j\in\mathcal{G}\},
\end{equation}
where the transformation $\phi_j^{-1}:\Omega_j\rightarrow\Omega$ maps spatial locations in the image domain to a common coordinate space $\Omega\subset\mathds{R}^d$, and $\mathcal{G}$ is often taken as the group of diffeomorphisms.
This concept has motivated the development of population averaging methods that estimate the deformations $\mathcal{G}$ and the template $U_0$ simultaneously, based on the observations $\bm{U}$ \cite{journal/cviu/guimond2000,journal/ni/avants2004,journal/ni/joshi2004,journal/ni/christensen2006,conference/miccai/bhatia2007,journal/jrssb/allassonnire2007,journal/ni/ma2008,journal/ni/geng2009,journal/media/zhang2015,conference/neurips/dalca2019,conference/cvpr/ding2022}.
The resultant transformations encode the structural variability and the template provides a statistical representative of the images.
In particular, \emph{a priori} guess of the template as one of the observed images was proposed in \cite{journal/cviu/guimond2000,journal/ni/avants2004,journal/ni/christensen2006}, with the template updated by the average deformation in each iteration.
\citet{journal/ni/joshi2004} and \citet{conference/miccai/bhatia2007} proposed using the intensity mean image as a template, avoiding reference selection by estimating transformations from the common space.
\citet{journal/jrssb/allassonnire2007} extended the setup to a Bayesian framework, with the template modelled as a linear combination of continuous kernel functions.
Recently, \citet{conference/neurips/dalca2019,conference/cvpr/ding2022} proposed learning-based template estimation methods, using spatial transformation networks to predict diffeomorphisms parameterized by stationary velocity fields \cite{conference/miccai/arsigny2006,journal/ni/ashburner2007}.
Nevertheless, groupwise registration based on the deformable template assumption could not readily accommodate the multi-modal nature of medical images.

\emph{Multi-modality} can introduce additional complexity, as the observed images are not related to an intensity template directly through spatial transformations.
In this case, based on the common anatomy assumption of multi-modal medical images, we can instead assume an \emph{anatomical} or structural representation $\bm{Z}$ from which each observed image is generated through the composition of an imaging functional $f_j$ and a spatial transformation $\phi_j^{-1}$, \emph{i.e.},
\begin{equation}\label{eq:img_generate}
  U_j = f_j(\bm{Z})\circ\phi_j^{-1}.
\end{equation}
More importantly, the imaging functional is assumed to be \emph{spatially equivariant} w.r.t. $\phi_j^{-1}$, a condition motivated by the demand to learn equivariant image features \cite{conference/icann/hinton2011,conference/icann/kivinen2011,conference/cvpr/schmidt2012,journal/tpami/qi2020}, which writes
\begin{equation}
  f_j(\bm{Z})\circ\phi_j^{-1} = f_j(\bm{Z}\circ\phi_j^{-1}),\quad \forall\,\phi_j\in \mathcal{G}.
\end{equation}

Inspired by inter-modality pairwise registration based on the joint intensity distribution (JID) or particularly the mutual information (MI) \cite{journal/ijcv/viola1997,journal/tmi/maes1997,conference/miccai/leventon1998,journal/pr/studholme1999,journal/ijist/roche2000}, initial attempts to realise \emph{multi-modal groupwise registration} focused on generalising this information-theoretic approach directly to high-dimensional cases \cite{conference/miccai/boes1999,journal/prl/studholme2004,conference/ipmi/zhang2005,journal/tip/spiclin2012,journal/tpami/wachinger2012}.
Later, concerning the curse of dimensionality in estimating high-dimensional JIDs, template-based groupwise registration revives through probabilistic modelling of the image generative process in \cref{eq:img_generate} \cite{journal/media/lorenzen2006,journal/tip/orchard2009,journal/ni/blaiotta2018,journal/media/polfliet2018,journal/tpami/zhuang2019,journal/tpami/luo2022}.
These methods assumed that the JID is modelled by the marginalisation of the joint distribution between the latent template and the observed images.
Thus, the spatial transformations were determined by maximum likelihood estimators (MLEs) while the template was predicted via maximum a posteriori (MAP).
We direct interested readers to our previous work \cite{journal/tpami/luo2022} for detailed elucidation of this maximum-likelihood perspective and its connection to information-theoretic metrics.

Over the past years, the shift from optimization to learning-based image registration \cite{journal/media/hu2018,journal/media/de2019,journal/tmi/balakrishnan2019,journal/media/dalca2019} has also motivated the development of learning\hyp{}based multi-modal groupwise registration methods which predict the desired spatial transformations using neural network estimation \cite{conference/ipmi/che2019,conference/miccai/luo2020}.
However, this deep-learning approach may inherent the same limitation of scalability from its optimisation-based counterpart.
That is, the trained network for a fixed input size can hardly be utilised to register image groups of variable sizes.
This hinders the potential of learning\hyp{}based frameworks on large-scale groupwise registration.

\subsection{Deep Feature-Based Image Registration}
A closely related line of research to our work is deep feature-based image registration \cite{conference/neurips/pielawski2020,conference/miccai/liu2021,conference/miccai/moyer2021,conference/miccai/siebert2021,journal/media/czolbe2023,journal/tip/deng2023}.
These methods usually adopt a two-stage pipeline: in the first stage a feature extraction network is pre-trained on a surrogate task, be it segmentation \cite{conference/miccai/siebert2021,journal/media/czolbe2023}, auto-encoding \cite{journal/media/czolbe2023}, or contrastive learning \cite{conference/neurips/pielawski2020,conference/miccai/liu2021}, followed by an instance optimisation procedure using conventional similarity measures with the learnt features in the second stage.
They may have the advantage of being interpretable and capable of multi-modal registration based on modality\hyp{}invariant features \cite{conference/neurips/pielawski2020,conference/miccai/liu2021,conference/miccai/siebert2021}, compared to previous end-to-end counterparts \cite{journal/media/de2019,journal/tmi/balakrishnan2019} that make predictions using a black-box network.
However, the two-stage prediction pipeline and the requirement for ground-truth labels (segmentation annotation or aligned image pairs) limit their applicability.
In contrast, \citet{conference/ipmi/qin2019} proposed using the multimodal unsupervised image-to-image translation framework \cite{conference/eccv/huang2018} to reduce the multi-modal registration problem to a mono-modal one by disentangling shape and appearance representations.
More recently, \citet{journal/tip/deng2023} proposed a unified framework where explicit modality-invariant feature extraction and unsupervised registration are learnt via an interpretable optimisation problem.

\subsection{Multi-Modal Representation Learning}
The underlying structural representations that facilitate the estimation of spatial correspondence can be regarded as some latent embedding, which encodes the information correlating the multi-modal observations. 
This viewpoint, in the context of heterogeneous data, brings the need for multi-modal representation learning which exploits the commonality and complementarity of multiple modalities, and the demand for data alignment which identifies the relation and correspondence between different modalities \cite{journal/tpami/baltruvsaitis2018}. 

Modern probabilistic generative models \cite{conference/iclr/kingma2014,conference/neurips/goodfellow2014,conference/icml/van2016} construct a general framework for representation learning, since the data formation procedure to which they adapt is optimally suited for an unsupervised manner, and are powerful in striking a balance between fitting and generalisability taking into account the uncertainty in characterising the data and latent distributions. 

One family of methods within this scope is variational autoencoders (VAEs) \cite{conference/iclr/kingma2014}, where the paradigm is to approximate the intractable latent distribution by a learnable variational posterior during the optimisation of an evidence lower bound (ELBO) of the log likelihood of observed data. 
It provides an information-theoretic way to establish the foundational underpinnings of data encoding for downstream tasks \cite{conference/iclr/alemi2017}.
The multi-modal variants of VAEs additionally imbue the capability of retrieving modality-invariant representation by factorising it into single-view posteriors (\emph{a.k.a.} experts) inferred from each modality. 
A typical factorisation specifies the combination of the experts explicitly (\emph{e.g.} mixtures \cite{conference/neurips/shi2019}, products \cite{conference/neurips/wu2018}, mixtures of products \cite{conference/iclr/sutter2021}) or implicitly (\emph{e.g.} cross-modal generation \cite{conference/neurips/tu2022}), compelling the summarisation of the common information from any individual modality. 

These approaches, nonetheless, all necessitate originally aligned inputs, such as the image and sentence that describe the same object \cite{conference/neurips/shi2019}, or multi-modal images that are well registered \cite{conference/miccai/havaei2016,conference/miccai/dorent2019}, which limits their applicability to distorted data. 
Besides, they often adopt a reductionist stance by assuming an overly simplistic prior. (\emph{e.g.}, the standard Gaussians or Laplacians), failing to respect the structure of latent manifold induced by real-world high-dimensional images, which is crucial for mitigating off-track regularisation and vanishing latent dimensions \cite{conference/neurips/hoffman2016,conference/aistats/tomczak2018}, as well as for ensuring suitable decomposition of latent encodings \cite{conference/icml/mathieu2019}. 
The mixture-based factorisation of the posterior also suffers from inevitable gap between the ELBO and the true likelihood, leading to undesirable suboptimal results \cite{conference/iclr/daunhawer2022}. 

In contrast, our framework embraces advancements in tackling all the limitations above: It allows learning from distorted images in a unified manner, through explicitly endowing the representations of spatial transformations and the common anatomy with a disentangled nature. 

\section{Bayesian Groupwise Registration}\label{sec:bayes_reg}
\begin{figure*}[t]
  \centering
  \begin{subfigure}{0.27\textwidth}
    \centering
    \includegraphics[width=0.55\textwidth]{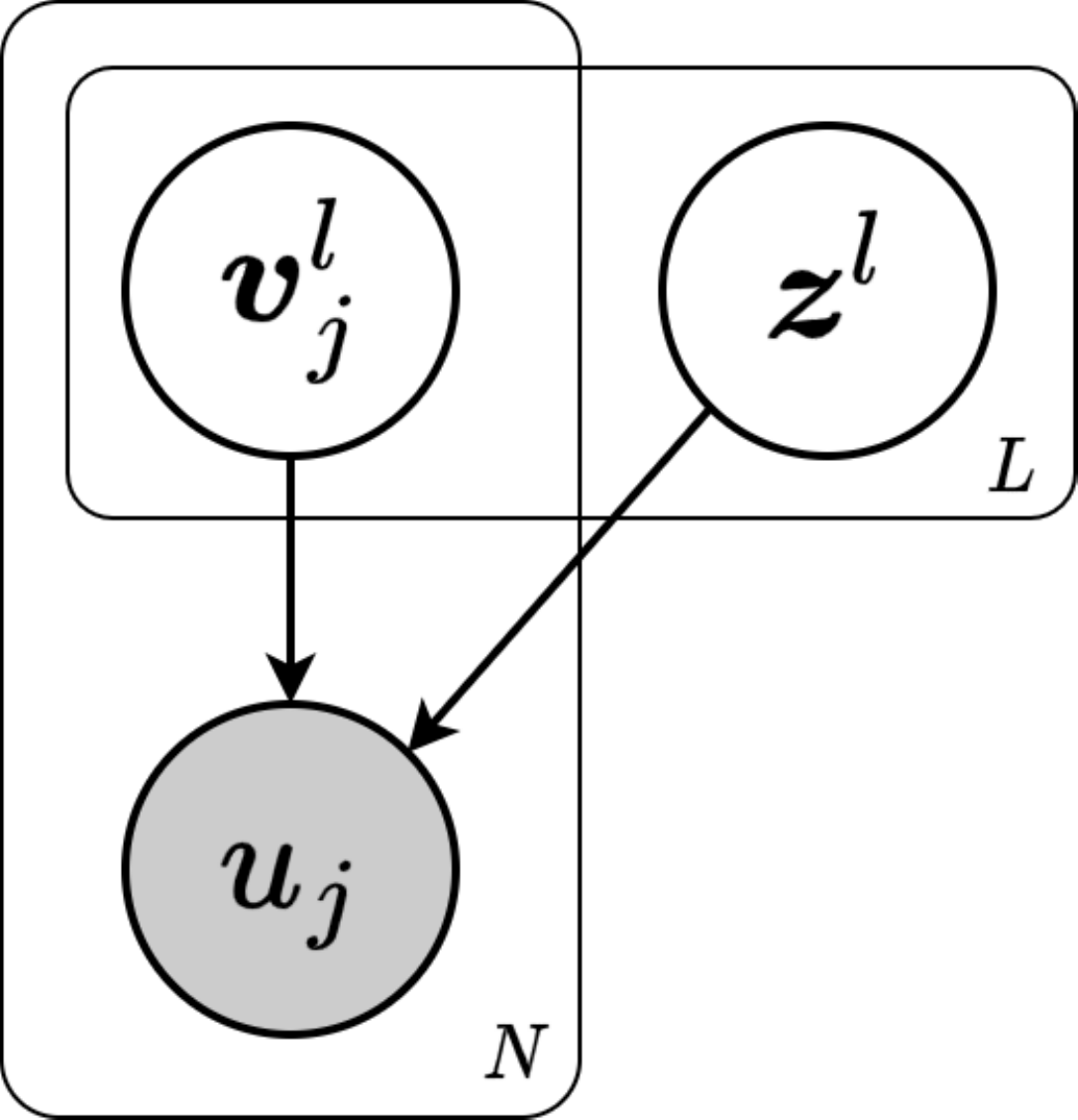}
    \caption{Generative model.}
    \label{fig:graphical_model}
  \end{subfigure}
  \begin{subfigure}{0.24\textwidth}
    \centering
    \includegraphics[width=0.85\textwidth]{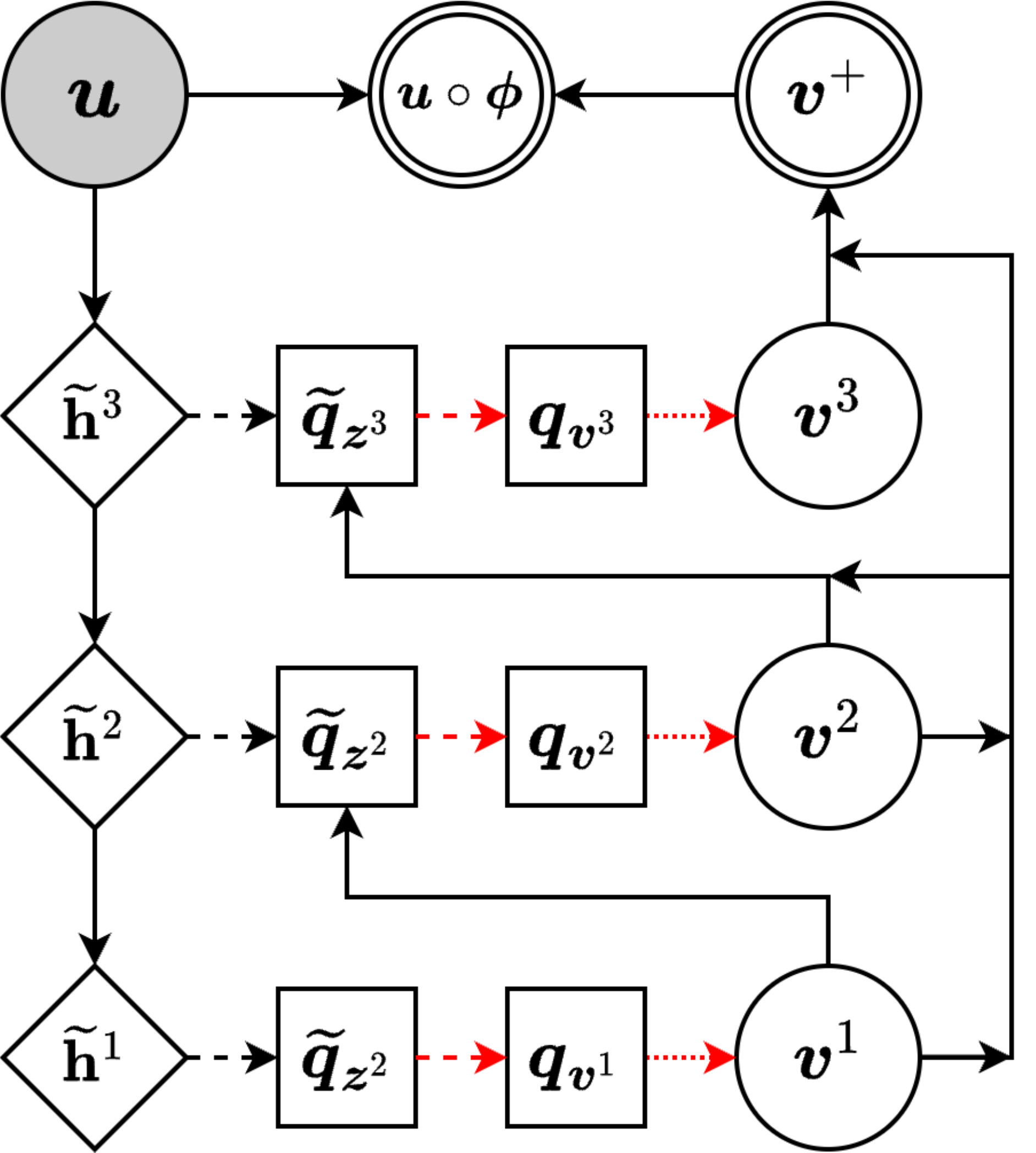}
    \caption{Inference steps \#1.}
    \label{fig:inference_steps1}
  \end{subfigure}
  \begin{subfigure}{0.23\textwidth}
    \centering
    \includegraphics[width=0.8\textwidth]{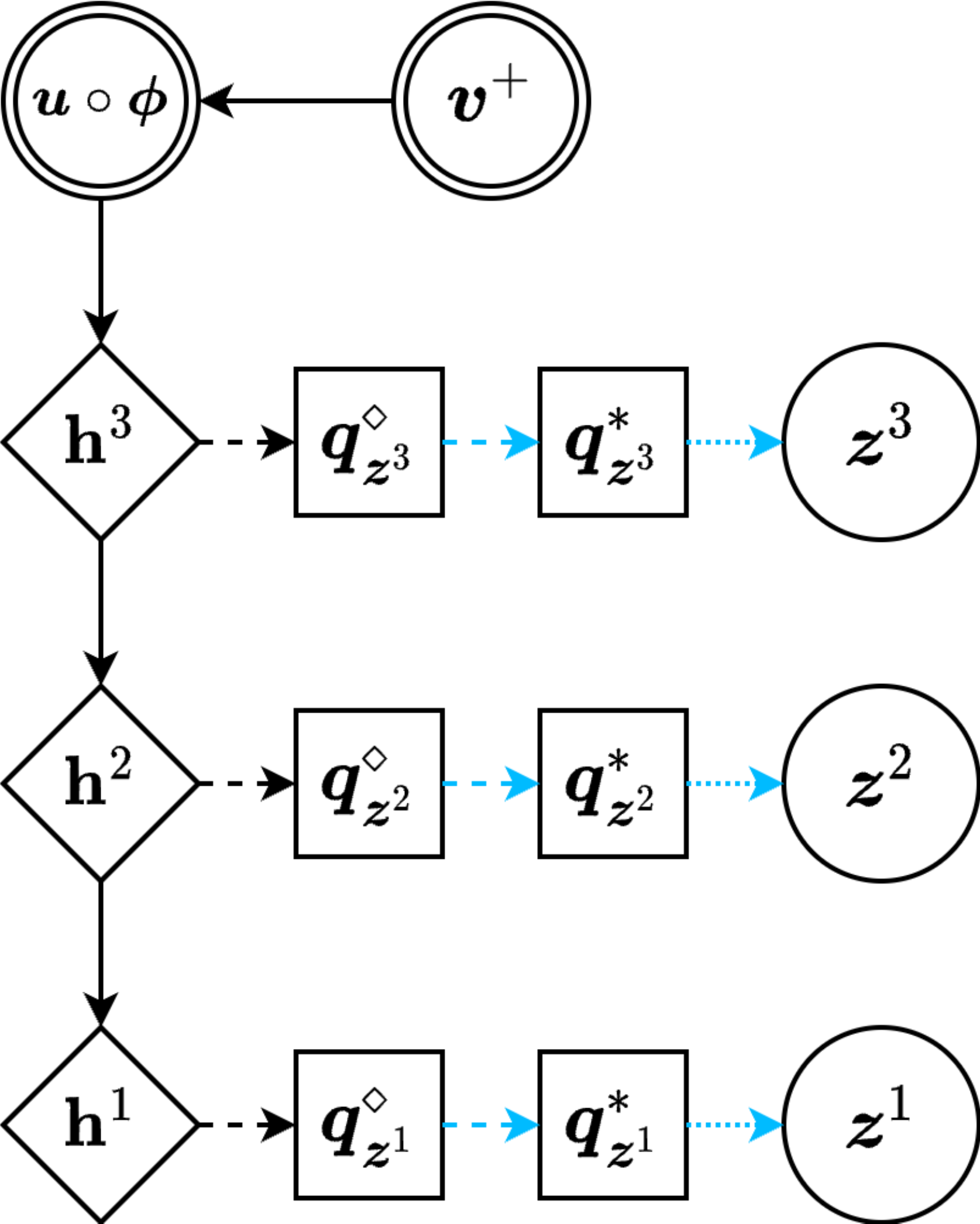}
    \caption{Inference steps \#2.}
    \label{fig:inference_steps2}
  \end{subfigure}
  \begin{subfigure}{0.23\textwidth}
    \centering
    \includegraphics[width=0.73\textwidth]{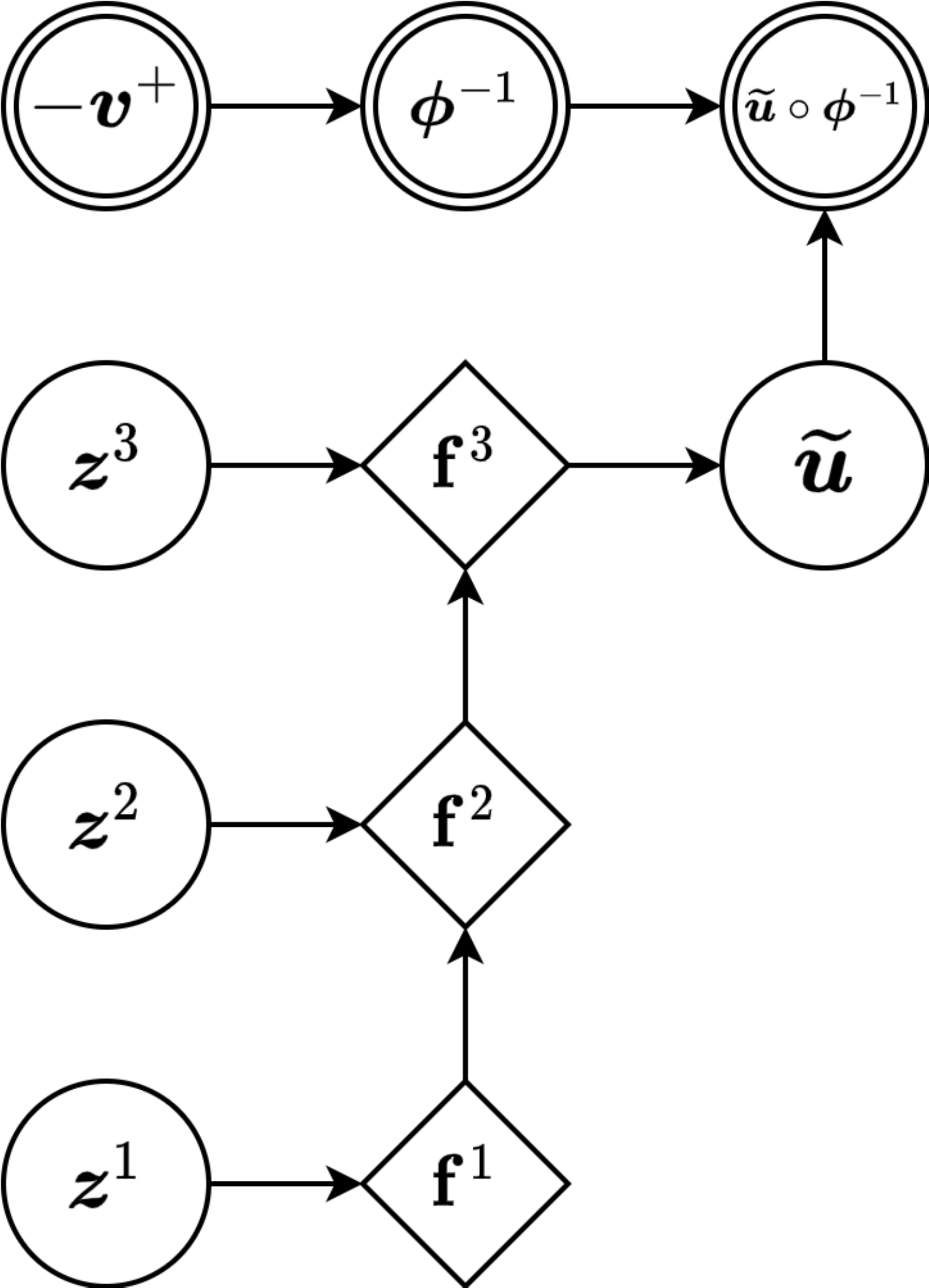}
    \caption{Generation steps.}
    \label{fig:generation_steps}
  \end{subfigure}
  \caption{The proposed hierarchical framework for Bayesian groupwise registration (3-layer example). 
  Random variables are in circles, deterministic variables are in double circles, and observed variables are shaded.
  Diamonds denote network feature maps, and squares represent variational distributions.
  (a) Probabilistic graphical model of the generative process. 
  (b) Inference steps \#1 that predict the hierarchical velocity fields, where we denote $\widetilde{\bm{q}}_{\bm{z}^l}\triangleq\{\widetilde{q}_j(\bm{z}^l\mid u_j;{\bm{\psi}_j})\}_{j=1}^N$ and $\bm{q}_{\bm{v}^l}\triangleq\{q(\bm{v}_j^l\mid\bm{u},\bm{v}^{<l};{\bm{\psi}})\}_{j=1}^N$.
  (c) Inference steps \#2 that predict the common structural representations base on the warped images, where we denote $\bm{q}_{\bm{z}^l}^{\diamond}\triangleq\{q_j^{\diamond}(\bm{z}^l\mid u_j,\bm{v}_j;{\bm{\psi}_j})\}_{j=1}^N$ and $\bm{q}_{\bm{z}^l}^*\triangleq q^*(\bm{z}^l\mid\bm{u},\bm{v};{\bm{\psi}})$.
  (d) Generation steps that reconstruct the original images.
  Note that the inference and generation steps form a closed-loop self-reconstruction process.}
  \label{fig:graph}
\end{figure*}

The fundamental principle driving the proposed approach is to disentangle the underlying common anatomy and geometric variations as latent representations \emph{w.r.t.} the decomposition of two symmetry groups acting independently in the observational image space.
More details on the underlying connection between disentangled representations and symmetry groups can be found in Section E.1 of the Supplementary Material.
To this end, we propose to formulate the estimation of these latent variables as a problem of Bayesian inference.
Specifically, let $\bm{u}=(u_j)_{j=1}^N$ be a sample of the random vector $\bm{U}$ of the image group.
From \cref{eq:img_generate}, each image $u_j$ is generated from the latent variable $\bm{z}$ representing the corresponding common anatomy, and the diffeomorphic transformation $\phi_j^{-1}$.
To ensure its invertibility, we assume that the transformations $\bm{\phi}\triangleq(\phi_j)_{j=1}^N$ is parameterized by stationary velocity fields $\bm{v}=(\bm{v}_j)_{j=1}^N$ \cite{conference/miccai/arsigny2006,journal/ni/ashburner2007} such that $\phi_j=\exp(\bm{v}_j)$, and
\begin{equation}
  \frac{\partial}{\partial t}\phi_j(\bm{\omega},t)=\bm{v}_j(\phi_j(\bm{\omega},t)),\quad \forall\,\bm{\omega}\in\Omega,\ t\in [0,1].
\end{equation}
The diffeomorphism of the transformations is further imposed by assigning proper priors to the velocity fields.

\subsection{Hierarchical Bayesian Inference}
In practice, registration is often performed at multiple levels to facilitate convergence of the algorithm \cite{conference/miccai/schnabel2001,journal/tmi/sotiras2013}.
Therefore, we express the latent variables with $L$ hierarchical levels, \emph{i.e.}, $\bm{z}=(\bm{z}^l)_{l=1}^L$ and $\bm{v}_j=(\bm{v}_j^l)_{l=1}^L$, in which higher levels indicate finer resolutions.
Thus, the graphical model of the image generative process can be described as \cref{fig:graphical_model}.
Note that while different levels of the latent variables are assumed to be independent, the inference procedure is performed \emph{hierarchically}.

Since we are going to parameterize the likelihood $p(\bm{u}\mid \bm{z},\bm{v};\bm{\theta})$ by neural networks in the context of Bayesian deep learning, exact inference and parameter estimation become intractable \cite{conference/iclr/kingma2014}.
Therefore, we resort to variational inference (VI) to approximate the maximum likelihood.
The objective function of VI is the evidence lower bound (ELBO) of the log-likelihood function, \emph{i.e.},
\begin{equation}\label{eq:ELBO}
  \begin{aligned}
    \mathcal{L}(\bm{\theta},\bm{\psi}\mid\bm{u}) &\triangleq
    \begin{aligned}[t]
      & \mathbb{E}_{q(\bm{z},\bm{v}\mid\bm{u};\bm{\psi})}[\log p(\bm{u}\mid\bm{z},\bm{v};\bm{\theta})] \\
      & - \kldiv{q(\bm{z},\bm{v}\mid\bm{u};\bm{\psi})}{p(\bm{z})p(\bm{v})},
    \end{aligned}
    \\
    &\leq\log p(\bm{u};\bm{\theta}) \triangleq \ell(\bm{\theta}\mid\bm{u}) 
  \end{aligned}
\end{equation}
where $q(\bm{z},\bm{v}\mid\bm{u};\bm{\psi})$ is the variational distribution, an approximation to the intractable true posterior $p(\bm{z},\bm{v}\mid\bm{u};\bm{\theta})$; $\bm{\psi}$ and $\bm{\theta}$ are the variational and generative parameters respectively. 
The expectation over the likelihood can be estimated by Monte-Carlo sampling \cite{conference/iclr/kingma2014}.

To simplify the KL divergence term, we first write 
\begin{equation}
\begin{aligned}
  &\kldiv{q(\bm{z},\bm{v}\mid\bm{u};\bm{\psi})}{p(\bm{z})p(\bm{v})} \\
  &= \mathbb{E}_{q(\bm{v}\mid\bm{u};\bm{\psi})}\Big[\kldiv{q(\bm{z}\mid\bm{u},\bm{v};\bm{\psi})}{p(\bm{z})}\Big] + \kldiv{q(\bm{v}\mid\bm{u};\bm{\psi})}{p(\bm{v})},
\end{aligned}
\end{equation}
where $q(\bm{v}\mid\bm{u};{\bm{\psi}})=\prod_{l=1}^L q(\bm{v}^l\mid\bm{u},\bm{v}^{<l};{\bm{\psi}})$, with $\bm{v}^l\triangleq (\bm{v}_j^l)_{j=1}^N$, and $\bm{v}^{<l}$ denotes the velocity fields in levels lower than $l$.
This factorisation of velocity fields can be illustrated by the inference steps \#1 in \cref{fig:inference_steps1}.
Besides, we assume that the multi-level variational posteriors of the velocity fields factorizes as
\begin{equation}\label{eq:factorize_v}
  q(\bm{v}^l\mid\bm{u},\bm{v}^{<l};{\bm\psi})=\prod_{j=1}^N q(\bm{v}_j^l\mid\bm{u},\bm{v}^{<l};{\bm\psi}).
\end{equation}
Then, the KL \emph{w.r.t.} the velocity fields can be decomposed as 
\begin{equation}
  \begin{aligned}
    &\kldiv{q(\bm{v}\mid\bm{u};\bm{\psi})}{p(\bm{v})} \\
    =\ & \sum_{j=1}^N\sum_{l=1}^{L}\mathbb{E}_{q(\bm{v}_j^{<l}\mid\bm{u};{\bm\psi})}\left[\kldiv{q(\bm{v}_j^l\mid\bm{u},\bm{v}^{<l};{\bm{\psi}})}{p(\bm{v}_j^l)}\right].
  \end{aligned}
\end{equation}
Likewise, the KL \emph{w.r.t.} the common anatomy can be simplified into 
\begin{equation}
  \begin{aligned}
    &\kldiv{q(\bm{z}\mid\bm{u},\bm{v};\bm{\psi})}{p(\bm{z})} \\
    =\ & \sum_{l=1}^{L}\mathbb{E}_{q(\bm{z}^{<l}\mid\bm{u},\bm{v};{\bm\psi})}\left[\kldiv{q(\bm{z}^l\mid\bm{u},\bm{v};{\bm\psi})}{p(\bm{z}^l)}\right],
  \end{aligned}
\end{equation}
where we have assumed that 
\begin{equation*}
  q(\bm{z}^l\mid\bm{u},\bm{v},\bm{z}^{<l};{\bm\psi})=q(\bm{z}^l\mid\bm{u},\bm{v};{\bm\psi}),
\end{equation*}
\emph{i.e.}, the common anatomy at level $l$ can be inferred directly from $\bm{u}$ and $\bm{v}$ without referring to lower-level representations $\bm{z}^{<l}$, which is illustrated by \cref{fig:inference_steps2}.

\begin{figure}[t]
  \centering
  \includegraphics[width=\linewidth]{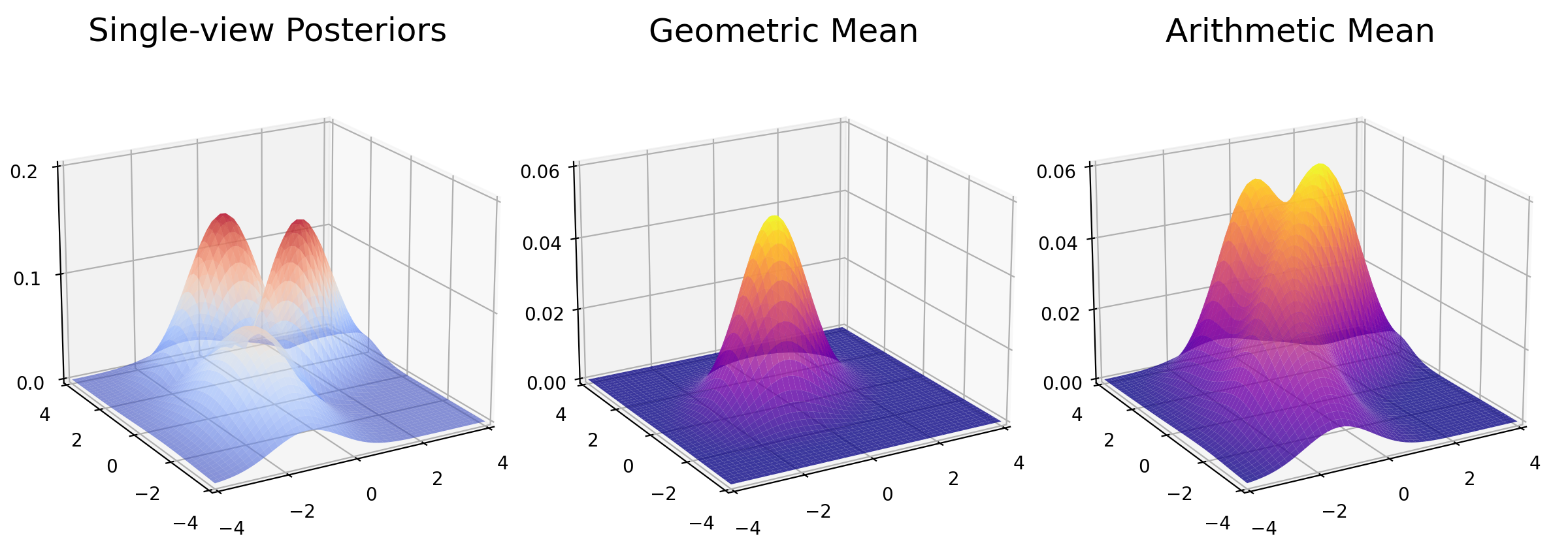}
  \caption{An example of the geometric and arithmetic mean for the single-view Gaussian posterior distributions.}
  \label{fig:geo_arith_mean}
\end{figure}

\begin{figure*}[!b]
  \normalsize
  \hrulefill
  \vspace*{4pt}
  \begin{equation}\label{eq:KL_decompose}
    \begin{aligned}
      \kldiv{q(\bm{z},\bm{v}\mid\bm{u};\bm{\psi})}{p(\bm{z})p(\bm{v})} =& \ \mathbb{E}_{q(\bm{v}\mid\bm{u};\bm{\psi})}\left\{ \sum_{l=1}^{L}\mathbb{E}_{q(\bm{z}^{<l}\mid\bm{u},\bm{v};{\bm\psi})}\left[\kldiv{q(\bm{z}^l\mid\bm{u},\bm{v};{\bm\psi})}{p(\bm{z}^l)}\right] \right\} \quad\quad\text{(i)} \\
      &\ + \sum_{j=1}^N\sum_{l=1}^{L}\mathbb{E}_{q(\bm{v}_j^{<l}\mid\bm{u};{\bm\psi})}\left[\kldiv{q(\bm{v}_j^l\mid\bm{u},\bm{v}^{<l};{\bm{\psi}})}{p(\bm{v}_j^l)}\right] \qquad\qquad
      \text{(ii)}
    \end{aligned}
  \end{equation}
\end{figure*}

\cref{eq:KL_decompose} summarises the decomposition of KL divergence \emph{w.r.t.} the latent variables.
Note that for simplicity, we have defined $q(\bm{v}_j^{<1}\mid\bm{u};{\bm{\psi}})=q(\bm{z}^{<1}\mid\bm{u},\bm{v};{\bm\psi})\triangleq 1$.
As will be elucidated in the following subsections, the key idea behind \cref{eq:KL_decompose} is: the overall KL divergence is decomposed \emph{w.r.t.} (i) the common structural representations $\bm{z}$, and (ii) the velocity fields $\bm{v}$.
The former is used to measure the intrinsic structural dissimilarity among the observed multi-modal images, while the latter serves as smoothness regularisation to enforce diffeomorphism of the transformations.

\subsection{Intrinsic Distance over Structural 
Representations}\label{sec:intrinsic_distance}

As mentioned above, the KL divergence \emph{w.r.t.} the common structural representations is used to measure the dissimilarity of the observed images for registration. 
Namely, instead of using similarity measures in the image space, we propose learning an intrinsic distance over the structural representations corresponding to the observed images.

We first extract the single-view structural representation maps corresponding to each individual image, represented by the distributions $q_j^{\diamond}(\bm{z}\mid u_j,\bm{v}_j;{\bm{\psi}_j})$, $j=1,\dots,N$, which is predicted from the inference steps \#2 (ref. \cref{fig:inference_steps2}). 
Note that $\bm{\psi}_j$ denotes the modality-specific variational parameters for each image, which are the same for images of the same modality.
Then we define the variational posterior of the common anatomy as the \emph{geometric mean} of these single-view posteriors, \emph{i.e.},
\begin{equation}\label{eq:geo_mean}
  q(\bm{z}^l\mid\bm{u},\bm{v};{\bm{\psi}}):= q^*(\bm{z}^l\mid\bm{u},\bm{v};{\bm{\psi}})\propto\left[\prod_{j=1}^{N}q_j^{\diamond}(\bm{z}^l\mid u_j,\bm{v}_j;{\bm{\psi}_j})\right]^{\nicefrac{1}{N}},
\end{equation}
which captures the common information among the multi-modal images.
On the other hand, the prior distribution of the common anatomy is given by the \emph{arithmetic mean} of the single-view posteriors, \emph{i.e.},
\begin{equation}
  p(\bm{z}^l;{\bm\psi}):=p^+(\bm{z}^l;{\bm\psi}) \triangleq \frac{1}{N}\sum_{j=1}^{N}q_j^{\diamond}(\bm{z}^l\mid u_j,\bm{v}_j;{\bm{\psi}_j}),
\end{equation}
which expresses \emph{a priori} knowledge of the common anatomy by the mixture of experts.
This setup is closely related to the \emph{variational mixture of posteriors prior} (VampPrior) introduced in \cite{conference/neurips/hoffman2016,conference/aistats/tomczak2018}, where the VampPrior with pseudo-inputs is chosen to maximise the ELBO.
For example, \cref{fig:geo_arith_mean} illustrates the geometric and arithmetic mean of Gaussian distributions when $N=3$.
One can observe that the three individual modes of single-view posteriors collapse to one after computing the geometric mean.
Indeed, if the experts are Gaussian, one can prove that the KL divergence from the arithmetic to the geometric mean distribution is minimised when the experts are identical.

On the other hand, applying Jensen's inequality to the KL divergence yields
\begin{equation}\label{eq:jensen_inequality}
  \begin{aligned}
    D_S&\triangleq\kldiv{q(\bm{z}^l\mid\bm{u},\bm{v};{\bm{\psi}})}{p^+(\bm{z}^l;{\bm\psi})} \\ 
    &\leq\frac{1}{N}\sum_{j=1}^{N}\kldiv{q(\bm{z}^l\mid\bm{u},\bm{v};{\bm{\psi}})}{q_j^{\diamond}(\bm{z}^l\mid u_j,\bm{v}_j;{\bm{\psi}_j})}\triangleq \widetilde{D}_S,
  \end{aligned}
\end{equation}
where $\widetilde{D}_S$ is an upper bound of the original (possibly) intractable KL divergence $D_S$ involving mixture distributions.
Thus, in practice we use this upper bound as a surrogate to minimise the KL divergence \emph{w.r.t.} the latent variable $\bm{z}$.

Note that when minimising $\widetilde{D}_S$ \emph{w.r.t.} the distribution $q(\bm{z}^l\mid\bm{u},\bm{v};{\bm{\psi}})$, we also obtain the geometric mean $q^*(\bm{z}^l\mid\bm{u},\bm{v};{\bm{\psi}})$ in \cref{eq:geo_mean}, \emph{i.e.},
\begin{equation*}
  q^*(\bm{z}\mid\bm{u},\bm{v};\bm{\psi}) =\argmin_{q}\widetilde{D}_S[q],
\end{equation*}
where $\widetilde{D}_S[q]$ is regarded a functional of the distribution $q$.
This also justifies the usage of the geometric mean as the variational posterior of the common anatomy.
In fact, since the KL divergence is a statistical distance between probability distributions, the $\widetilde{D}_S$ in \cref{eq:jensen_inequality} serves as an \emph{intrinsic distance} between the structural representations of the common anatomy and each observed image.
Therefore, since the geometric mean is the distribution that minimises this intrinsic distance, the optimisation of $\widetilde{D}_S^*\triangleq \widetilde{D}_S[q^*]$ \emph{w.r.t.} $q_j^{\diamond}$'s will then seek to force the single-view structural representations $q_j^{\diamond}(\bm{z}^l\mid u_j,\bm{v}_j;{\bm\psi_j})$'s (which are conditioned on the transformation variables) to be identical, thus driving the registration process.
Section A of the Supplementary Material provides more details for the above reasoning.

\subsubsection{Structural Representations for Variable Group Sizes}
In our previous work \cite{conference/ipmi/wang2023}, we have used Gaussian distributions to parameterize the structural representations.
That is, we assume $q_j^{\diamond}(\bm{z}\mid u_j,\bm{v}_j;{\bm{\psi}_j})=\mathcal{N}(\bm{z};\bm{\mu}_{\bm{z},j}^{\diamond},\bm{\Sigma}_{\bm{z},j}^{\diamond})$ where $\bm{\Sigma}_{\bm{z},j}^{\diamond}$ are diagonal, and \emph{for notational conciseness we omit the superscript $l$ indicating the level of the latent variables}.
We can calculate that the geometric mean is also a Gaussian distribution, \emph{i.e.}, $q^*(\bm{z}\mid\bm{u},\bm{v};{\bm{\psi}})=\mathcal{N}(\bm{z};\bm{\mu}_{\bm{z}}^*,\bm{\Sigma}_{\bm{z}}^*)$, where 
\begin{equation}
  \bm{\Sigma}_{\bm{z}}^*=N\left[\sum_{j=1}^{N}\bm{\Sigma}_{\bm{z},j}^{\diamond\,-1}\right]^{-1},
  \quad 
  \bm{\mu}_{\bm{z}}^*=\frac{\bm{\Sigma}_{\bm{z}}^*}{N}\sum_{j=1}^{N}\bm{\Sigma}_{\bm{z},j}^{\diamond\,-1}\bm{\mu}_{\bm{z},j}^{\diamond}.
\end{equation}
Therefore, the intrinsic distance has a closed-form expression, \emph{i.e.},
\begin{equation*}
  \begin{aligned}
    &\kldiv{q^*(\bm{z}\mid\bm{u},\bm{v};{\bm{\psi}})}{q_j^{\diamond}(\bm{z}\mid u_j,\bm{v}_j;{\bm{\psi}_j})} \\
  =\ & \frac{1}{2}\Bigg[\log\frac{\abs{\bm{\Sigma}_{\bm{z},j}^{\diamond}}}{\abs{\bm{\Sigma}_{\bm{z}}^*}}+\operatorname{tr}\left(\bm{\Sigma}_{\bm{z},j}^{\diamond\,-1}\bm{\Sigma}_{\bm{z}}^*\right) \\
  &\quad\; +(\bm{\mu}_{\bm{z},j}^{\diamond}-\bm{\mu}_{\bm{z}}^*)^{\intercal}\bm{\Sigma}_{\bm{z},j}^{\diamond\,-1}(\bm{\mu}_{\bm{z},j}^{\diamond}-\bm{\mu}_{\bm{z}}^*)\Bigg] + \text{const.}
  \end{aligned}
\end{equation*}
where the quadratic term is essentially a Mahalanobis distance between the structural representations.

Here, we extend the framework using categorical distribution to improve the interpretability of the structural representations.
Particularly, we propose to model them by independent categorical latent variables, \emph{i.e.},
$q_j^{\diamond}(\bm{z}\mid u_j,\bm{v}_j;{\bm{\psi}_j})=\operatorname{Cat}(\bm{z};\bm{\pi}_{j,1}^{\diamond},\dots,\bm{\pi}_{j,K}^{\diamond})$, where $\bm{\pi}_{j,k}^{\diamond}=(\pi_{\bm{\omega},j,k}^{\diamond})_{\bm{\omega}\in\Omega} \in[0,1]^{\abs{\Omega}}$ and $\sum_{k=1}^{K}\bm{\pi}_{j,k}^{\diamond}=\bm{1}\in\mathds{R}^{\abs{\Omega}}$.
Thus, the geometric mean becomes another categorical distribution 
\begin{equation}
  \begin{aligned}
    q^*(\bm{z}\mid\bm{u},\bm{v};{\bm{\psi}}) &= \operatorname{Cat}(\bm{z};\bm{\pi}_1^*,\dots,\bm{\pi}_K^*) = \prod_{\bm{\omega}\in\Omega}\prod_{k=1}^K (\pi_{\bm{\omega},j,k}^*)^{z_{\bm{\omega},k}},
  \end{aligned}
\end{equation}
with 
\begin{equation*}
  \bm{\pi}_k^* = \frac{\left[\prod_{j=1}^{N}\bm{\pi}_{j,k}^{\diamond}\right]^{\nicefrac{1}{N}}}{\sum_{k=1}^{K}\left[\prod_{j=1}^{N}\bm{\pi}_{j,k}^{\diamond}\right]^{\nicefrac{1}{N}}}\in [0,1]^{\abs{\Omega}},
\end{equation*}
and the intrinsic distance takes the form 
\begin{equation}
  \kldiv{q^*(\bm{z}\mid\bm{u},\bm{v};{\bm{\psi}})}{q_j^{\diamond}(\bm{z}\mid u_j,\bm{v}_j;{\bm{\psi}_j})} = \sum_{k=1}^{K}\bm{\pi}_k^*\log\frac{\bm{\pi}_k^*}{\bm{\pi}_{j,k}^{\diamond}}.
\end{equation}
\emph{Note that this intrinsic distance can be applied to image groups with variable group sizes, 
as the geometric mean can be computed from an arbitrary number of structural representations.
}

\subsubsection{Categorical Reparameterization Using Gumbel-Rao}
Unfortunately, the Gumbel-Max trick \cite{journal/tpami/huijben2022} for sampling discrete random variables is not differentiable \emph{w.r.t.} its parameterization, making it unsuitable for stochastic gradient estimators.
Concurrent works \cite{conference/iclr/jang2017,conference/iclr/maddison2017} have then introduced the Gumbel-Softmax (GS) distribution as a continuous relaxation of the discrete categorical variable, which admits a biased reparameterization gradient estimator.
Here, we use a variant of the GS estimator, namely the Gumbel-Rao (GR) estimator \cite{conference/iclr/paulus2021}, to further reduce the variance in gradient estimation via Rao-Blackwellisation.

Specifically, for objective functions like ELBO, the gradient \emph{w.r.t.} the distribution parameters of an expectation over a function $f(\bm{z})$ must be computed, namely $\nabla_{\bm{\psi}}\mathbb{E}_{q(\bm{z};{\bm\psi})}[f(\bm{z})]$, where $f(\bm{z})$ can be the likelihood function of the observed variables.
To overcome the challenge of gradient computation with discrete stochasticity, the Straight-Through Gumbel-Softmax (ST-GS) estimator \cite{conference/iclr/jang2017,conference/iclr/paulus2021} uses a continuous relaxation, giving rise to a biased Monte-Carlo gradient estimator of the form 
\begin{equation}
  \nabla_{\text{ST-GS}}^{\text{MC}}\triangleq\frac{\partial f(\bm{z})}{\partial \bm{z}}\frac{\operatorname{d}\operatorname{softmax}_{\tau}(\bm{g}+\log\bm{\pi}(\bm{\psi}))}{\operatorname{d}\bm{\psi}},
\end{equation}
where the forward pass in $f(\cdot)$ is computed using the non-relaxed discrete samples.
Based on the ST-GS, the GR estimator takes the form as
\begin{equation}
  \nabla_{\text{GR}}^{\text{MC}}\triangleq \mathbb{E}\left[\nabla_{\text{ST-GS}}^{\text{MC}}\mid\bm{z}\right]
  \approx\frac{\partial f(\bm{z})}{\partial \bm{z}}\left[\frac{1}{S}\sum_{s=1}^S \frac{\operatorname{d}\operatorname{softmax}_{\tau}(\bm{G}^s(\bm{\psi}))}{\operatorname{d}\bm{\psi}}\right], 
\end{equation}
where $\bm{G}^s\stackrel{\text{i.i.d.}}{\sim}\bm{g}+\log\bm{\pi}\mid\bm{z}$ for $s=1,\dots,S$, and $\bm{g}$ is a random vector with its entries \emph{i.i.d.} sampled from the Gumbel distribution.
More details on the Gumbel-Rao estimator can be found in Section B of the Supplementary Material.

\subsection{Spatial Regularization for Diffeomorphisms}
To impose spatial smoothness of the velocity fields, we define its prior distribution by $p(\bm{v}_j^l)=\mathcal{N}(\bm{v};\bm{0},\bm{\Lambda_v}^{-1})$, where the precision matrix $\bm{\Lambda_v}$ is given by the scaled Laplacian matrix of a neighbourhood graph on the voxel grid \cite{journal/media/dalca2019}, \emph{i.e.}, $\bm{\Lambda_v}=\lambda\bm{L_v}=\lambda(\bm{D_v}-\bm{A_v})$, with $\bm{D_v}$ and $\bm{A_v}$ the degree and adjacency matrices, respectively.
On the other hand, the variational posterior of the velocity fields is given by a mean-field Gaussian distribution, \emph{i.e.}, $q(\bm{v}_j^l\mid\bm{u},\bm{v}^{<l};{\bm{\psi}})=\mathcal{N}(\bm{v};\bm{\mu}_{\bm{v},j}^l,\bm{\Sigma}_{\bm{v},j}^l)$ where $\bm{\Sigma}_{\bm{v},j}^l$ is diagonal.
Therefore, the KL divergence for the velocity fields has an explicit formula
\begin{equation}
\begin{aligned}
  &\kldiv{q(\bm{v}_j^l\mid\bm{u},\bm{v}^{<l};{\bm\psi})}{p(\bm{v}_j^l)} \\ 
  =\ & \frac{1}{2}\left[\operatorname{tr}(\lambda\bm{D_v}\bm{\Sigma}_{\bm{v},j}^l-\log\bm{\Sigma}_{\bm{v},j}^l)+\frac{\lambda}{2}\sum_{r}\sum_{q\in\mathcal{N}(r)}(\bm{\mu}_{\bm{v},j}^l[r]-\bm{\mu}_{\bm{v},j}^l[q])^2\right] \\
  &+ \operatorname{const.}
\end{aligned}
\end{equation}
where $\mathcal{N}(r)$ are the neighbours of voxel $r$.
Thus, the quadratic term over $\bm{\mu}$ enforces the velocity fields to be spatially smooth, encouraging diffeomorphic transformations.

\section{An Interpretable Registration Architecture via Bayesian Disentanglement Learning}\label{sec:bayes_network}
\begin{figure}
  \centering
  \includegraphics[width=\linewidth]{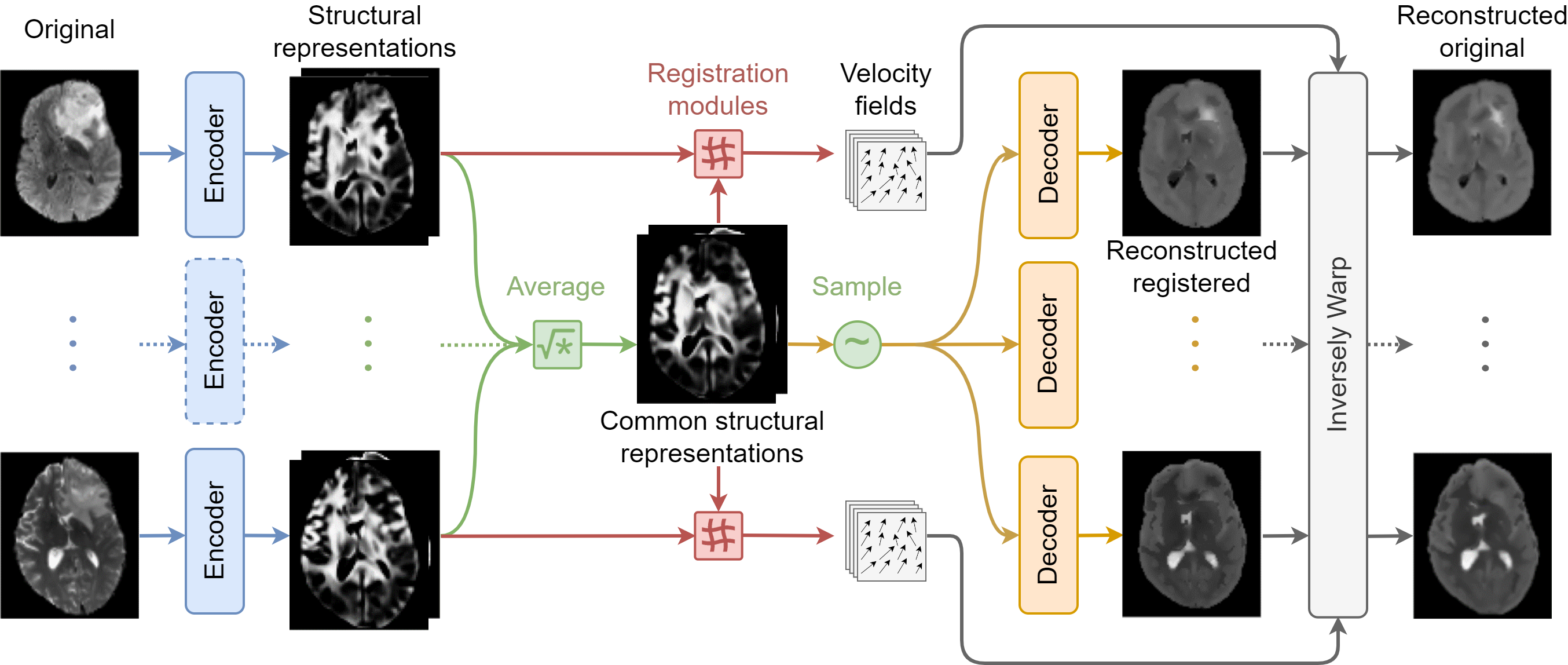}
  \caption{An overview of the proposed interpretable groupwise registration architecture via Bayesian disentanglement learning.
  }
  \label{fig:network_outline}
\end{figure}

\begin{figure*}
  \centering
  \includegraphics[width=\textwidth]{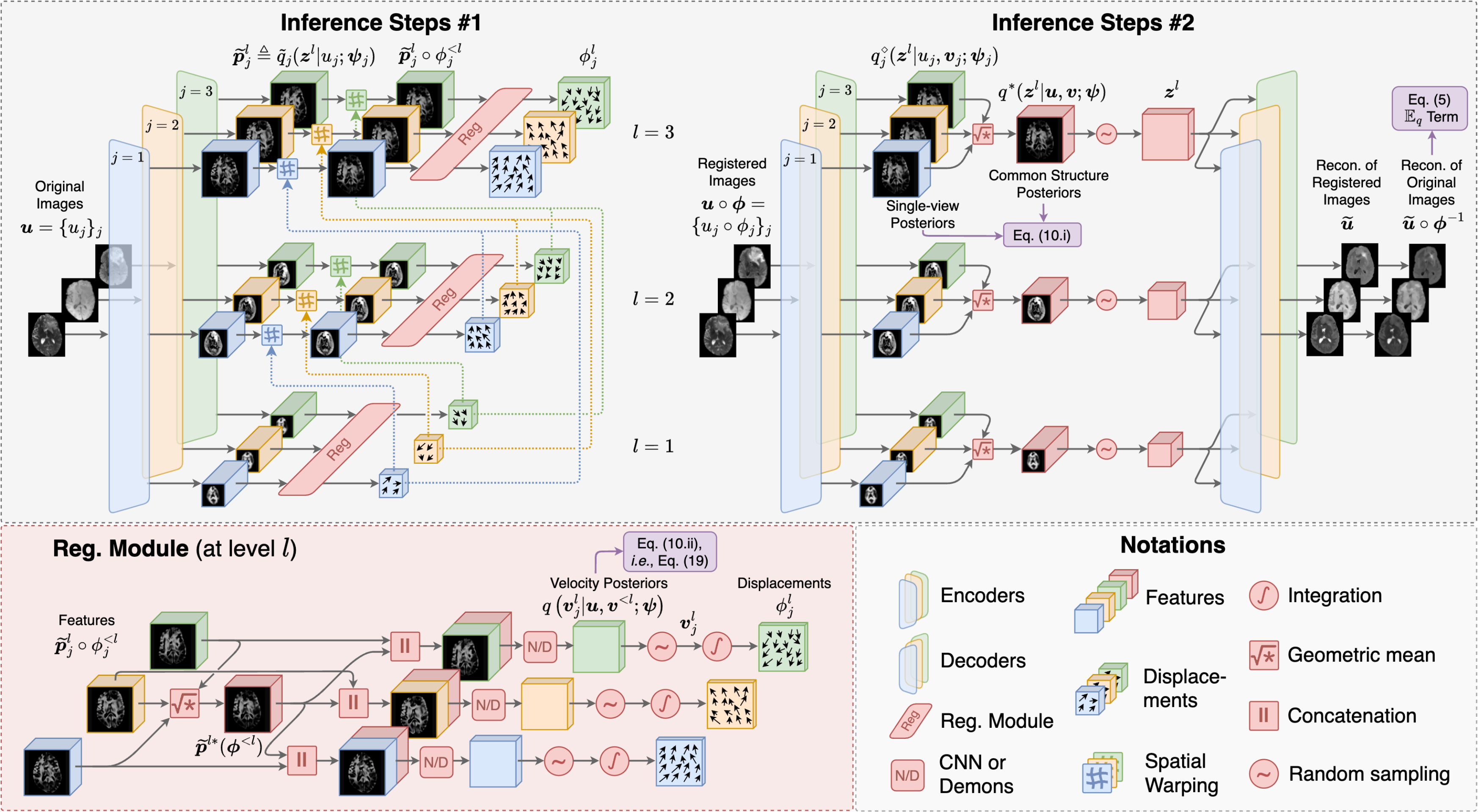}
  \caption{The network architecture for the proposed Bayesian groupwise registration, composed of the encoders that extract categorical structural representation maps, the registration modules that calculate multi-scale velocity fields, and the decoders that reconstruct the original images based on the common structural representations. Without loss of generality, the illustration is with $L=3$ levels and $N=3$ images to co-register.
  Note that inference steps \#2 are performed only in the training stage, while in the test stage the encoder is only fed with the original image group to predict groupwise registration. The purple boxes indicate the calculation of related terms in the ELBO.
   }
  \label{fig:network_structure}
\end{figure*}

Most learning-based registration methods in the recent literature are based on an end-to-end training pipeline, where a black-box neural network directly predicts the spatial transformation by optimising some image-level similarity measures \cite{journal/media/de2019,journal/tmi/balakrishnan2019,journal/media/dalca2019,journal/media/kang2022,journal/media/chen2022}.
However, not only do black-box end-to-end networks lack interpretability, learning directly the complex mapping from multi-modal images to their spatial correspondence disregards the underlying structural relationship and may be prone to generalisation issues \cite{journal/media/gao2013}.

Bayesian deep learning \cite{journal/acm/wang2020}, however, could be a remedy.
It unifies probabilistic graphical models (PGM) with deep learning, and integrates the inference and perception tasks, enabling them to benefit from each other.
Particularly in our context, the \emph{perception component} includes the encoders that extract single-view posteriors from individual modalities, while the \emph{task-specific component} comprises the registration modules and reconstruction decoders that learn jointly to infer the spatial correspondence among the input images.

\cref{fig:network_outline} presents a schematic overview of the proposed interpretable registration architecture, specially designed based on the optimisation procedure for Bayesian inference outlined in the previous sections and \cref{fig:graph}.
The encoders first disentangle the input multi-modal images into structural representation maps with categorical distribution. 
Then an average representation of the common anatomy is computed within the latent space, so that spatial correspondence between the observed images and the common anatomy is predicted from these structural representations using the Diffeomorphic Demons algorithm.
The disentanglement of geometric and anatomical variations is facilitated by inversely simulating the image generative process with spatially equivariant decoders, \emph{i.e.}, reconstructing the original images from the common anatomy representation.
\cref{fig:network_structure} depicts the entire network architecture, composed of the encoders, the registration modules and the decoders.
The following subsections will detail the construction of these modules.

\subsection{Inference of Structural Representations}
In the inference steps \#1 (ref. \cref{fig:inference_steps1}), given the original misaligned image group $\bm{u}$, the encoders first estimate the multi-level categorical distributions $\widetilde{q}_j(\bm{z}^l\mid u_j;{\bm{\psi}_j})$ for each original image $u_j$, where $l$ indicates the spatial resolution of the representation.
Note that $\widetilde{q}_j(\bm{z}^l\mid u_j;{\bm{\psi}_j})\triangleq\operatorname{Cat}(\bm{z};\widetilde{\bm{p}}_{j,1}^l,\dots,\widetilde{\bm{p}}_{j,K}^l)$ differs from $q_j^{\diamond}(\bm{z}^l\mid u_j,\bm{v}_j;{\bm{\psi}_j})=\operatorname{Cat}(\bm{z};\bm{\pi}_{j,1}^{\diamond\, l},\dots,\bm{\pi}_{j,K}^{\diamond\,l})$ regarding the network input.
The former takes $\bm{u}=\{u_j\}_{j=1}^N$ as input, while the latter takes $\bm{u}\circ\bm{\phi}=\{u_j\circ\phi_j\}_{j=1}^N$.
The architecture of the encoders is adapted from the Attention U-Net \cite{conference/midl/oktay2018},  which enhances the vanilla U-Net \cite{conference/miccai/ronneberger2015} with additive attention gates.
Moreover, domain-specific batch normalisation \cite{conference/cvpr/chang2019} is utilised in the contracting path of the encoders to cancel out modality-specific appearance variations, while all convolutional layers are shared across modalities, embedding multi-modal images into modality-invariant representations.

\subsection{Inference of Velocity Fields}
Based on the structural representation maps extracted from the original images by the encoders, the registration modules seek to estimate the transformations that spatially register the image group.
This is realised in a coarse-to-fine approach, starting from prediction of the lowest-resolution velocity fields (ref. the red arrows in \cref{fig:inference_steps1}).

Particularly, denoting $\widetilde{\bm{p}}_j^l\triangleq\widetilde{q}_j(\bm{z}^l\mid u_j;{\bm{\psi}_j})$, the registration module at level $l$ takes as input the partially registered single-view structural probability maps $\widetilde{\bm{p}}_j^l\circ\phi_j^{<l}$ and their geometric mean for the common anatomy $\widetilde{\bm{p}}^{l*}(\bm{\phi}^{<l})$,
based on which the velocity field distribution $q(\bm{v}_j^l\mid\bm{u},\bm{v}^{<l};{\bm{\psi}})$ is predicted.
Note that the spatial transformations $\bm{\phi}^{<l}\triangleq\{\phi_j^{<l}\}_{j=1}^N$ are integrated by $\phi_j^{<l}=\exp(\bm{v}_j^1+\dots+\bm{v}_j^{l-1})$, with $\bm{v}_j^l$ sampled from $q(\bm{v}_j^l\mid\bm{u},\bm{v}^{<l};{\bm{\psi}})=\mathcal{N}(\bm{v};\bm{\mu}_{\bm{v},j}^l,\bm{\Sigma}_{\bm{v},j}^l)$.
Finally, the total transformations that register the observed image group are given by $\phi_j=\exp(\bm{v}_j^+)$ with $\bm{v}_j^+\triangleq\sum_{l=1}^L\bm{v}_j^l$, where $\bm{v}_j^l$ is a random sample during training while $\bm{v}_j^l=\bm{\mu}_{\bm{v},j}^l$ for testing.
Note that the velocity fields are upsampled to the finest resolution before summation.

As the structural representation maps $\widetilde{\bm{p}}_j^l\circ\phi_j^{<l}$ encode the structural features of the partially registered input images, we can in fact compute the mean $\bm{\mu}_{\bm{v},j}^l$ of the variational posterior $q(\bm{v}_j^l\mid\bm{u},\bm{v}^{<l};{\bm{\psi}})$ directly from them based on the Diffeomorphic Demons algorithm 
\cite{journal/media/thirion1998,journal/ni/vercauteren2009}.
Particularly, for each grid location $\bm{\omega}$ at level $l$, we define $\bm{\mu}_{\bm{v},j}^l$ by
\begin{equation*}
  \bm{\mu}_{\bm{v},j}^l(\bm{\omega}) \triangleq \alpha\cdot {\widetilde{\bm{\mu}}_{\bm{v},j}^l(\bm{\omega})}\Big/\max_{\bm{\omega}\in\Omega^l}{\norm{\widetilde{\bm{\mu}}_{\bm{v},j}^l(\bm{\omega})}_2},
\end{equation*}
where
\begin{equation*}
\widetilde{\bm{\mu}}_{\bm{v},j}^l(\bm{\omega}) \triangleq - \Big[\bm{J}_{\bm{\varphi}_j^l}^{\intercal}(\bm{\omega})\bm{J}_{\bm{\varphi}_j^l}(\bm{\omega})+\sigma_{\bm{\varphi}_j^l}^2(\bm{\omega})\bm{I}_d\Big]^{-1}\!\bm{J}_{\bm{\varphi}_j^l}^{\intercal}(\bm{\omega})\bm{\varphi}_{j,\bm{\omega}}^l(\bm{0}) 
\end{equation*}
is the \emph{symmetric Demons force}, and $\alpha \in (0, \alpha_0)$ is a learnable parameter controlling the magnitude of the velocity fields.
Specifically, 
$$\bm{\varphi}_{j,\bm{\omega}}^l({\bm{u}}) \triangleq \widetilde{\bm{p}}^{l*}(\bm{\omega})-\widetilde{\bm{p}}_j^l\circ\phi_j^{<l}\circ(\operatorname{id}+\bm{u})(\bm{\omega})$$ 
is the feature difference given a displacement $\bm{u}\in\mathds{R}^d$,
\begin{equation*}
    \bm{J}_{\bm{\varphi}_j^l}(\bm{\omega})\triangleq -\frac{1}{2}\left[\nabla_{\bm{\omega}}^{\intercal}\,\widetilde{\bm{p}}^{l*}(\bm{\omega})+\nabla_{\bm{\omega}}^{\intercal}\,\widetilde{\bm{p}}_j^l\circ\phi_j^{<l}(\bm{\omega})\right]
\end{equation*}
is the symmetrized Jacobian matrix at the origin, and 
\begin{equation*}
  \sigma_{\bm{\varphi}_j^l}^2 = \frac{1}{\abs{\Omega^l}-1}\sum_{\bm{\omega}\in\Omega^l}\norm{\bm{\varphi}_{j,\bm{\omega}}^l({\bm{\bm{0}}})-\widebar{\bm{\varphi}}_j^l(\bm{0})}_2^2
\end{equation*}
is the sample variance of the feature difference norm.
Detailed derivation can be found in Section C of the Supplementary Material, where the Demons algorithm is established from a mathematically interpretable optimisation framework.
The variance $\bm{\Sigma}_{\bm{v},j}^l$ of the variational posterior is predicted by a convolutional block from the concatenation of probability maps $[\widetilde{\bm{p}}_j^l\circ\phi_j^{<l}; \widetilde{\bm{p}}^{l*}]$, where the network parameters are shared among different modalities.

In addition, to avoid the degeneracy that deforms the observed images to an arbitrary coordinate space, we constrain the sum of the final velocity fields to be zero by subtracting their average \cite{journal/ni/avants2004}, \emph{i.e.}, $\bm{v}_j^+ \leftarrow \bm{v}_j^+ - \frac{1}{N}\sum_{j=1}^N \bm{v}_j^+.$

\subsection{Bayesian Disentangled Representation Learning}
Given the perceptual modules capturing intrinsic structural representations from multi-modal images and their spatial correspondence, we are going to discuss how such mapping functions are learnt in the first place.
This is achieved by optimising the network parameters $\bm{\psi}$ and $\bm{\theta}$ in a closed-loop self-reconstruction process that maximises ELBO.
The single-view structural distributions for estimating the common anatomy are defined by
\begin{equation*}
  q_j^{\diamond}(\bm{z}^l\mid u_j,\bm{v}_j;{\bm{\psi}_j})\triangleq\widetilde{q}(\bm{z}^l\mid u_j\circ\phi_j;{\bm{\psi}_j}),
\end{equation*}
which is obtained by feeding the transformed images $u_j\circ\phi_j$ into the encoders in the inference steps \#2 (ref. \cref{fig:inference_steps2}), effectively reverting the image generative process in \cref{eq:img_generate}.

Based on the common structural representations sampled from the geometric mean $q^*(\bm{z}^l\mid \bm{u},\bm{v};{\bm{\psi}})$, the decoders reconstruct the original images in two steps (ref. \cref{fig:generation_steps}).
The images corresponding to the common anatomy $\widetilde{u}_j=f(\bm{z};\bm{\theta}_j)$ are decoded first.
Then, estimation of the original images $\widehat{\bm{u}}=\{\widehat{u}_j\}_{j=1}^N$ is obtained by inverse warping, \emph{i.e.}, $\widehat{u}_j=\widetilde{u}_j\circ\phi_j^{-1}$ with $\phi_j^{-1}=\exp(-\bm{v}_j^+)$, simulating \cref{eq:img_generate}.

The network architecture of the decoders is given by multi-level convolutional blocks and linear upsampling.
At level $l$, the output feature maps from upsampling are multiplied by the sampled categorical variables $\bm{z}^l$.
Besides, all convolutional layers are assigned with a kernel size of 1 isotropically, to suppress spatial correlation and to form \emph{spatial equivariance}.
Thus, the decoded image $\widetilde{u}_j$ is in the same spatial orientation as the common anatomy $\bm{z}$, encouraging the registration module to identify the correct velocity fields $\bm{v}_j^+$ for reconstructing $\widehat{u}_j$.
This connection between spatial equivariance and registration identifiability is formally established in Section E of the Supplementary Material.
Furthermore, the batch normalisation layers in the decoders are set as the inverse of the counterpart BN functions along the contracting path of the encoders, \emph{i.e.}, 
\begin{equation*}
  \bm{c}_j^{\text{out}} = \frac{\bm{c}_j^{\text{in}}-\bm{\beta}_{m}}{\bm{\gamma}_{m}+{\varepsilon}}\sqrt{\bm{\sigma}_{m}^2+\varepsilon}+\bm{\mu}_{m},
\end{equation*}
where for each image index $j$ and its corresponding modality $m$, $\bm{c}_j^{\text{in}},\bm{c}_j^{\text{out}}\in\mathds{R}^{B_m\times C\times H\times W\times D}$ are the input and output feature maps, $\bm{\mu}_{m},\bm{\sigma}_{m}\in\mathds{R}^C$ are the batch statistics, and $\bm{\beta}_{m},\bm{\gamma}_{m}\in\mathds{R}^C$ are the affine parameters.
The arithmetic is computed element-wise. 
In fact, such preference for invertible and spatially decomposable decoders may promote learning of the true disentangled representations \cite{conference/aistats/khemakhem2020,conference/neurips/reizinger2022}.
An illustration of the encoder and decoder architecture can be found in Section D of the Supplementary Material.

Finally, the expectation $\mathbb{E}_{q(\bm{z},\bm{v}\mid\bm{u};\bm{\psi})}[\log p(\bm{u}\mid\bm{z},\bm{v};{\bm\theta})]$ is approximated by Monte-Carlo ancestral sampling, where the likelihood is modelled by a Laplace distribution, \emph{i.e.},
\begin{equation}
  \begin{aligned}
    p(\bm{u}\mid\bm{z},\bm{v};\bm{\theta}) 
    &\propto \prod_{j=1}^N\prod_{\bm{\omega}\in\Omega_j} \exp\left\{ -\frac{\abs{u_j(\bm{\omega})-f(\bm{z};\bm{\theta}_j)\circ\phi_j^{-1}(\bm{\omega})}}{b} \right\},
  \end{aligned}
\end{equation}
where the scale parameter $b$ is set to 1 in all experiments.

The proposed framework also encourages learning of disentangled representations formally defined by \citet{journal/arxiv/higgins2018}, which formulates disentangled representations as a reflection on the decomposition of certain symmetry groups acting on the world states (or observations).
This explains why we choose to decompose the latent variables of our model into the common anatomy and the corresponding spatial diffeomorphisms: because they reflect two separate symmetry structures of the observed images, \emph{i.e.}, the ontological transformation changing the \emph{underlying anatomy}, and the diffeomorphic transformation characterising \emph{geometric variation}.
A more thorough discussion of disentangled representations and their identifiability can be found in Section E of the Supplementary Material.

\subsection{Learning, Inference and Scalability}
Learning of our model is performed in an end-to-end fashion. 
All model parameters, including the auto-encoders and the registration modules, are trained simultaneously by stochastic gradient ascent to maximise the ELBO.
In the inference stage, the transformations that register the image group are given by $\widehat{\bm{\phi}}=\{\widehat{\phi}_j\}_{j=1}^N$, where 
$\widehat{\phi}_j = \exp(\widehat{\bm{\mu}}_j^+)$
and $\widehat{\bm{\mu}}_j^+$ is aggregated from the means of the velocity distributions at all levels predicted by the registration modules.

Remarkably, the proposed disentanglement of anatomy and geometry endows our model with \emph{powerful scalability}. Since geometric and arithmetic averaging can be performed on any number (greater than 1) of representations, our model is capable of processing image groups of arbitrary sizes during training and testing. 
Let the number of images in an input group be $N'=\sum_{m=1}^M{N_m}$, where $N_m$ is the number of images of modality $m$. Each image of modality $m$ is processed by the encoder for the corresponding modality to extract structural representations, and then the geometric and arithmetic means are calculated on the representations of all images to estimate the KL divergence and predict deformations for each image. For decoding, the reconstruction from the decoder of modality $m$ is inversely warped for $N_m$ times by the deformations for the $N_m$ images of modality $m$ to reconstruct original images. Thus, our model allows $N_m$'s to be different or even 0, and only requires $N'\geq 2$ (for calculating averages) and at least two modalities are available in each group, \emph{i.e.}, $\sum_m\text{1}(N_m>0)\geq 2$ (for effectively learning modality-invariant representations).

Therefore, our model can handle groups with missing modalities or variable sizes, which are common occurrences in real-world scenarios. Particularly, the scalability of our model is of great value in two aspects: 
\begin{itemize}
    \item During training, we can drop several modalities/images in each training group, thereby reducing the computational cost and/or allowing for learning from less data effectively. Typically, there are two experimental settings: 1) \emph{complete learning}, the standard setting, where each group consists of all modalities, with one image per modality, 2) \emph{partial learning}, where each group only contains two modalities, with one image per modality. Experiments in both settings are discussed in \cref{sec:experiment}. 
    \item During evaluation, our model can register groups with arbitrary sizes, enabling large-scale and variable-size multi-modal groupwise registration, as explored in \cref{sec:scalability}. In contrast, conventional iterative methods optimise similarity metrics with computational complexities that escalate rapidly with group size. Deep learning baselines, on the other hand, often have fixed channel numbers to take as input the concatenation of images \cite{conference/icip/he2020}, necessitating groups to be of uniform size. These limitations significantly restrict their applicability.
\end{itemize}

\section{Experiments and Results}
\label{sec:experiment}
We evaluated our framework on a total of four publicly available datasets, including BraTS-2021 \cite{report/arxiv/baid2021, journal/tmi/menze2014,journal/sd/bakas2017}, MS-CMRSeg \cite{journal/media/zhuang2022}, Learn2Reg Abdominal MR-CT \cite{journal/tmi/hering2022,journal/tbe/xu2016,journal/media/kavur2021}, and the OASIS dataset \cite{journal/jcn/marcus2010}.
The experiments were implemented in PyTorch \cite{report/arxiv/paszke2017} and conducted on $\text{NVIDIA}^\circledR$ $\text{RTX}^\textnormal{TM}$ 3090 GPUs.
The experimental materials, baselines, evaluation metrics, and results are detailed as follows.

\subsection{Materials}
To show that our method applies to various groupwise registration tasks, especially in multi-modal scenarios, we validated it on four different datasets: 
\begin{enumerate}
  \item \textbf{MS-CMRSeg.} 
  This dataset provides multi-sequence cardiac MR images from 45 patients, including LGE (Late Gadolinium Enhanced), bSSFP (balanced-Steady State Free Precession), and T2-weighted scans.
  The images were preprocessed by affine co-registration, slice selection, resampling to $1\times 1\ \text{mm}$ and ROI extraction, giving 39, 15 and 44 slices for training, validation and test, respectively.
  Additional misalignment simulated by random FFDs using isotropic control point spacings of $5/10/20/40\ \text{mm}$ were made to further demonstrate registration.
  The ROI regions were obtained by dilating the foreground mask using a circular filter of 15-pixel radius.
  Evaluation was conducted on the warped manual segmentations of the myocardium, left and right ventricle.
  \emph{The major challenge of this dataset comes from the complex intensity patterns of the cardiac region, which can misguide the registration.}
  \item \textbf{BraTS-2021.} 
  This dataset contains multi\hyp{}parametric MRI scans of glioma, including native T1, T1Gd (post-contrast T1-weighted), T2-weighted, and T2-FLAIR (Fluid Attenuated Inversion Recovery).
  The multi-parametric MRIs of the same patients were co-registered to the same anatomical template, interpolated to the same resolution ($1\times 1\times 1\ \text{mm}$) and skull-stripped.
  We randomly selected 300, 50 and 150 patient cases for training, validation and test, respectively.
  The images were downsampled into $2\times 2\times 2\ \text{mm}$ with volume size of $80\times 96\times 80$.
  To demonstrate registration, four synthetic free-form deformations (FFDs) for each image with isotropic control point spacings of $5/10/15/20\ \text{mm}$ were generated to simulate misalignment.
  \emph{The major challenge of this task stems from the complex intensity patterns around regions of tumor, yielding ambiguous anatomical correspondences among images.}
  \item \textbf{Learn2Reg Abdominal MR-CT.}
  This dataset collects 3D T1-weighted MR and CT abdominal images.
  The data were resampled to $3\times 3 \times 3\ \text{mm}$ and cropped to size of $112\times 96\times 112$.
  Training was performed on the unpaired 40 MR and 50 CT images \cite{journal/media/kavur2021,journal/tbe/xu2016}, while 8 MR-CT pairs (16 images) were used for test \cite{journal/jdi/clark2013}.
  The masks provided by the dataset were used to confine the information used for registration.
  Evaluation was performed on the warped manual segmentations of the liver, spleen, left and right kidneys.
  \emph{The major challenges of this dataset are the large deformation and missing correspondences between the two modalities, as well as the distribution shift between training and test datasets.}
  \item \textbf{OASIS.}
  The OASIS-1 dataset contains 414 T1-weighted 3D MR scans from young, middle-aged, non-demented and demented older adults \cite{journal/jcn/marcus2010}.
  The images were skull-stripped, bias-corrected, and registered into an affinely-aligned, common template space with FreeSurfer \cite{journal/ni/fischl2012,conference/ipmi/hoopes2021}.
  Evaluation was conducted on the warped manual segmentations of the cortex, subcortical grey matter, white matter, and CSF.
  The volumes were resampled $2\times 2\times 2\ \text{mm}$ and cropped to size of $80\times 96\times 96$.
  Besides, 287/40/87 images were randomly selected for training/validation/test.
  \emph{The major challenge of this task comes from the large inter-subject variability of the brain structures.}
\end{enumerate}

\subsection{Compared Methods}
We compared two types of baselines with our models in the experiments. 
The first type features state-of-the-art iterative methods for multi-modal groupwise registration, including accumulated pairwise estimates (APE) \cite{journal/tpami/wachinger2012}, conditional template entropy (CTE) \cite{journal/media/polfliet2018}, and $\mathcal{X}$-CoReg \cite{journal/tpami/luo2022}.
These methods do not need training, but they can be prone to image artefacts and can hardly register large image groups due to excessive computational burdens.
The second type of baselines extends the first type, aiming to learn the groupwise spatial correspondence via a neural network using the aforementioned similarity measures.
To promote a fair comparison, we used Attention U-Net \cite{conference/midl/oktay2018} for the network backbone, the same architecture as the encoder in our proposed framework.
\emph{However, note that these learning-based baselines can only be trained to register images of a fixed group size, and therefore cannot be applied to data of variable group sizes in the test stage.}

For the proposed model, we implemented two variants of the registration module and adopted two training strategies.
Particularly, the registration modules can be a convolutional network as in our previous work \cite{conference/ipmi/wang2023}, or can be replaced by the Demons force proposed in this work.
In addition, we compare the partial and complete learning strategies for our model.
The partial learning strategy trains the network with only random image pairs, while the trained model can be used to register images of complete group sizes. Thus, the four variants of our model are: 
\begin{itemize}
    \item \emph{Ours-PN}: with partial learning and convolutional network based registration modules.
    \item \emph{Ours-CN}: with complete learning and convolutional network-based registration modules.
    \item \emph{Ours-PD}: with partial learning and Demons-based registration modules.
    \item \emph{Ours-CD}: with complete learning and Demons-based registration modules.
\end{itemize}
Here, ``C'' and ``P'' signify Complete and Partial learning strategies, respectively; ``N'' and ``D'' indicate Network- or Demons-based registration modules, respectively. 

\subsection{Implementation Details}
During preprocessing, the intensity range of each image was linearly normalised to $[0,1]$. 
For cardiac and abdominal images, they were further multiplied by the ROI mask to confine the information used by the network to predict registration.
The encoder has an Attention U-Net architecture with $L=5$ levels, and the $l$-th level consists of two Conv-BN-LeakyReLU blocks with kernel size 3 and $16\times 2^{L-l}$ channels. 
The output of each convolutional block in the upsampling path is passed to a Conv layer to produce logits of the categorical structural distribution, with $8\times 2^{L-l}$ channels.
Samples of the geometric mean structural distribution are passed to the decoder to reconstruct the original images.
The decoder is composed of Conv-BN-LeakyReLU-Upsample blocks with kernel size 1 and $8\times 2^{L-l}$ channels.

To facilitate learning, different weights were used for reconstruction loss, intrinsic structural distance, and registration regularisation, corresponding to a likelihood-tempered ELBO \cite{conference/neurips/osawa2019}.
The balancing weights of the loss terms for different training datasets are presented in the Supplementary Material Section G.
The hyperparameter for spatial regularisation was set to $\lambda=10$ as in \cite{journal/media/dalca2019}, and the maximum magnitude of the velocity calculated by the Demons algorithm was set to $\alpha_0^l=10\times 2^{l-L}$ at each level.
The number of samples for the GR estimator was set to $S=3$ in all experiments.
Network optimisation was performed by stochastic gradient descent using the Adam optimiser \cite{conference/iclr/kingma2015}.
Besides, we used a learning rate of $1\times 10^{-3}$ and a batch size of 20 or 2 for the MS-CMRSeg or the other datasets, respectively.

\subsection{Evaluation Metrics}
For each test image group in an experiment, groupwise semantic evaluation metrics can be constructed by averaging the pairwise version over all possible pairs of the propagated segmentation masks. In other words, a metric on the $n$-th group $\{u_i\}_{i=1}^{N_n}$ is calculated as
$$
\operatorname{Eva}_n\triangleq\frac{1}{\binom{N_n}{2}}\sum_{\substack{1\leq i,j\leq N_n \\ i\neq j}} \operatorname{Eva}(y_i\circ \widehat{\phi}_i,y_j\circ\widehat{\phi}_j),
$$
where $\operatorname{Eva}$ is the Dice similarity coefficient (DSC) or average symmetric surface distance (ASSD), $y_i,y_j$ are distorted masks associated with their respective images $u_i,u_j$, and $\widehat{{\phi}}_i,\widehat{{\phi}}_j$ are the corresponding predicted transformations. The masks are used for model evaluation only.

For experiments on the BraTS-2021 dataset, since the ground-truth spatial correspondence is available,  the groupwise warping index (gWI) can be implemented as an additional metric, which measures the root mean squared error for unbiased deformation recovery based on the ground-truth and predicted deformation fields \cite{journal/tpami/luo2022}, \emph{i.e.}, 
\begin{equation}
\operatorname{gWI}_n\triangleq \frac{1}{N_n} \sum_{i=1}^{N_n} \sqrt{\frac{1}{\left|\widehat{\Omega}_i^f\right|} \sum_{\boldsymbol{\omega} \in \widehat{\Omega}_j^f}\left\|\bar{r}_j(\boldsymbol{\omega})\right\|_2^2},
\end{equation}
where $\widehat{\Omega}_j^f \triangleq\left\{\boldsymbol{\omega} \in \Omega \mid \phi_j^{\dagger} \circ \widehat{\phi}_j(\boldsymbol{\omega}) \in F\right\}$ with $F$ the foreground region of the initial phantom before distortion and $\phi_j^{\dagger}$ the ground-truth deformation for $u_j$, and 
\begin{equation}
\bar{r}_j(\boldsymbol{\omega}) \triangleq r_j(\boldsymbol{\omega})-\frac{1}{N_n} \sum_{j^{\prime}=1}^{N_n} r_{j^{\prime}}(\boldsymbol{\omega}), \quad r_j(\boldsymbol{\omega}) \triangleq \phi_j^{\dagger} \circ \widehat{\phi}_j(\boldsymbol{\omega})-\boldsymbol{\omega}.
\end{equation}
The gWI provides a fine-grained measurement for voxel-wise co-registration accuracies, whereas the semantic evaluation metrics attend to high-level structural concurrence of distinct tissues.
Besides, since all data have isotropic physical spacing, we measured ASSD and gWI in voxel units for convenience.

\subsection{Multi-Modal \& Intersubject Groupwise Registration}
\subsubsection{Experimental Design}
This experiment aims to showcase the performance of our model under various data conditions using the four datasets. 
These conditions include multi-modal data (all datasets except OASIS), heterogeneous data such as BraTS-2021 of brain tumours and MS-CMRSeg of myocardium infarction, and intersubject groupwise registration (OASIS). 
In addition, our objective was to demonstrate the efficiency of the partial learning strategy for our model. To this end, we examined four variants of our model, namely \emph{Ours-CN}, \emph{Ours-CD}, \emph{Ours-PN}, and \emph{Ours-PD}. 

In this experiment, the test groups of each dataset maintained their original sizes, which was equal to the training group sizes used for both the baselines and complete learning of our model. However, in the case of partial learning for our model, the training group size was consistently set to 2, achieved by randomly selecting images before the experiment, rather than in each training iteration. 
This resulted in a reduced number of training images, posing additional challenges for model generalisability. Note that our model does not have a partial learning version for the Learn2Reg dataset (as each original group is an MR-CT pair) and the OASIS dataset (as it is mono-modal).

\subsubsection{Quantitative Results of Registration Accuracy}

\begin{table*}[t]
  \scriptsize
  \centering
  \caption{Evaluation metrics of the groupwise registration results on the MS-CMRSeg, BraTS-2021, Learn2Reg, and OASIS datasets. The top and second-best results for each dataset are highlighted in bold and underline, respectively. 
  The ASSDs were measured in voxel units. The parameter counts are expressed in millions, and for our model there are test and training (in parentheses) values. 
  The $p$-values were computed using the gWIs or DSCs (for the BraTS-2021 and other datasets, respectively) between the method \emph{Ours-CD} and the others with a two-sided paired $t$-test. 
  $\abs{\det J_{\phi}\leq 0}$ represents the proportion (in \%) of voxels with negative Jacobian determinants in the predicted displacements, where the values were first calculated for the foreground region of each registered image and then averaged over all images among all test groups.
  }
  \begin{tabular}{C{1.5cm}@{\hspace{2ex}}C{1.4cm}@{\hspace{2ex}}C{1.4cm}@{\hspace{3ex}}C{1.4cm}@{\hspace{3ex}}C{1.4cm}c@{\hspace{0cm}}C{2.1cm}@{\hspace{3ex}}C{1.4cm}@{\hspace{3ex}}C{1.4cm}@{\hspace{3ex}}C{1.4cm}@{\hspace{3ex}}C{1.4cm}}
    \toprule
    \multirow{2}{*}{Method} & \multicolumn{4}{c}{MS-CMRSeg} & & \multicolumn{5}{c}{BraTS-2021}\\
    \cmidrule{2-5}\cmidrule{7-11} 
    & DSC $\uparrow$ & ASSD $\downarrow$ & \#Params & $p$-value & &gWI $\downarrow$ & DSC $\uparrow$ & ASSD $\downarrow$ & \#Params & $p$-value\\
    \midrule
    None & $.722\pm .104$ & $3.86\pm1.43$  & N/A & $<10^{-10}$ && $1.430\pm 0.644$ & 
$.610\pm .150$ & $0.93\pm 0.42$ &N/A& $<10^{-10}$ \\
    \hdashline\noalign{\vskip 0.5ex}
    APE \cite{journal/tpami/wachinger2012} & 
    $.746 \pm .101$ & $3.48\pm1.36$  & 0.154 & $<10^{-10}$ && $0.629\pm 0.141$ & $\underline{.707\pm .069}$ & $\bm{0.62\pm0.12}$ & 7.37 & $1.3\times 10^{-1}$ \\
    CTE \cite{journal/media/polfliet2018}  & 
    $.766 \pm .100$ & $3.15\pm1.32$  & 0.154 & $<10^{-10}$ && $1.223\pm 0.255$ & $.500\pm .102$ & $1.27\pm0.45$ & 7.37 & $<10^{-10}$ \\
    $\mathcal{X}$-CoReg \cite{journal/tpami/luo2022} & 
    $.757 \pm .107$ & $3.31\pm1.40$  & 0.154 & $<10^{-10}$ && $0.728\pm 0.196$ & $.698\pm .086$ & $0.65\pm0.17$ & 7.37 & $<10^{-10}$ \\
    \hdashline\noalign{\vskip 0.5ex}
    APE-Att & 
    $.799\pm .061$ & $2.78\pm0.72$  & 8.04 & $<10^{-10}$ && $0.757\pm 0.153$ & 
$.690\pm .078$ & $0.67\pm 0.14$ & 22.95 & $<10^{-10}$ \\
    CTE-Att & 
    $.820\pm .066$ & $2.45\pm0.74$  & 8.04 & $<10^{-10}$ && $0.916\pm 0.210$ & 
$.661\pm .094$ &$0.75\pm 0.20$& 22.95& $<10^{-10}$ \\
    \hdashline\noalign{\vskip 0.5ex}
    Ours-PN & $.803\pm .062$ & $2.68\pm 0.70$ & 5.06 (5.22) & $<10^{-10}$  && $\underline{0.608\pm 0.115}$ & $.670\pm .071$ & $0.69\pm 0.13$ & 14.91 (15.10) & $<10^{-10}$ \\ 
    Ours-CN & $\bm{.842\pm .051}$ & $\bm{2.17\pm 0.58}$ & 5.06 (5.22) & $<10^{-10}$&& $\bm{0.538\pm 0.108}$ & $\bm{.709\pm .063}$ & $\underline{0.62\pm 0.15}$ & 14.91 (15.10) & $<10^{-10}$ \\ 
    Ours-PD & $.799\pm .060$ & $2.74\pm 0.65$  & 1.91 (2.85) & $<10^{-10}$  && $0.720\pm 0.151$ & $.670\pm .072$ & $0.70\pm 0.14$ & 5.44 (15.06) & $<10^{-10}$ \\ 
    Ours-CD & $\underline{.836\pm .043}$ & $\underline{2.21\pm 0.47}$  & 1.91 (2.85) & N/A  && $0.663\pm 0.129$ & $.669\pm .074$ & $0.70\pm 0.15$ & 5.44 (15.06) & N/A \\ 

    \midrule
    \multirow{2}{*}{Method} & \multicolumn{4}{c}{Learn2Reg} & & \multicolumn{5}{c}{OASIS}\\
    \cmidrule{2-5}\cmidrule{7-11} 
      & DSC $\uparrow$ & ASSD $\downarrow$ & \#Params & $p$-value & & $\abs{\det J_{\phi}\leq 0}$ (\%) $\downarrow$ & DSC $\uparrow$ & ASSD $\downarrow$ & \#Params & $p$-value\\
      \midrule
        None & $.415\pm .160$ & $5.27\pm3.01$  & N/A & $2.4\times 10^{-4}$ &&N/A& $.614\pm .027$ & $0.96\pm 0.10$  & N/A & $< 10^{-10}$\\
    \hdashline\noalign{\vskip 0.5ex}
    APE \cite{journal/tpami/wachinger2012} & $.554\pm.384$ & $6.14\pm7.66$  & 10.52 & $9.7\times 10^{-2}$ &&$.5297\pm.0814$&$.777\pm .030$ & $0.59\pm 0.09$ & 8.85 & $7.9\times 10^{-8}$ \\
    CTE \cite{journal/media/polfliet2018}  & $.514\pm.373$ & $6.85\pm7.89$  & 10.52 & $5.1\times 10^{-2}$ &&$.2291\pm.0447$&$.746\pm .031$ & $0.62\pm 0.09$ & 8.85 & $< 10^{-10}$\\
    $\mathcal{X}$-CoReg \cite{journal/tpami/luo2022} & $.664\pm.361$ & $6.86\pm14.1$  & 10.52 & $3.0\times 10^{-1}$ &&$.1330\pm.0404$&$.777\pm .017$ & $0.54\pm 0.05$ & 8.85 & $< 10^{-10}$\\
    \hdashline\noalign{\vskip 0.5ex}
    APE-Att & $.687\pm .092$ & $2.09\pm0.57$  & 22.95 & $2.9\times 10^{-3}$ &&$.0479\pm.0130$&$.777\pm .018$ & $0.57\pm 0.05$ & 22.95 & $< 10^{-10}$\\
    CTE-Att & $.679\pm .069$ & $2.17\pm0.56$  & 22.95 & $1.4\times 10^{-2}$ &&$.0552\pm.0154$&$.773\pm .039$ & $0.59\pm 0.13$ & 22.95 & $4.2\times 10^{-6}$\\
    \hdashline\noalign{\vskip 0.5ex}
    Ours-CN & $\bm{.803\pm .084}$ & $\bm{1.32\pm 0.57}$ & 14.90 (15.10) & $9.3\times 10^{-2}$  &&$.1746\pm.0305$&$\bm{.803\pm .007}$ & $\bm{0.49\pm 0.02}$ & 13.16 (13.26) & $< 10^{-10}$\\ 
    Ours-CD & $\underline{.793\pm .086}$ & $\underline{1.39\pm 0.60}$ & 5.43 (15.06) & N/A  &&$\bm{.0066\pm.0029}$&$\underline{.791\pm .008}$ & $\underline{0.51\pm 0.02}$ & 3.69 (13.22) & N/A\\
    \bottomrule
  \end{tabular}
  \label{tab:four_datasets_results}
\end{table*}

\cref{tab:four_datasets_results} summarises the means and standard deviations of the evaluation metrics (DSC, ASSD, and gWI) on the four datasets before and after groupwise registration by different methods. 
The effectiveness of iterative and deep learning baselines varies across datasets. 
For example, all iterative approaches show inferior performance on MS-CMRSeg and Learn2Reg compared to deep learning baselines. 
Particularly on Learn2Reg, the evaluation reveals that APE and CTE yield marginal improvements in mean DSC, and all iterative methods even result in worse mean ASSDs after registration. 
In addition, deep learning baselines performed worse than APE and $\mathcal{X}$-CoReg on BraTS-2021, and all baselines (except for CTE) exhibit similar performance on OASIS. 
These results show the instability of both types of baselines under complex image conditions. 
Besides, their performance was achieved at the expense of either time-consuming iterative optimisation for each test group, or a substantial increase in parameter count for deep learning baselines.

In contrast, our model with complete learning attained notable improvements in registration accuracy, consistently outperforming all other methods across all datasets. For example, the two variants of our model demonstrate an over 15\% enhancement in DSC and an over 33\% reduction in ASSD on Learn2Reg, compared with the best baseline. On BraTS-2021, our model also surpasses all baselines, especially in gWI. It is important to highlight that gWI assesses the registration accuracy for all spatial locations within the images, providing a comprehensive evaluation, while DSC and ASSD are based only on tumour region. Therefore, the results showcase the superiority and versatility of our model in aligning various brain structures with normal and pathological tissues and diverse imaging patterns.

Remarkably, the proposed method also exhibits a significant reduction in parameters for both training and test. 
For instance, compared with the deep learning baselines, our models with CNN or Demons based registration modules achieve reductions of 37\% or 76\% (for test) on MS-CMRSeg, respectively. Notably, the Demons-based models, namely
\emph{Ours-PD} and \emph{Ours-CD}, require even fewer parameters than iterative methods on the 3D datasets during test. This demonstrates the parameter efficiency of Demons-based variants of the proposed model. 

In addition, the partial learning strategy allows our model to reduce the training image group size to 2, while maintaining performance equal to or surpassing that of the baselines. This reduction results in a 33\%/50\%/50\% (for the MS-CMRSeg/BraTS-2021/OASIS datasets) decrease in training image resources and approximately the same reduction ratio in memory footprint during training. The results illustrate that our model, trained with partial learning and reduced computational costs on a limited number of images, achieves a level of generalisability comparable to the baselines that leverage the entire dataset. This showcases the potential of our approach in scenarios involving image groups with absent modalities and limited resources.

Moreover, examining the proportions of voxels with negative Jacobian determinants in the displacements predicted by various methods on the OASIS dataset reveals that all methods exhibit decent smoothness. Notably, \emph{Ours-CD} achieved a significantly lower proportion than other methods, highlighting the advantage of the proposed registration modules with the Demons algorithm in predicting diffeomorphic transformations. 
Readers are referred to Section F of the Supplementary Material for the proportion values on all four datasets.

\subsubsection{Robustness on Different Initial Misalignment}

\begin{figure}[t]
  \centering
  \includegraphics[width=\linewidth]{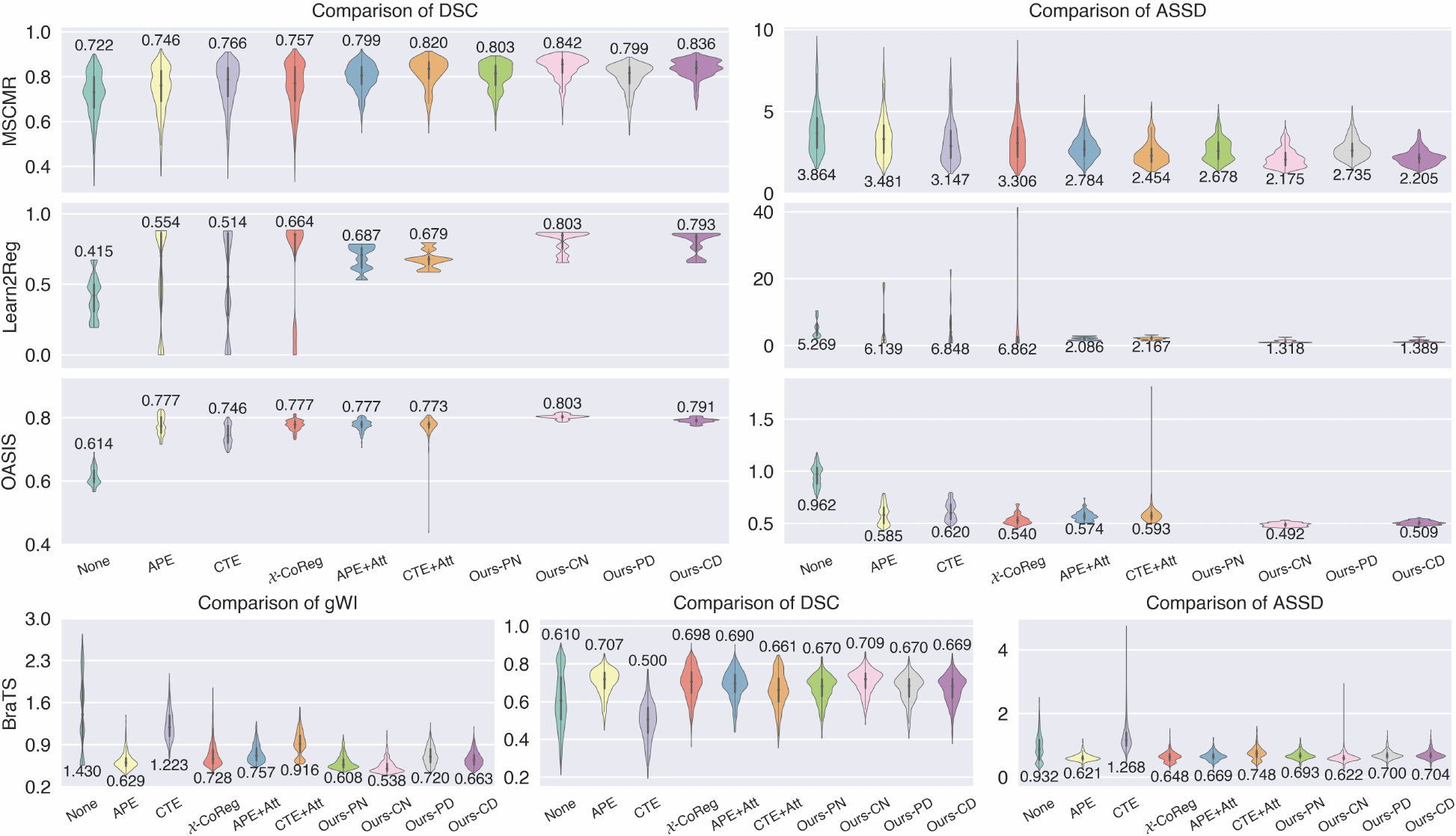}
  \caption{Quantitative evaluation metrics of the compared methods on the test groups of the four datasets. The mean values from each method are indicated.
  A zoomed version of the figure can be found in the Supplementary Material.}
  \label{fig:violin}
\end{figure}

\cref{fig:violin} presents the violin plots of the evaluation metrics for the compared methods on all test groups. 
For all datasets (particularly MSCMR, Learn2Reg and BraTS-2021), the initial DSCs/ASSDs/gWIs of the test image groups exhibit a broad spread, indicating considerable variability of misalignments. 
However, iterative methods achieved only marginal alignments for nearly all groups of the MSCMR dataset, and even led to worse distortion for certain Learn2Reg image groups, even though such methods incorporate a combination of rigid, affine, and FFD transformations to enhance registration outcomes.
This implies that the limited capture range of iterative methods makes it difficult to establish reasonable spatial correspondence across long distances. This challenge may arise from the presence of local minima in the optimisation of manually crafted similarity metrics. 
Although deep learning baselines can mitigate the wide range of distortion in misaligned images, the overall improvement by them remains modest on certain datasets such as Learn2Reg. 

In contrast, the violin plots of our models with complete learning achieve shorter tails (smaller variance in registration metrics) for all datasets, exhibiting greater stability when handling varying levels of distortion, and showing superior performance especially for image groups with significant misalignments. 

\subsubsection{Visualisation of Registration Results}

The results of an example test group from the MS-CMRSeg and Learn2Reg datasets are visualised in \cref{fig:vis_mscmr} and \cref{fig:vis_learn2reg}, respectively. 
For the example from MS-CMRSeg, the original images were severely distorted with initial DSCs below 0.4. Under such condition, the iterative methods were rather conservative to output small-magnitude deformations, illustrated by the deformation grids. 
For the group from Learn2Reg, conversely, the iterative methods estimated highly irregular deformation fields containing numerous non-diffeomorphic positions, as shown by the displacement and Jacobian determinant maps, although they achieved DSCs comparable to the proposed methods on this image group. 
This illustrates the instability of the deformations produced by iterative methods across different images. In contrast, while deep learning baselines avoid registration failures for all datasets, their output deformation fields imply that such an effect comes at the cost of a tendency to predict more conservative deformations with high rigidity. 
Readers are referred to Section F of the Supplementary Material for visualisation of the registration results on the BraTS and OASIS datasets.

The proposed model, however, demonstrates an advantage in addressing these issues, capable of generating fine-grained deformations that contain both global and localised subtle movements. To further showcase the effectiveness of the proposed hierarchical decomposition strategy for estimating the velocity fields, in \cref{fig:multilevel_disps_mscmr} we visualise an example of the multi-level deformations produced by our model on an image group from the MS-CMRSeg dataset. The estimated spatial transformations $\{\phi_m^l\}_m$ with a lower level $l$ tend to be smoother and consist of global displacements, while the higher-level deformations concentrate on small distortions in different local regions. The visualisation results for the other three datasets are provided in Section F of the Supplementary Material.

\begin{figure}[t]
  \centering
  \includegraphics[width=\linewidth]{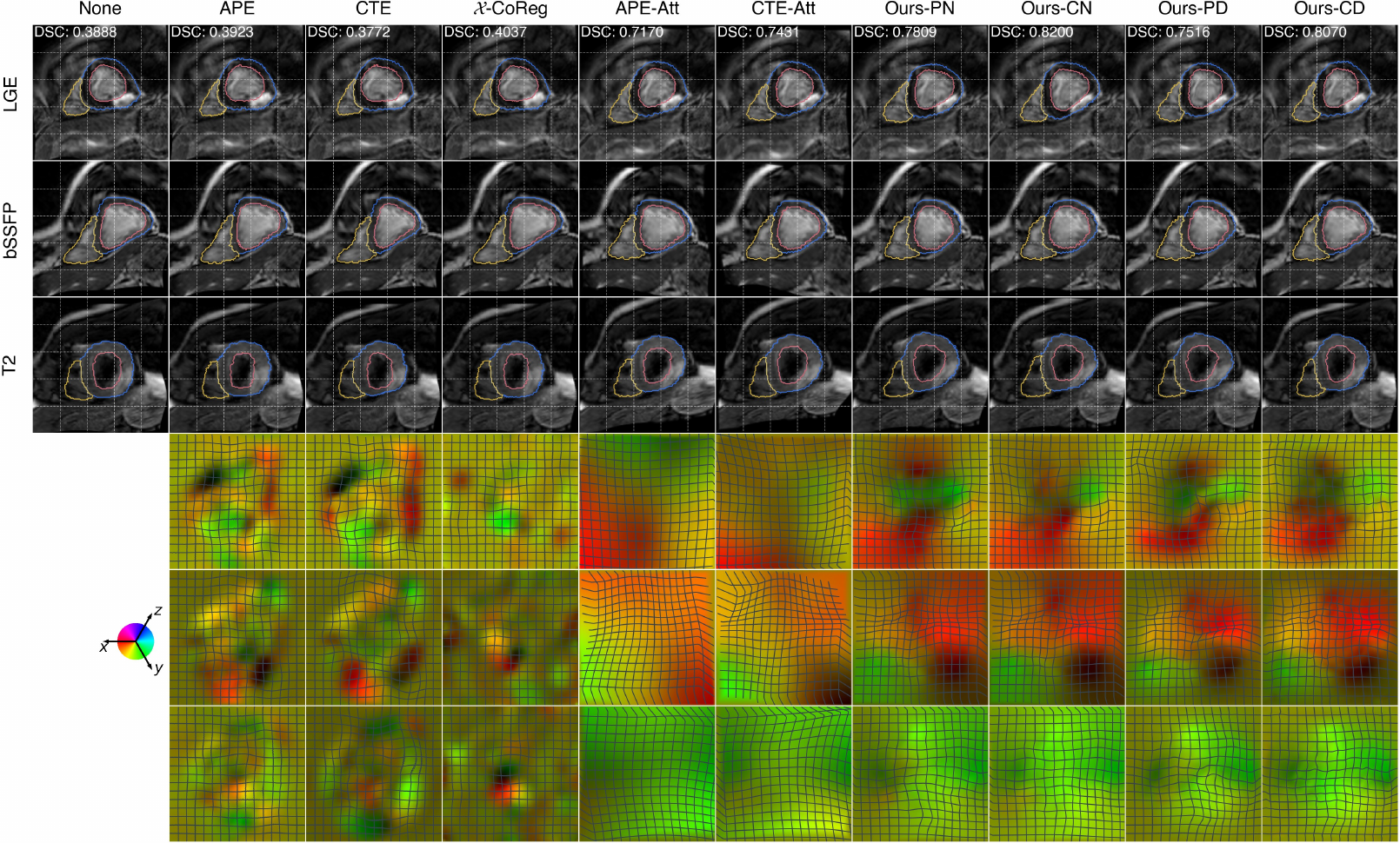}
  \caption{Results of an image group from the MS-CMRSeg dataset. The mean DSCs of all foreground classes on this group are shown for each method.
  A zoomed version of the figure can be found in the Supplementary Material.}
  \label{fig:vis_mscmr}
\end{figure}

\begin{figure}[t]
  \centering
\includegraphics[width=\linewidth]{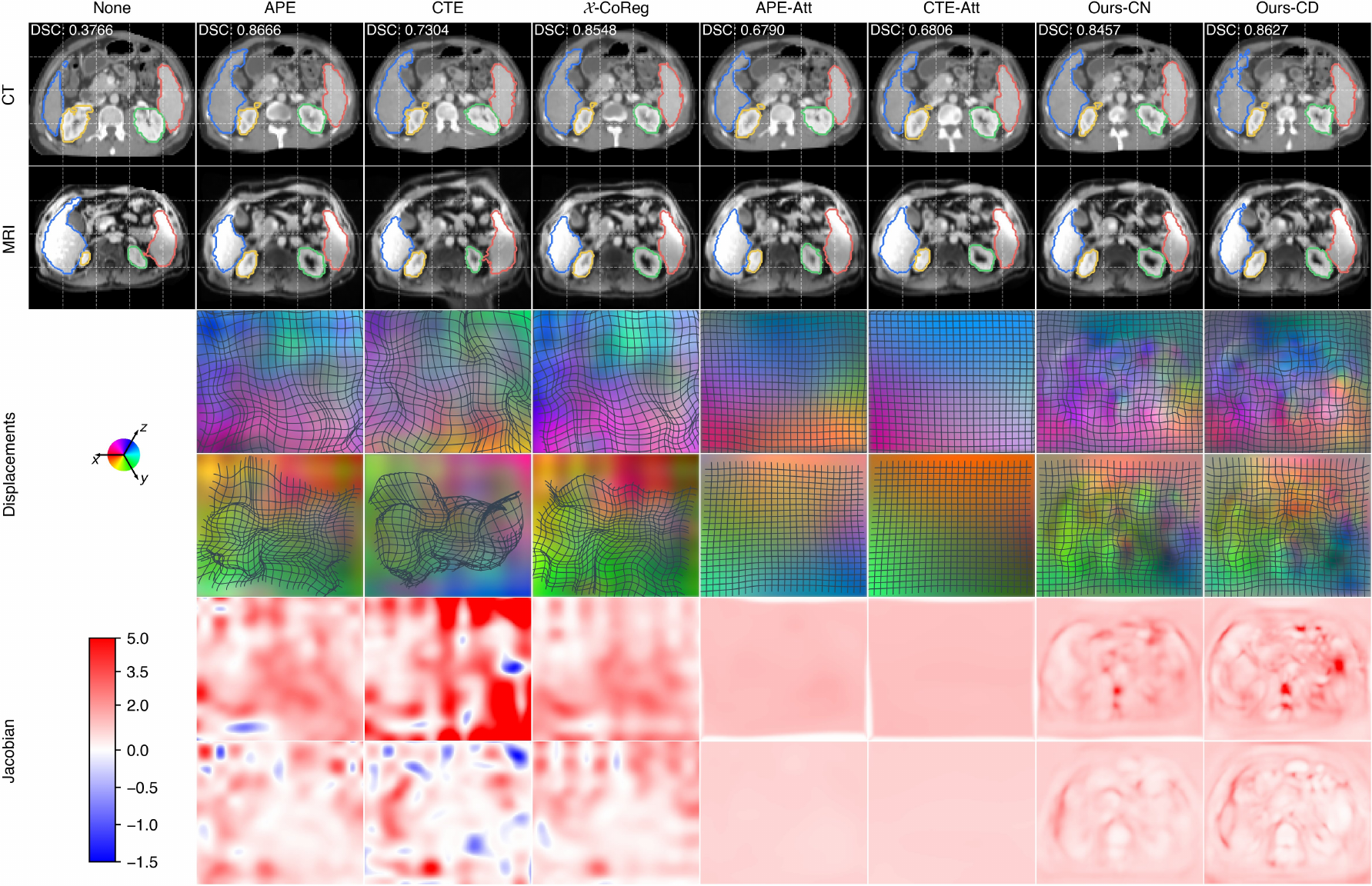}
  \caption{Results of an image group from the Learn2Reg dataset. The mean DSCs of all foreground classes on this group are shown for each method.
  A zoomed version of the figure can be found in the Supplementary Material.}
  \label{fig:vis_learn2reg}
\end{figure}

\begin{figure}[t]
  \centering
  \includegraphics[width=\linewidth]{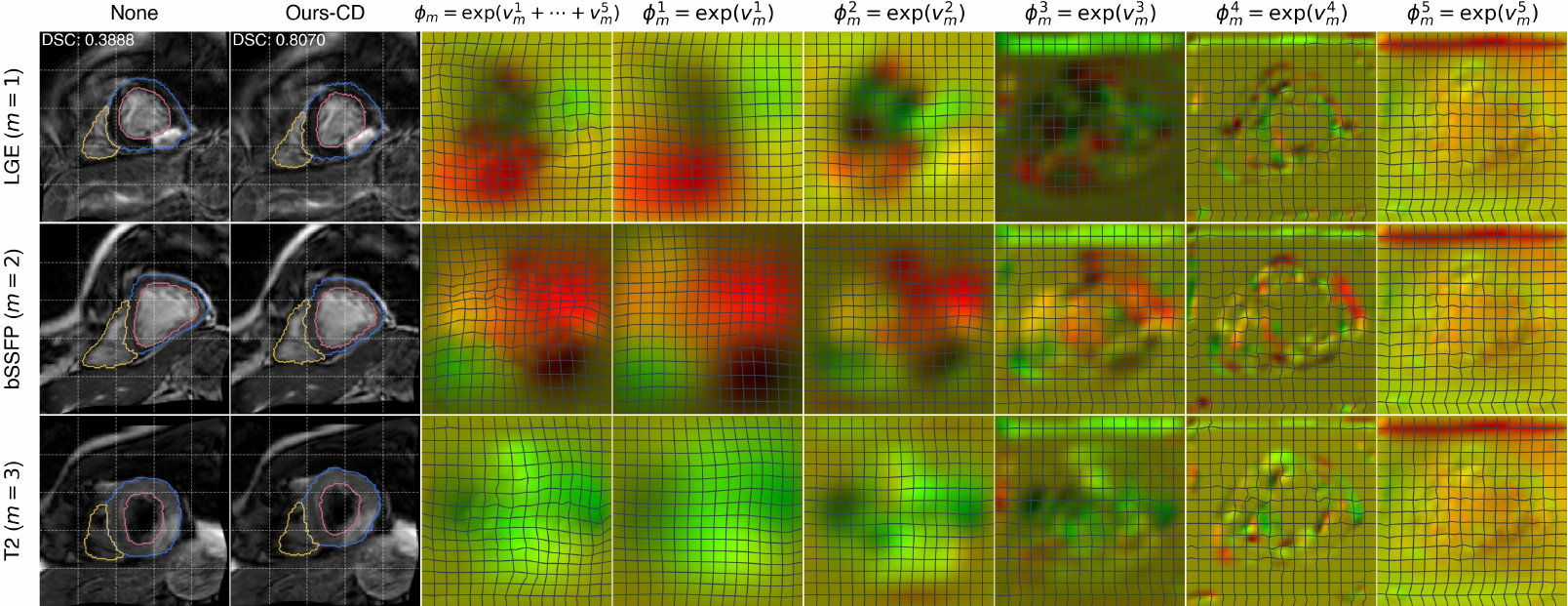}
  \caption{Multi-level deformations from our model \emph{Ours-CD} on MS-CMRSeg, where the image group to register is the same as in \cref{fig:vis_mscmr}.
  A enlarged version of the figure can be found in the Supplementary Material.}
  \label{fig:multilevel_disps_mscmr}
\end{figure}

\subsection{Scalability Test on Large-scale and Variable-size Image Groups}
\label{sec:scalability}

\begin{figure}[t]
  \centering
  \includegraphics[width=\linewidth]{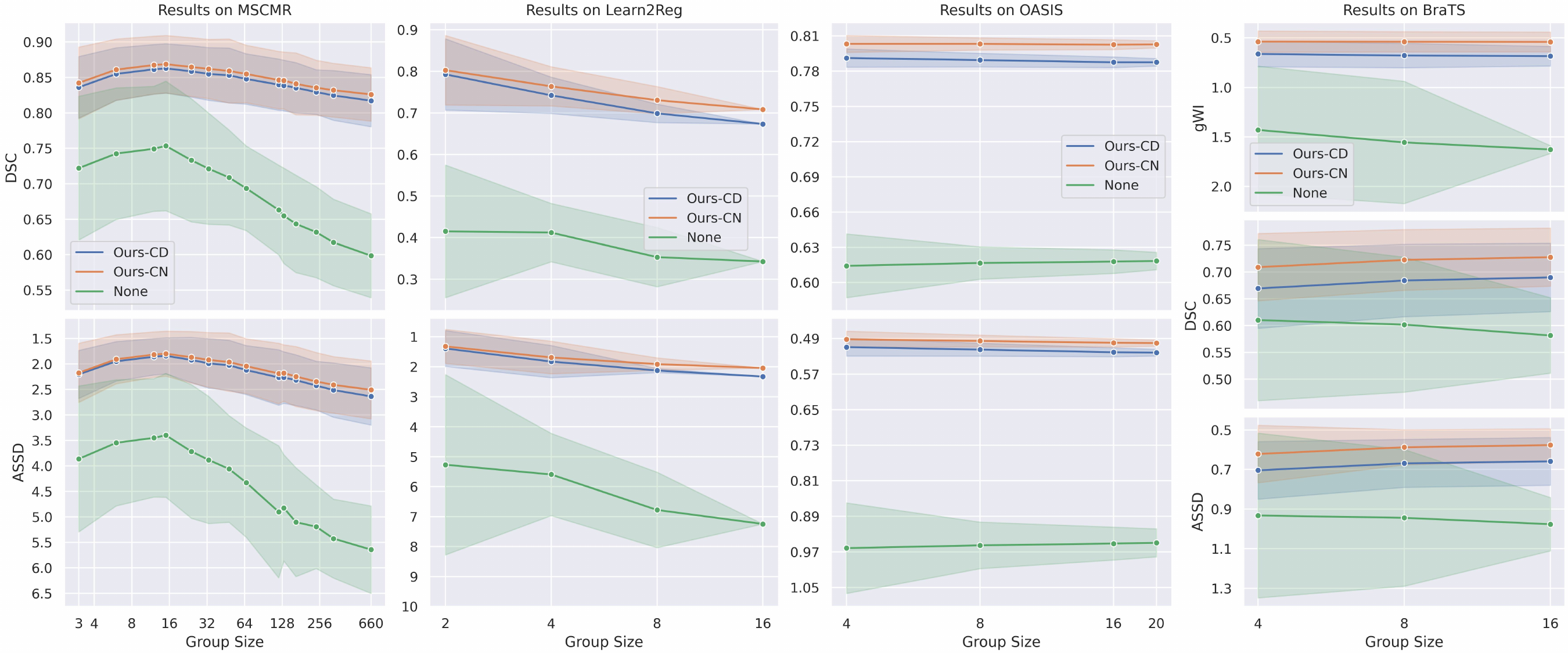}
  \caption{Evaluation metrics (mean values with one standard deviation bands) of registration results on image groups with different sizes.
  A zoomed version of the figure can be found in the Supplementary Material.}
  \label{fig:scalability}
\end{figure}

\subsubsection{Experimental Design}
This experiment aims to evaluate the performance of our model for large-scale and variable-size groupwise registration. Co-registering large image groups poses a significant challenge for iterative methods due to their high computational complexity. On the other hand, traditional deep learning approaches typically fix the input channel number as a predetermined group size, limiting them to handling only small groups of the same size during training and test due to GPU memory constraints. In contrast, while only requiring a small group size for training, our model can generalize effectively to much larger image groups of varying sizes during inference stage. The results in this section illustrate this scalability.

The original group sizes of the BraTS-2021, MS-CMRSeg, Learn2Reg and OASIS datasets were $N_{\text{train}}=4,3,2,4$, respectively. 
To test the scalability of our model, we constructed test groups of a larger size for each dataset, by merging every $R$ original groups as one new group. 
Therefore, using the original test set with $W$ groups of size $N_{\text{train}}$, we constructed a new test set with $W/R$ groups of size $N_{\text{test}}=N_{\text{train}}R$, where $W\equiv 0 \pmod{R}$. 
The variants of our model (\emph{Ours-CN} and \emph{Ours-CD}) were trained on the original training groups, and then tested on newly constructed test sets with different $N_{\text{test}}$. 

\subsubsection{Results}

\cref{fig:scalability} presents
the evaluation metrics versus $N_{\text{test}}$ on different datasets.
The initial DSC/ASSD/gWI metrics indicate that as $N_{\text{test}}$ increases, the initial misalignment becomes significantly severe, posing greater challenges for co-registration. 
However, our model sustains good performance even when group size is very large (e.g., more than 600 2D images from MS-CMRSeg and 20 3D images from OASIS). Besides, while the pre-registration metrics on BraTS-2021 worsen with increasing group size, our models achieved even better registration accuracy when co-registering larger image groups. These demonstrate robustness of the proposed framework on large-scale multi-modal groupwise registration.
In addition, the two variants of our model exhibit similar scalability, showing the effectiveness of both types of registration modules. 

Note that the maximum test group size for Learn2Reg and BraTS-2021 is 16, because Learn2Reg contains only 16 test images, and for BraTS, it is infeasible to register intersubject images due to difference in tumour structures, and there are only 16 distorted images for each subject.

\subsection{Integration with Other State-of-the-Art Registration Methods}
\subsubsection{Experimental Design}
This experiment aims to integrate state-of-the-art (SoTA) methods from other tasks (e.g., pairwise or mono-modal registration) into the proposed framework, and therefore evaluates the compatibility and versatility of our method. To this end, we selected the following recent works:
\begin{itemize}
    \item TransMorph \cite{media/transmorph}. We replaced the Attention U-Net encoder of our model with TransMorph to extract features from different image modalities. 
    \item PIViT \cite{miccai/PIViT}. We replaced the registration modules of our model by the registration networks proposed by PIViT. Particularly, as both our and the PIViT models have five levels of scales (resolutions) for spatial transformation inference, we follow the same settings in that paper, i.e., we use the LCD modules (Long-range Correlation Decoder) proposed by the PIViT paper for the three coarsest levels, and use CNNs for the other two levels.
    \item ModeT \cite{miccai/modet}. We replaced the registration modules of our model by the registration networks proposed in the ModeT paper. Particularly, as both our and the ModeT models have five levels, we follow the same settings in that paper, i.e., for each of the three coarsest levels, we first use a ModeT module (Motion Decomposition Transformer) to infer multiple transformations, and then use a CWM (Competitive Weighting Module) to generate a single transformation; for each of the other two levels, we use a ModeT module to infer a single transformation directly. 
\end{itemize}
Note that the model Ours-CN uses the networks in the registration modules to produce both mean and log variance of the velocity field distributions, while Ours-CD calculated the mean of the velocity field distributions using the Demons algorithm and uses the networks in the registration modules to produce logarithmic variance only. Therefore, with Transmorph or PIViT networks integrated, our model still has these two types. As for ModeT, since it was designed to calculate spatial transformations based on a set of basis vectors, it is not suitable to be used as a predictor of logarithmic variance. As a result, we integrate ModeT only into Ours-CN.

\subsubsection{Results}

\begin{table}[t]
\caption{The results of our models integrated with SoTA methods on the MS-CMRSeg dataset.}
\label{tab:compatability}
\centering
\scriptsize
\begin{tabular}{ccccc}
\hline
Model   & Encoder         & Reg. Module & DSC$\uparrow$  & ASSD$\downarrow$ \\ \hline
Ours-CN & Attention U-Net & Convs       & 0.842$\pm$0.051 & 2.17$\pm$0.58    \\
Ours-CD & Attention U-Net & Convs       & 0.836$\pm$0.043 & 2.21$\pm$0.47    \\ \hdashline
Ours-CN & TransMorph      & Convs       & 0.838$\pm$0.051 & 2.21$\pm$0.54    \\
Ours-CD & TransMorph      & Convs       & 0.848$\pm$0.048 & 2.09$\pm$0.52    \\
Ours-CN & Attention U-Net & PIViT       & 0.841$\pm$0.043 & 2.18$\pm$0.47    \\
Ours-CD & Attention U-Net & PIViT       & 0.838$\pm$0.045 & 2.26$\pm$0.49    \\
Ours-CN & Attention U-Net & ModeT         & 0.848$\pm$0.040 & 2.12$\pm$0.42    \\ \hline
\end{tabular}
\end{table}

The results on the MS-CMRSeg dataset are shown in \cref{tab:compatability}. Our models integrated with other SoTA methods achieved a similar level of performance compared to the results we obtained before (first two rows in the table). In particular, Ours-CD with TransMorph encoders and Ours-CN with ModeT registration modules even surpass the previous best performance. Therefore, it is demonstrated that our method has good versatility, compatible with different SoTA methods. Besides, this indicates that we can effectively apply methods that dominate various registration tasks to multi-modal groupwise registration, just through integration with our framework.

\subsection{Model Interpretability}
We further demonstrate the interpretability of the proposed framework by
1) visualising the structural representation maps learnt by our model, and
2) validating the symmetry structures respected by our model.

\subsubsection{Structural Representation Maps}
\begin{figure}[t]
  \centering
  \includegraphics[width=\linewidth]{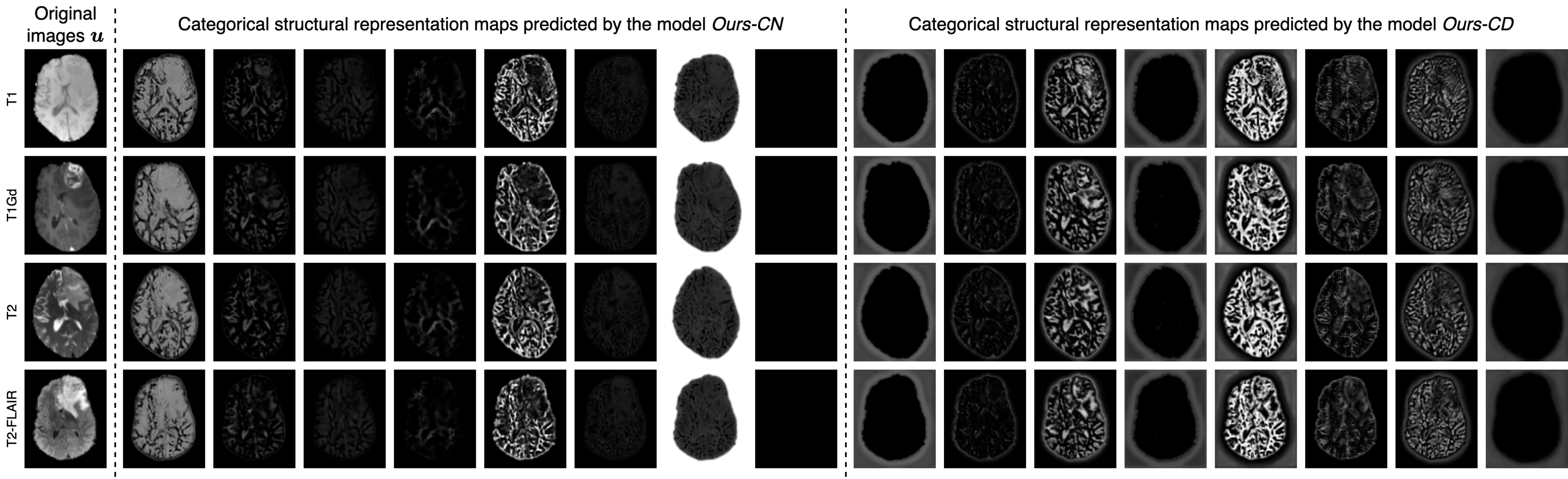}
  \caption{Categorical structural representations from the proposed models. 
  One can see that complementary brain structures are revealed from these representations.
  Particularly, the representations from the model \emph{Ours-CD} look more fine-grained than those from \emph{Ours-CN}.
  A zoomed version of the figure can be found in the Supplementary Material.}
  \label{fig:features_brats}
\end{figure}

We visualise the structural representation maps as the categorical latent distribution learnt by the proposed model.
For instance, \cref{fig:features_brats} presents the finest-scale structural representation maps estimated by the proposed models of an image group from the BraTS-2021 dataset.
One can observe that meaningful semantic features are extracted from the input images.
Particularly, the model \emph{Ours-CN} tends to learn high-level information, e.g. structures and boundaries, while \emph{Ours-CD} prefers to extract more detailed features such as skeletons and textures.
This difference is reasonable because the registration is computed directly from these representations for the Demons algorithm, which needs more fine-grained information.

These structural representations also demonstrate the clear advantage of our proposed models over conventional learning-based counterparts in terms of model interpretability and registration accuracy. 
This suggests that the trade-off between model interpretability and performance is not inevitable, while one may believe that more complex models are more accurate.
In fact, since medical images are scarce and highly correlated \cite{journal/tpami/luo2022}, the observations are well structured and only occupy a low-dimensional landscape (or manifold) within the entire data space \cite{journal/if/arrieta2020}.
Therefore, by respecting the underlying generating process and the inherit symmetry structures, the proposed models achieved better interpretability and higher accuracy, while using fewer parameters.

\subsubsection{Counterfactual Validation of Latent Symmetries}
\reva{}{
A deeper level of interpretability can be assessed by examining if the model respects the underlying symmetries of the data-generating process, namely the disentanglement of anatomy and geometry.
We validate this by performing counterfactual reconstructions
\cite{book/cambridge/pearl2009,conference/iclr/besserve2018}, where we actively intervene on the learnt latent variables and observe the outcome.
This allows us to ask ``what-if'' questions and verify that the model behaves in a predictable and semantically meaningful way.}{AE}

\reva{}{
We first investigate the disentangled anatomy representation $\bm{z}$.
For a given test image group, we compute the latent anatomy $\bm{z}$ and then perform an ontological intervention by setting a specific channel $\bm{z}_k$ to zero (i.e., a $\operatorname{do}(\bm{z}_k=\bm{0})$ operation).
This corresponds to asking the model: ``What would the registered images look like if this specific anatomical feature $k$ did not exist''.
The results of this experiment on the OASIS dataset are shown in \cref{fig:vis_oasis_counter_recon}.
The figure demonstrates that zeroing out a single channel cleanly removes a specific anatomical structure (e.g., the white matter) from the reconstruction.
The difference map between the original and counterfactual reconstructions--the direct effect--clearly isolates this structure.
To quantify this observation, we measure the Normalised Cross-Correlation (NCC) between the map of the latent channel $\bm{z}_k$ that was set to zero and the corresponding direct effect map.
For the example in \cref{fig:vis_oasis_counter_recon}, the four non-zero channels $\bm{z}_1$, $\bm{z}_2$. $\bm{z}_4$ and $\bm{z}_6$ yielded high NCC values of 0.99, 0.98, 0.97 and 0.62, respectively.
This high correlation confirms that the model has learnt a highly disentangled representation, where individual latent channels have a direct and exclusive correspondence to specific anatomical structures.
}{}

\reva{}{
This level of granular control and semantic correspondence is a key advantage over conventional end-to-end registration networks. 
In such ``black-box'' models, the latent features are typically entangled, and manipulating a single feature is unlikely to produce a coherent or predictable semantic change. 
Our framework, by explicitly modelling anatomy and geometry as disentangled latent variables, provides interpretable latent that can be used to understand and validate the model's internal logic, a crucial step towards trustworthy medical AI. 
We also validate the geometric equivariance in \cref{fig:vis_oasis_deform_recon}, showing that the model correctly disentangles spatial transformations.
}{}

\begin{figure*}
  \centering
  \begin{subfigure}{0.49\textwidth}
    \centering
    \includegraphics[width=\linewidth]{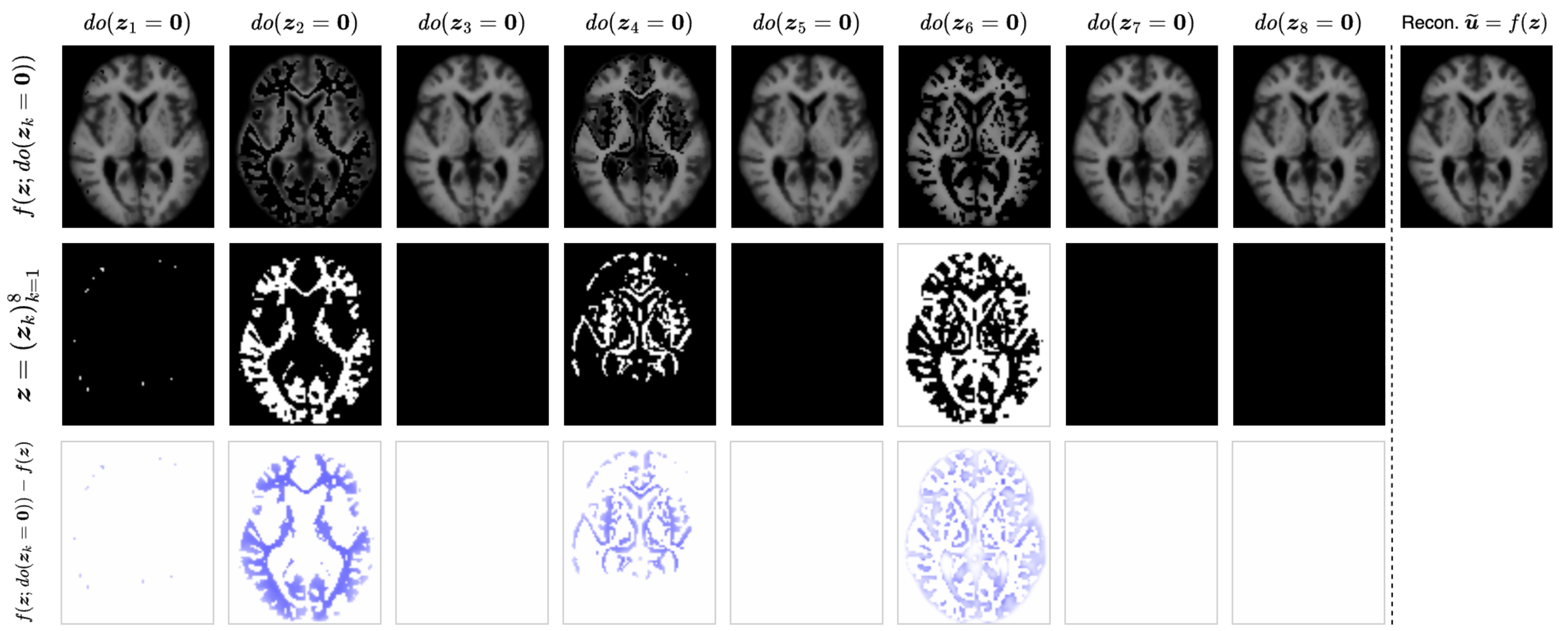}
    \caption{Counterfactual reconstruction by ontological transformations.}
    \label{fig:vis_oasis_counter_recon}
  \end{subfigure}
  \hfill
  \begin{subfigure}{0.49\textwidth}
    \centering
    \includegraphics[width=\linewidth]{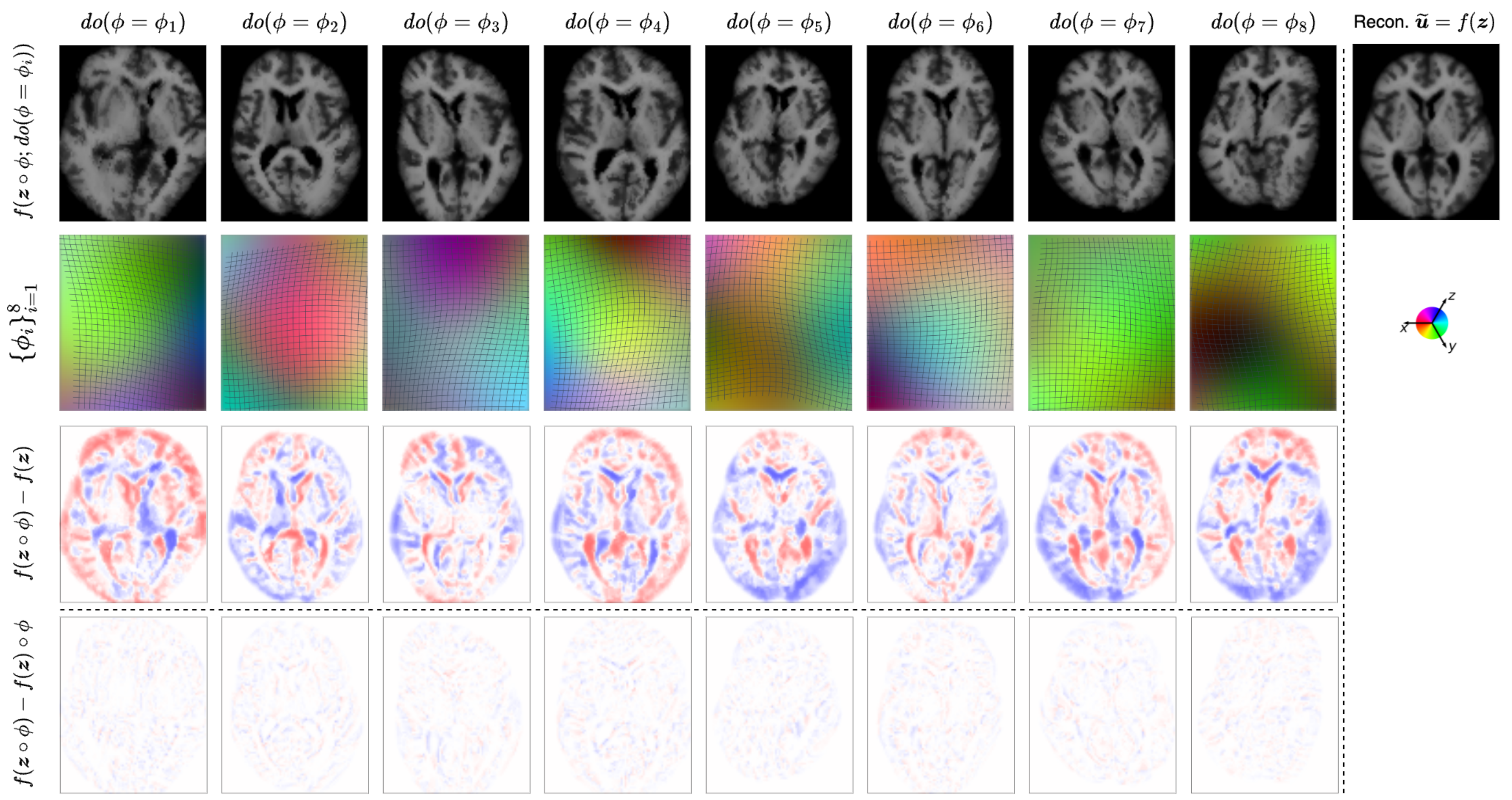}
    \caption{Counterfactual reconstruction by diffeomorphic transformations.}
    \label{fig:vis_oasis_deform_recon}
  \end{subfigure}
  \caption{Counterfactual reconstruction on an image from the OASIS dataset using the underlying symmetry transformations. 
    (a) For ontological transformations acting in the anatomy domain, one can see that the difference calculated by $f(\bm{z};\mathit{do}(\bm{z}_k=\bm{0}))-f(\bm{z})$ indeed corresponds to the $k$-th latent structure $\bm{z}_k$, which means that the learnt decoder respects the ontological symmetry.
    (b) For diffeomorphic transformations acting in the spatial domain, one can observe that the equivariance difference $f(\bm{z}\circ\phi_i)-f(\bm{z})\circ\phi_i$, where $f(\bm{z}\circ\phi_i)=f(\bm{z}\circ\phi;\mathit{do}(\phi=\phi_i))$, are almost zero except for interpolation errors, which indicates that the learnt decoder is indeed transformation-equivariant.}
\end{figure*}

\section{Conclusion and Discussion}\label{sec:discussion}
In this work, we have developed a Bayesian framework for unsupervised multi-modal groupwise image registration.
In marked contrast to similarity-based registration approaches, the proposed method builds on a principled generative modelling of the imaging process.
Moreover, a specially designed network architecture realises the explicit disentanglement of anatomy and geometry from the observed images, so that registration is learnt from their underlying structural representations in a unified closed-loop self-reconstruction process.
The experiments on four different datasets of cardiac, brain, and abdominal medical images have demonstrated the advantage of the proposed modelling for image registration.

There have also been works in the literature trying to learn disentanglement of appearance and geometry \cite{conference/eccv/shu2018,conference/neurips/skafte2019,conference/miccai/bone2020,journal/tpami/xing2020},
or inference of semantic or geometric variations \cite{conference/cvpr/ying2023,conference/iclr/mao2024}.
However, they usually involve modelling of an intensity template, and do not consider a more general multi-modality setup.
The generated template trained on one dataset can be suboptimal for groupwise registration on a new dataset.
This highlights the advantage of disentangling each separate image groups into their corresponding anatomical representations in our framework as a subroutine to groupwise registration, rather than optimising for intensity variations. 

The intrinsic distance proposed in \cref{sec:intrinsic_distance} also has further implications. 
Developed based on the variational inference framework, the intrinsic distance is primarily intended to estimate the difference between structural representations of the common anatomy and the input images.
Nevertheless, there could be alternative methods to calculate the average representation and the intrinsic distance.
For instance, conventional label fusion methods for multi-atlas segmentation \cite{journal/tpami/wang2012,journal/tmi/agrawal2020,journal/media/audelan2022} could be used to estimate the average probability maps of the common anatomical representations, while the intrinsic metrics between probabilistic shapes could be improved by respecting the underlying manifold structure \cite{journal/ni/fletcher2009,conference/aistats/arvanitidis2021}.

Furthermore, recent years have witnessed a growing academic interest in the field of interpretable or explainable artificial intelligence (XAI) \cite{journal/ai/gunning2019,journal/if/arrieta2020,journal/tnnls/tjoa2020,journal/tetci/zhang2021}.
Our proposed model exhibits the local and active interpretability characterised by \cite{journal/tetci/zhang2021} in that the structural representations encoded by the network are deliberately devised to have visual semantics in a human-understandable fashion.
The proposed registration module and the network architecture also demonstrate certain levels of algorithmic transparency \cite{journal/if/arrieta2020}.
Indeed, the Demons algorithm for calculating the velocity fields is mathematically interpretable, as it depends linearly on the gradient and difference of structural representations.
On the other hand, the decoder architecture is spatially decomposable, as the reconstruction of image intensities is determined locally by the anatomy at each independent location (owing to the use of convolutions with kernel size 1 isotropically), thus respecting the spatial equivariance in the imaging process \emph{w.r.t.} diffeomorphic transformations.
One of the potential trade-offs or limitations of the proposed architecture is that the usage of convolutions with kernel size 1 in the decoders may provoke a trade-off between the reconstruction fidelity and the equivariance constraint.
\revb{}{However, by sacrificing a small degree of reconstruction fidelity, we gain a significant improvement in the identifiability and accuracy of the geometric transformations. This is an example of how imposing a strong, domain-appropriate inductive bias can guide a learning system to a more robust and accurate solution for its primary task.}{R2.2}
Future studies could be addressed to represent the generative process of multi-modal images from the underlying common anatomy in a more compact and unified manner using techniques from, for example, normalising flows \cite{journal/tpami/kobyzev2020} that construct an invertible mapping between the observation and the latent variable.


\begin{appendices}

\section{Minimization of The Intrinsic Distance}
We define the KL divergence $D_S^*$ from the prior to the posterior distribution of the common anatomy structural representation as
\begin{equation}
  \begin{aligned}
    D_S^* &\triangleq\kldiv{q^*(\bm{z})}{p^+(\bm{z})} 
  \end{aligned}
\end{equation}
where $q_j^{\diamond}(\bm{z})$ is the single-view variational distribution for the $j$-th input image, and 
\begin{equation}
  \begin{aligned}
    q^*(\bm{z}) &\triangleq \frac{\left[\prod_{j=1}^N q_j^{\diamond}(\bm{z})\right]^{\nicefrac{1}{N}}}{\int\left[\prod_{j=1}^N q_j^{\diamond}(\bm{z})\right]^{\nicefrac{1}{N}}\operatorname{d}\bm{z}}, \\
    p^+(\bm{z}) &\triangleq \frac{1}{N}\sum_{j=1}^N q_j^{\diamond}(\bm{z})
  \end{aligned}
\end{equation}
are the geometric and arithmetic mean distributions of $q_j^{\diamond}$'s, respectively.
For categorical distributions $q_j^{\diamond}(\bm{z})=\operatorname{Cat}(\bm{z};\bm{\pi}_{j,1}^{\diamond},\dots,\bm{\pi}_{j,K}^{\diamond})$ with $j\in[N]$, the geometric and arithmetic mean distribution are both categorical, \emph{i.e.} $q^*(\bm{z})=\operatorname{Cat}\left(\bm{z};\bm{\pi}_1^*,\dots,\bm{\pi}_K^*\right)$ and $p^+(\bm{z})=\operatorname{Cat}\left(\bm{z};\bm{\pi}_1^+,\dots,\bm{\pi}_K^+\right)$, with 
\begin{equation}
  \begin{aligned}
    \bm{\pi}_k^* &= \frac{\left[\prod_{j=1}^N\bm{\pi}_{j,k}^{\diamond}\right]^{\nicefrac{1}{N}}}{\sum_{k=1}^K\left[\prod_{j=1}^N\bm{\pi}_{j,k}^{\diamond}\right]^{\nicefrac{1}{N}}}, \\
    \bm{\pi}_k^+ &= \frac{1}{N}\sum_{j=1}^N \bm{\pi}_{j,k}^{\diamond},
  \end{aligned}
\end{equation}
where all operations are applied element-wise.
Therefore, $D_S^*$ is minimized when $\bm{\pi}_k^*=\bm{\pi}_k^+$ for $k=1,\dots,K$, and each $\bm{\pi}_{j,k}^{\diamond}$ is characterized by
\begin{equation}
  \left\{
  \begin{aligned}
    \frac{1}{N}\sum_{j=1}^N \bm{\pi}_{j,k}^{\diamond} &=\frac{\left[\prod_{j=1}^N\bm{\pi}_{j,k}^{\diamond}\right]^{\nicefrac{1}{N}}}{\sum_{k=1}^K\left[\prod_{j=1}^N\bm{\pi}_{j,k}^{\diamond}\right]^{\nicefrac{1}{N}}} && k=1,\dots,K \\
    \sum_{k=1}^K\bm{\pi}_{j,k}^{\diamond} &= \bm{1},&& j=1,\dots,N
  \end{aligned}
  \right.
\end{equation}
which is an under-determined system with $N+K$ equations and $NK$ unknown variables.
Particularly, in general we cannot conclude that $\bm{\pi}_{1,k}^{\diamond}=\dots=\bm{\pi}_{N,k}^{\diamond}$ element-wise, $\forall\,k=1,\dots,K$.

On the other hand, for Gaussian distributions $q_j^{\diamond}(\bm{z})=\mathcal{N}(\bm{z};\bm{\mu}_j^{\diamond},\bm{\Sigma}_j^{\diamond})$ with $j\in[N]$, the geometric mean distribution is $q^*(\bm{z})=\mathcal{N}(\bm{z};\bm{\mu}^*,\bm{\Sigma}^*)$ with 
\begin{equation}
  \bm{\Sigma}^* = N\left[\sum_{j=1}^N\bm{\Sigma}_j^{\diamond\,-1}\right]^{-1},\quad 
  \bm{\mu}^* = \frac{\bm{\Sigma}^*}{N}\sum_{j=1}^N \bm{\Sigma}_j^{\diamond\,-1}\bm{\mu}_j^{\diamond},
\end{equation}
while the arithmetic mean distribution $p^+(\bm{z})$ is a Gaussian mixture model.
Therefore, when the distributions $q^*(\bm{z})$ and $p^+(\bm{z})$ are equal, the Gaussian mixture reduces to a single Gaussian, \emph{i.e.}, all the mixture components become identical, with the same mean and covariance matrix.
However, the KL divergence involving Gaussian mixture models is computationally intractable, which cannot be written as an explicit formula of its parameters. 

In practice, we use an upper bound of ${D}_S^*$ given by the Jensen's inequality as the objective function, \emph{a.k.a.} the \emph{intrinsic distance}:
\begin{equation}
  \widetilde{D}_S^*\triangleq \frac{1}{N}\sum_{j=1}^N\kldiv{q^*(\bm{z})}{q_j^{\diamond}(\bm{z})}\geq D_S^*.
\end{equation}
Note that the geometric mean distribution $q^*$ minimizes the functional $\widetilde{D}_S[q]\triangleq\frac{1}{N}\sum_{j=1}^N\kldiv{q(\bm{z})}{q_j^{\diamond}(\bm{z})}$ \emph{w.r.t.} $q$, because
\begin{equation}
  \begin{aligned}
    q^* &=\argmin_{q}\widetilde{D}_S[q] \\
    &=\argmin_{q}\left\{\kldiv{q(\bm{z})}{q^*(\bm{z})}-\log \mathcal{Z}\right\},
  \end{aligned}
\end{equation}
where $\mathcal{Z}\triangleq\int\left[\prod_{j=1}^Nq_j^{\diamond}(\bm{z})\right]^{\nicefrac{1}{N}}\operatorname{d}\bm{z}$ is a normalizing constant.
Then, we minimize the intrinsic distance $\widetilde{D}_S^*$ \emph{w.r.t.} $q_j^{\diamond}$'s.
By definition, for any distribution $q(\bm{z})$ and $p(\bm{z})$, we have $\kldiv{q(\bm{z})}{p(\bm{z})}\geq 0$, with $\kldiv{q(\bm{z})}{p(\bm{z})}=0$ if and only if $q(\bm{z})=p(\bm{z})$ (almost everywhere for continuous distributions).
Therefore, $\widetilde{D}_S^*$ is minimized and equals zero when $q^*(\bm{z})=q_j^{\diamond}(\bm{z})$ \emph{a.e.}, $\forall\,j$.
Since $\widetilde{D}_S^*\geq D_S^*\geq 0$, this also implies that $D_S^*$ is minimized and equals zero when $q^*(\bm{z})=q_j^{\diamond}(\bm{z})$ \emph{a.e.}, $\forall\,j$.

\section{The Gumbel-Rao Estimator}
For many machine learning problems, we need to optimize the expectation of a continuous differentiable function $f:\mathds{R}^K\rightarrow\mathds{R}$ over a discrete latent variable $\bm{z}\in\{0,1\}^K$, \emph{i.e.}
\begin{equation}
  \min_{\bm{\psi}}\mathbb{E}_{\bm{z}\sim q(\bm{z};\bm{\psi})}[f(\bm{z})],
\end{equation}
where $q(\bm{z};\bm{\psi})$ can be the distribution of $\bm{z}$ estimated by a neural network with parameters $\bm{\psi}$.
Unfortunately, using Monte-Carlo approximation, computing the gradient directly by 
\begin{equation*}
  \nabla_{\bm{\psi}}\triangleq\frac{\partial f(\bm{z})}{\partial\bm{z}}\frac{\operatorname{d}\bm{z}}{\operatorname{d}\bm{\psi}} 
\end{equation*}
would encounter the problem of non-differentiability with $\nicefrac{\operatorname{d}\bm{z}}{\operatorname{d}\bm{\psi}}$.
On the other hand, the REINFORCE estimator \cite{journal/ml/williams1992}
\begin{equation*}
  \nabla_{\text{REINFORCE}}\triangleq f(\bm{z})\frac{\partial\log q(\bm{z};\bm{\psi})}{\partial\bm{\psi}}
\end{equation*}
can suffer from high variance.
Therefore, to facilitate training via the reparameterization trick \cite{conference/iclr/kingma2014}, a continuous relaxation of the discrete variable using the Gumbel-Softmax (GS) estimator is proposed \cite{conference/iclr/jang2017,conference/iclr/maddison2017}.

Specifically, let $\bm{z}$ be a categorical variable with class probabilities $\bm{\pi}(\bm{\psi})=[\pi_1,\dots,\pi_K]\in[0,1]^K$.
The Gumbel-Max trick samples $\bm{z}$ via
\begin{equation}
  \bm{z}=\text{one-hot}\left(\argmax_{k} [g_k+\log\pi_k]\right),
\end{equation}
where $\bm{g}\triangleq (g_k)_{k=1}^K\stackrel{\text{i.i.d.}}{\sim} \text{Gumbel}(0,1)$, \emph{i.e.}, $g_k=-\log(-\log(u_k))$ with $u_k\stackrel{\text{i.i.d.}}{\sim}\text{Uniform}(0,1)$.
The GS distribution uses the softmax as a continuous and differentiable relaxation to argmax, and generate a random vector on a simplex $\bm{y}_{\tau}=[y_1,\dots,y_K]^{\intercal}\triangleq\operatorname{softmax}_{\tau}(\bm{g}+\log\bm{\pi})\in\Delta^{K-1}$, such that
\begin{equation}
  y_k = \frac{\exp\left[(g_k+\log\pi_k)/\tau\right]}{\sum_{k=1}^{K}\exp\left[(g_k+\log\pi_k)/\tau\right]},\quad k=1,\dots,K.
\end{equation}
To overcome the challenge of gradient computation with discrete stochasticity, the Straight-Through Gumbel-Softmax (ST-GS) estimator \cite{conference/iclr/jang2017,conference/iclr/paulus2021} uses the continuous relaxation, giving rise to a biased Monte-Carlo gradient estimator of the form 
\begin{equation}
  \nabla_{\text{ST-GS}}^{\text{MC}}\triangleq\frac{\partial f(\bm{z})}{\partial \bm{z}}\frac{\operatorname{d}\operatorname{softmax}_{\tau}(\bm{g}+\log\bm{\pi}(\bm{\psi}))}{\operatorname{d}\bm{\psi}},
\end{equation}
where the forward pass in $f(\cdot)$ is computed using the non-relaxed discrete samples.
To further reduce the variance in gradient estimator by Rao-Blackwellization, the Gumbel-Rao (GR) estimator \cite{conference/iclr/paulus2021} was proposed, which takes the form
\begin{equation}
  \nabla_{\text{GR}}^{\text{MC}}\triangleq \mathbb{E}\left[\nabla_{\text{ST-GS}}^{\text{MC}}\mid\bm{z}\right]
  \approx\frac{\partial f(\bm{z})}{\partial \bm{z}}\left[\frac{1}{S}\sum_{s=1}^S \frac{\operatorname{d}\operatorname{softmax}_{\tau}(\bm{G}^s(\bm{\psi}))}{\operatorname{d}\bm{\psi}}\right],
\end{equation}
where $\bm{G}^s\stackrel{\text{i.i.d.}}{\sim}\bm{g}+\log\bm{\pi}\mid\bm{z}$ for $s=1,\dots,S$.
Thus, we have the following proposition: let $\widebar{\nabla}_{\bm{\psi}}\triangleq\nicefrac{\operatorname{d}\mathbb{E}[f(\bm{z})]}{\operatorname{d}\bm{z}}$ be the true gradient we are trying to estimate, then
\begin{equation}
  \mathbb{E}\norm*{\nabla_{\text{GR}}^{\text{MC}}-\widebar{\nabla}_{\bm{\psi}}}^2\leq \mathbb{E}\norm*{\nabla_{\text{ST-GS}}^{\text{MC}}-\widebar{\nabla}_{\bm{\psi}}}^2.
\end{equation}

\section{The Diffeomorphic Demons Algorithm}
Without loss of generality, we consider pairwise registration between a fixed image $\bm{F}:\mathds{R}^d\supset\Omega_F\rightarrow\mathds{R}^D$ and a moving image $\bm{M}:\mathds{R}^d\supset\Omega_M\rightarrow\mathds{R}^D$, where $D$ is the feature dimensionality. 
Given the current transformation $\phi_s:\Omega_F\rightarrow\Omega_M$, the correspondence update (velocity) field $\bm{u}:\Omega_F\rightarrow\mathds{R}^d$ is derived by the following optimization problem
\begin{equation}
  \argmax_{\bm{u}}\frac{1}{2\abs{\Omega_F}}\left\{\norm{\bm{F}-\bm{M}\circ \phi_s\circ (\operatorname{id}+\bm{u})}_F^2 + \sigma_{\bm{\varphi}}^2\norm{\bm{u}}_F^2\right\}
\end{equation}
For a given location $\bm{\omega}\in\Omega_F$, let $\bm{\varphi^{\omega}}_s(\bm{u})\triangleq \bm{F}(\bm{{\omega}})-\bm{M}\circ \phi_s\circ(\operatorname{id}+\bm{u})(\bm{{\omega}})$, and assume the following linearization is available:
\begin{equation}
  \begin{aligned}
    \bm{\varphi^{\omega}}_s(\bm{u})&\approx \bm{\varphi^{\omega}}_s(\bm{0})+\bm{J}_{\bm{\varphi}}(\bm{\omega})\cdot\bm{u}(\bm{\omega}) \\
    &= \bm{F}(\bm{{\omega}})-\bm{M}\circ \phi_s(\bm{{\omega}})+\bm{J}_{\bm{\varphi}}(\bm{\omega})\cdot\bm{u}(\bm{\omega}),
  \end{aligned}
\end{equation}
where $\bm{J}_{\bm{\varphi}}(\bm{\omega})\in\mathds{R}^{D\times d}$ is the linearization matrix.
Substituting this linearization into the objective function gives 
\begin{equation}
  \begin{aligned}
    E_s(\bm{u}) &= \frac{1}{\abs{\Omega_F}}\sum_{\bm{\omega}\in\Omega_F} E_s^{\bm{\omega}}(\bm{u}) \\ 
    &\triangleq \frac{1}{2\abs{\Omega_F}}\sum_{\bm{\omega}\in\Omega_F}\left\{\norm{\bm{\varphi^{\omega}}_s(\bm{u})}_2^2+\sigma_{\bm{\varphi}}^2\norm{\bm{u}(\bm{\omega})}_2^2\right\} \\
    &\approx \frac{1}{2\abs{\Omega_F}}\sum_{\bm{\omega}\in\Omega_F}\norm{
    \begin{bmatrix}
      \bm{\varphi^{\omega}}_s(\bm{0}) \\
      \bm{0}
    \end{bmatrix} +
    \begin{bmatrix}
      \bm{J}_{\bm{\varphi}}(\bm{\omega}) \\
      \sigma_{\bm{\varphi}}\bm{I}_d
    \end{bmatrix} \cdot \bm{u}(\bm{\omega})
    }_2^2.
  \end{aligned}
\end{equation}  
Therefore, we only need to solve the following numerical equation at each location $\bm{\omega}$, 
\begin{equation}
  \begin{bmatrix}
    \bm{J}_{\bm{\varphi}}^{\intercal}(\bm{\omega})\:\:
    \sigma_{\bm{\varphi}}\bm{I}_d
  \end{bmatrix}
  \cdot
  \begin{bmatrix}
    \bm{J}_{\bm{\varphi}}(\bm{\omega}) \\
    \sigma_{\bm{\varphi}}\bm{I}_d
  \end{bmatrix}
  \cdot
  \bm{u}(\bm{\omega})
  =- 
  \begin{bmatrix}
    \bm{J}_{\bm{\varphi}}^{\intercal}(\bm{\omega})\:\:
    \sigma_{\bm{\varphi}}\bm{I}_d
  \end{bmatrix}
  \cdot
  \begin{bmatrix}
    \bm{\varphi^{\omega}}_s(\bm{0}) \\
    \bm{0}
  \end{bmatrix}
\end{equation}
which yields the solution
\begin{equation}
  \bm{u}^*(\bm{\omega}) = - \left[\bm{J}_{\bm{\varphi}}^{\intercal}(\bm{\omega})\bm{J}_{\bm{\varphi}}(\bm{\omega})+\sigma_{\bm{\varphi}}^2\bm{I}_d\right]^{-1}\bm{J}_{\bm{\varphi}}^{\intercal}(\bm{\omega})\bm{\varphi^{\omega}}_s(\bm{0}).
\end{equation}
Thus, by plugging $\bm{J}_{\bm{\varphi}}(\bm{\omega})=-\nabla^{\intercal}(\bm{M}\circ \phi_s)({\bm{\omega}})$ we can obtain a first-order Gauss-Newton step for the update rule.
Moreover, by plugging $\bm{J}_{\bm{\varphi}}(\bm{\omega})=-\frac{1}{2}\left[\nabla^{\intercal}(\bm{M}\circ \phi_s)+\nabla^{\intercal}F\right]({\bm{\omega}})$ we can derive a second-order approximation of the Demons force \cite{conference/miccai/vercauteren2008,journal/ni/vercauteren2009}.
Therefore, the diffeomorphic Demons algorithm will iterate over the following steps \cite{journal/ni/vercauteren2009}
\begin{itemize}
  \item Given the current transformation $s$, compute the correspondence update field $\bm{u}^*$ for minimizing $E_s(\bm{u})$;
  \item For a fluid-like regularization, let $\bm{u}\leftarrow K_{\text{fluid}}\star \bm{u}$, where $K_{\text{fluid}}$ will typically be a Gaussian kernel;
  \item Let $\phi_s\leftarrow \phi_s\circ \exp(\bm{u})$,
\end{itemize}
where the last step of vector field exponential can be implemented by the Scaling and Squaring method \cite{conference/miccai/arsigny2006}.

\section{Network Architecture}
\begin{figure*}[t]
    \centering
\includegraphics[width=\textwidth]{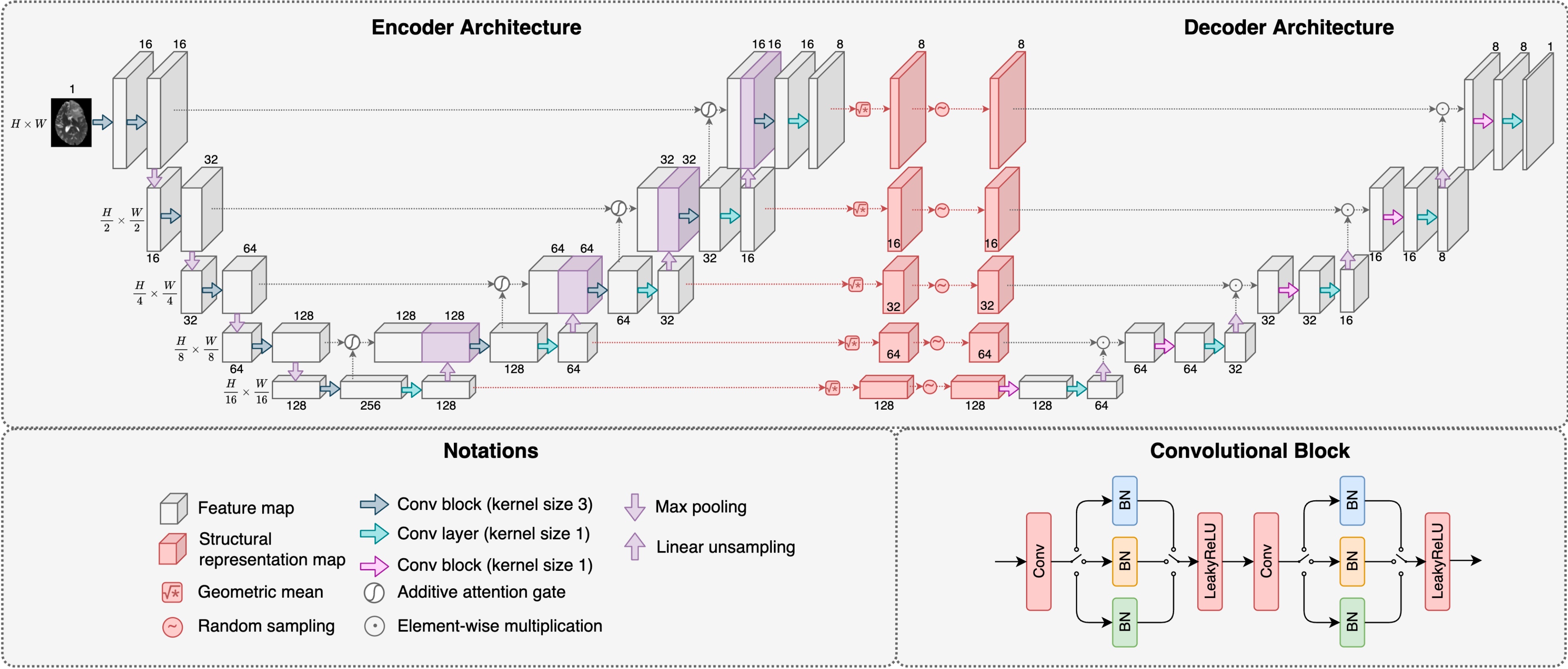}
    \caption{The architectures of the encoder and decoder of each modality. Red modules, operations, and maps are shared among all modalities. The batch normalisation (BN) layers in each convolutional block are modality-specific, i.e., for each modality, only the corresponding BN branch is used. The channel numbers are indicated around each map, and the spatial sizes of the maps at the same level are equal, indicated on the left side of the encoder.}
\label{fig:encoder_decoder_architecture}
\end{figure*}

\cref{fig:encoder_decoder_architecture} presents the network architecture of the encoder and decoder.
The encoder builds on an Attention U-Net structure \cite{conference/midl/oktay2018} to extract structural information, and the decoder is fed with the multi-level common structural representations to reconstruct the original images.
Domain-specific batch normalisation (BN) layers \cite{conference/cvpr/chang2019} are used to account for multi-modal appearance variations.
Moreover, the kernel size of the convolutions in the decoder is set to 1 to impose the spatial equivariance constraint.
Note that for simplicity we omit the registration module, which is illustrated in the main article, and we assume that the input of encoder is the warped image group during the inference steps \#2.

\section{Disentangled Representation Learning}
Disentangled representation learning aims to identify and reveal the underlying explanatory factors of variation in the observed data \cite{journal/tpami/bengio2013,journal/media/liu2022}.
Although various attempts have been made to characterise disentangled representations from the observed data through statistical independence \cite{conference/iclr/higgins2016,journal/jmlr/achille2018,conference/neurips/ridgeway2018,conference/neurips/chen2018,conference/iclr/kumar2018,conference/icml/mathieu2019,conference/aistats/esmaeili2019}, as \citet{conference/icml/locatello2019} and \citet{conference/aistats/khemakhem2020} have pointed out, without placing restrictions on the data and models or conditioning on additionally observed variables, this type of unsupervised disentanglement learning is hopelessly unidentifiable.
For image data, content-style (or shape-appearance) disentanglement has then become popular in image synthesis \cite{conference/eccv/huang2018}, segmentation \cite{journal/media/chartsias2019,journal/tmi/chartsias2021}, registration \cite{conference/ipmi/qin2019}, harmonisation \cite{conference/ipmi/zuo2021,conference/miccai/zuo2022}, and reconstruction \cite{conference/ipmi/ouyang2021}, to name a few.
To build data-driven inductive bias, they usually require paired multi-modal images with the same latent anatomy and even manual labels to guide disentanglement of anatomy and modality.

\subsection{\small Disentangled Representations \& Symmetry Groups}
To instead promote a more principled characterisation of disentangled representations, \citet{journal/arxiv/higgins2018} propose to define a vector representation as disentangled \emph{w.r.t.} a particular decomposition of a symmetry group into subgroups, if it decomposes into multiple subspaces, each of which is transformed independently by the action of a unique symmetry subgroup, while the actions of all other subgroups leave the subspace unchanged.
Formally speaking in the language of group theory, they propose the following definition
\begin{definition*}[Disentangled representation]
  Suppose that the symmetry transformations of the observational space $\mathcal{U}$ decomposes as a direct product $\mathcal{G}={G}_1\times\dots\times{G}_n$.
  Then the space of latent representation $\mathcal{Z}$ is disentangled \emph{w.r.t.} the decomposition if 
  \begin{enumerate}[a)]
    \item There is an action $\smallsquare:\mathcal{G}\times \mathcal{Z}\rightarrow \mathcal{Z}$;
    \item The map $h:\mathcal{U}\rightarrow \mathcal{Z}$ is \emph{equivariant} between the actions on $\mathcal{U}$ and $\mathcal{Z}$, i.e.,
    \begin{equation*}
      \mathfrak{g}\smallsquare h(u) = h(\mathfrak{g}\cdot u),\quad\forall\,\mathfrak{g}\in\mathcal{G},\ u\in \mathcal{U};
    \end{equation*}
    \item There is a decomposition $\mathcal{Z}=Z_1\times\dots\times Z_n$ such that each $Z_i$ is fixed by the action of all ${G}_j$, $\forall\,j\neq i$ and affected only by ${G}_i$.
  \end{enumerate}
\end{definition*}
\noindent Though the problem of learning disentangled representations is separate from what defines disentangled representation, this formal definition is indeed inspiring for us to design proper and useful inductive biases for disentangled representation learning.

In our situations, the transformation group on the observed image space $\mathcal{U}$ can be decomposed as $G=\mathcal{G}\times\mathcal{T}$, where $\mathcal{G}$ is the group of diffeomorphic transformations acting on the spatial domain while $\mathcal{T}$ is the group of ontological transformations in the anatomy domain, \emph{i.e.,}
\begin{equation*}
  \mathfrak{g}\cdot U\triangleq f(T(\bm{Z}))\circ\phi^{-1} = f(T(\bm{Z})\circ\phi^{-1}),\quad\forall\,\mathfrak{g}\in G,
\end{equation*}
where $\bm{Z}$ is the anatomy of $U$ and $f:\bm{Z}\mapsto U$ is an equivariant imaging function \emph{w.r.t.} $\forall\,\phi\in\mathcal{G}$.
The ontological transformations characterise the anatomical variation, \emph{i.e.}, whether a certain structure should change its existence, which cannot be explained by $\mathcal{G}$.
More precisely, if $\bm{Z}=(z_1,\dots,z_K)\in\{0,1\}^K$, then $T(\bm{Z})$ will flip some entries of $\bm{Z}$ from 0 to 1 or vice versa, i.e., $\mathcal{T}$ is a product group of $K$ cyclic groups of order 2, i.e., $\mathcal{T}\cong\Pi_{k=1}^K C_2$, where $C_2=\{e,g_c\}$ with $g_c^2=e$.
The identity element $e$ in the $k$-th $C_2$ means $T$ would keep the $k$-th anatomical structure unchanged, while the element $g_c$ means to make the structure appear or disappear, \emph{i.e.}, to change its existence.

Therefore, when we decompose the latent space into $\widehat{\mathcal{G}}\times\mathcal{Z}_0$ where $\widehat{\mathcal{G}}\subset\mathcal{G}$ is the diffeomorphic transformations registering $U\in\mathcal{U}$ to the common space, and $\mathcal{Z}_0$ is the common-space structural representations, then the map $h:U=f\circ\widehat{\phi}(\bm{Z}_0)\mapsto [\widehat{\phi},\bm{Z}_0]$ is disentangled \emph{w.r.t.} the group decomposition $G=\mathcal{G}\times\mathcal{T}$.
To see this, for $\mathfrak{g}=[\phi_{\mathfrak{g}},T_{\mathfrak{g}}]\in G$, we have 
\begin{equation*}
  \begin{aligned}
    h(\mathfrak{g}\cdot U) &= h\left( f(T_{\mathfrak{g}}(\bm{Z})\circ\phi_{\mathfrak{g}}^{-1}) \right) \\
    &= h\left( f(\phi_{\mathfrak{g}}\circ T_{\mathfrak{g}}\circ \widehat{\phi}(\bm{Z}_0))\right) \\ 
    &= h\left( f(\phi_{\mathfrak{g}}\circ \widehat{\phi}\circ T_{\mathfrak{g}}(\bm{Z}_0))\right) \\ 
    &= [\phi_{\mathfrak{g}}\circ\widehat{\phi},T_{\mathfrak{g}}(\bm{Z}_0)],
  \end{aligned}
\end{equation*}
where the third equation comes from the fact that elements of $\mathcal{G}$ commute with elements of $\mathcal{T}$ for product groups \cite{book/pearson/artin2011}.

Thus, if we define an action $\smallsquare$ on the latent subspaces $\widehat{\mathcal{G}}\times{\mathcal{Z}_0}$ by $\mathfrak{g}\smallsquare[\widehat{\phi},{\bm{Z}_0}]\triangleq[\phi_{\mathfrak{g}}\circ\widehat{\phi},T_{\mathfrak{g}}({\bm{Z}_0})]$, which is actually a group action by noting that $\forall\,\mathfrak{g}_1,\mathfrak{g}_2\in G$,
\begin{equation*}
  (\mathfrak{g}_1\mathfrak{g}_2)\smallsquare[\widehat{\phi},\bm{Z}_0] = [\phi_1\circ\phi_2\circ\widehat{\phi},T_1\circ T_2(\bm{Z}_0)] = \mathfrak{g}_1\smallsquare(\mathfrak{g}_2\smallsquare [\widehat{\phi},\bm{Z}_0]),
\end{equation*}
then the map $h:\mathcal{U}\rightarrow\widehat{\mathcal{G}}\times\mathcal{Z}_0$ is \emph{equivariant} between the actions on $\mathcal{U}$ and the latent subspaces $\widehat{\mathcal{G}}\times\mathcal{Z}_0$, \emph{i.e.}
\begin{equation}
  h(\mathfrak{g}\cdot U) = \mathfrak{g}\smallsquare h(U), \quad \forall\, \mathfrak{g}\in G.
\end{equation}
Besides, $\widehat{\mathcal{G}}$ is fixed by the actions of $\mathcal{T}$, and affected only by $\mathcal{G}$, and vice versa.
Thus, the proposed decomposition of the latent subspaces fulfills all three conditions for disentangled representations, justifying its usage for the task of groupwise registration.
The following definition summarises the above construction of disentangled representations for groupwise registration.

\begin{definition*}[Disentangled representation for groupwise registration]
  Let $\mathcal{G}$ be the group of diffeomorphisms acting in the spatial domain, and $\mathcal{T}$ the group of ontological transformations in the anatomy domain as defined above.
  That is, the symmetry transformations of the observed image group $U\in\mathcal{U}$ decomposes as $G=\mathcal{G}\times\mathcal{T}$.
  In addition, denote by $\widehat{\mathcal{G}}$ the space of diffeomorphisms registering the observed image group, and $\mathcal{Z}_0$ as the space of common anatomy representations.
  Then, the space of latent representations $\mathcal{Z}=\widehat{\mathcal{G}}\times\mathcal{Z}_0$ is disentangled \emph{w.r.t.} the decomposition $G=\mathcal{G}\times\mathcal{T}$, known as the disentangled representations for groupwise registration.
\end{definition*}

\subsection{Identifiability of Disentangled Representations}
Furthermore, we will show that the encoder $h$ is able to identify the true latent factors up to equivariances in the underlying mechanisms, following the work of \citet{conference/iclr/ahuja2022}.
Formally, suppose that $g:\mathcal{Z}\rightarrow\mathcal{U}$ is the true data generating process that is bijective, and $m:\mathcal{Z}\rightarrow\mathcal{Z}$ is a known state transition mechanism of the latent space.
Then, we have the following theorem:
\begin{theorem*}[Identifiability of latent representations]
  If the data generating process follows $u_{t+1}=g\circ m\circ g^{-1}(u_t)$, then the encoders $\widetilde{g}^{-1}$ that solve the observational identity $\widetilde{g}\circ m\circ \widetilde{g}^{-1}=g\circ m\circ g^{-1}$ identify the true encoder $g^{-1}$ up to equivariances of $m$, \emph{i.e.} $\widetilde{g}^{-1}\sim_{\mathcal{E}}g^{-1}$ where $\mathcal{E}=\{a\mid a\text{ is bijective}, a\circ m=m\circ a\}$.
\end{theorem*}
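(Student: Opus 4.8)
The plan is to produce the equivariance $a\in\mathcal{E}$ explicitly and check the two defining properties, so that the statement becomes a short algebraic verification. Concretely, I would set $a\triangleq\widetilde{g}^{-1}\circ g:\mathcal{Z}\to\mathcal{Z}$. Since $g$ is bijective by hypothesis and any admissible encoder $\widetilde{g}^{-1}$ is invertible, $a$ is a composition of bijections, hence bijective; and trivially $\widetilde{g}^{-1}=a\circ g^{-1}$, obtained by right-composing the definition of $a$ with $g^{-1}$. Thus the only nontrivial thing to establish is $a\circ m=m\circ a$, which would place $a$ in $\mathcal{E}$ and yield $\widetilde{g}^{-1}\sim_{\mathcal{E}}g^{-1}$ as claimed.

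For the commutation property I would start from the observational identity $\widetilde{g}\circ m\circ\widetilde{g}^{-1}=g\circ m\circ g^{-1}$ (an identity of maps $\mathcal{U}\to\mathcal{U}$), left-compose both sides with $\widetilde{g}^{-1}$ and right-compose with $g$, tracking domains and codomains so that every composition is well-defined, to obtain
\begin{equation*}
  m\circ(\widetilde{g}^{-1}\circ g)=(\widetilde{g}^{-1}\circ g)\circ m,
\end{equation*}
i.e. $m\circ a=a\circ m$. Hence $a\in\mathcal{E}$ and $\widetilde{g}^{-1}=a\circ g^{-1}$, which is exactly $\widetilde{g}^{-1}\sim_{\mathcal{E}}g^{-1}$. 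I would also note in passing that $\mathcal{E}$ is a group under composition (closure: $m\circ(a_1\circ a_2)=a_1\circ m\circ a_2=(a_1\circ a_2)\circ m$; inverses: $a\circ m=m\circ a$ implies $m\circ a^{-1}=a^{-1}\circ m$), so that ``identification up to $\mathcal{E}$'' is a bona fide equivalence relation on encoders.

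The delicate point — and the one I expect to be the real obstacle — is the precise reading of the observational identity. What the closed-loop self-reconstruction objective actually enforces is that the learnt generator reproduces the \emph{observed} one-step transitions, i.e. $\widetilde{g}\circ m\circ\widetilde{g}^{-1}$ and $g\circ m\circ g^{-1}$ coincide only on the support of the data distribution of $u_t$. The rearrangement above is valid pointwise, so if this support is all of $\mathcal{U}$ the theorem follows verbatim; otherwise one needs an additional richness/continuity assumption (e.g. the orbit of $m$ from the observed initial states is dense in $\mathcal{Z}$, together with continuity of $g,\widetilde{g},m$) to extend the identity from the data support to all of $\mathcal{U}$ before applying the manipulation. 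I would therefore state this extension hypothesis explicitly, mirroring the assumptions of \citet{conference/iclr/ahuja2022}, and then the group-theoretic bookkeeping is immediate.
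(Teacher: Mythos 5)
Your argument is correct and is essentially the paper's own proof of the forward direction: you set $a=\widetilde{g}^{-1}\circ g$, conjugate the observational identity to obtain $a\circ m=m\circ a$, and conclude $\widetilde{g}^{-1}=a\circ g^{-1}$ with $a\in\mathcal{E}$, exactly as in part (a) of the paper's proof (which substitutes $u=g(z)$ rather than right-composing with $g$, a cosmetic difference). The paper additionally proves the converse inclusion --- every $\widetilde{g}=g\circ a^{-1}$ with $a\in\mathcal{E}$ satisfies the observational identity --- which the theorem as stated does not strictly require but which shows the identification up to $\mathcal{E}$ is tight; your closing remarks on the group structure of $\mathcal{E}$ and on the identity holding only on the data support are sensible additions not present in the paper.
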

\begin{proof}
  The statement $\widetilde{g}^{-1}\sim_{\mathcal{E}}{g}^{-1}$ is equivalent to $\widetilde{g}\in\mathcal{D_E}\triangleq\{\widetilde{g}\mid\exists\,a\in\mathcal{E},\  \widetilde{g}=g\circ a^{-1}\}$.
  Therefore, by defining $\mathcal{D}_{\operatorname{id}}\triangleq\{\widetilde{g}\mid\widetilde{g}\circ m\circ \widetilde{g}^{-1}=g\circ m\circ g^{-1}\}$, it suffices to prove $\mathcal{D_E}=\mathcal{D}_{\operatorname{id}}$.

  \noindent \textbf{a)} To prove $\mathcal{D}_{\operatorname{id}}\subset\mathcal{D_E}$, consider a $\widetilde{g}\in\mathcal{D}_{\operatorname{id}}$.
  For any $u\in\mathcal{U}$, we have $g\circ m\circ g^{-1}(u) = \widetilde{g}\circ m\circ \widetilde{g}^{-1}(u)$.
  Thus,
  \begin{equation*}
    \begin{aligned}
      \widetilde{g}^{-1}\circ\left( g\circ m\circ g^{-1}(u)\right) &=  \widetilde{g}^{-1}\circ\left( \widetilde{g}\circ m\circ \widetilde{g}^{-1}(u)\right) \\ 
      &= m\circ\widetilde{g}^{-1}(u).
    \end{aligned}
  \end{equation*}
  Since $g$ is invertible, we can substitute $u$ in the above equation with $u=g(z)$, and obtain for any $z\in\mathcal{Z}$, 
  \begin{equation*}
    \left(\widetilde{g}^{-1}\circ g\right)\circ m\circ \left( g^{-1}\circ g(z)\right) = m\circ \left( \widetilde{g}^{-1}\circ g(z)\right),
  \end{equation*}
  or equivalently
  \begin{equation*}
    \left(\widetilde{g}^{-1}\circ g\right)\circ m(z) = m\circ \left( \widetilde{g}^{-1}\circ g\right)(z).
  \end{equation*}
  Therefore, by denoting $a=\widetilde{g}^{-1}\circ g$, we have $a\in\mathcal{E}$ and $\widetilde{g}=g\circ a^{-1}$.
  Thus, $\widetilde{g}\in\mathcal{D_E}$, and $\mathcal{D}_{\operatorname{id}}\subset\mathcal{D_E}$.

  \noindent \textbf{b)} To prove $\mathcal{D_E}\subset\mathcal{D}_{\operatorname{id}}$, consider a $\widetilde{g}\in\mathcal{D_E}$.
  By definition, one can express $\widetilde{g}=g\circ a^{-1}$ for some $a\in\mathcal{E}$.
  Then, for any $u\in\mathcal{U}$, we can derive
  \begin{equation*}
    \begin{aligned}
      \widetilde{g}\circ m\circ\widetilde{g}^{-1}(u) &= (g\circ a^{-1})\circ m\circ (g\circ a^{-1})^{-1}(u) \\
      &= (g\circ a^{-1})\circ m\circ (a\circ g^{-1})(u) \\ 
      &= g\circ a^{-1}\circ a\circ m\circ g^{-1}(u) \\
      &= g\circ m\circ g^{-1}(u).
    \end{aligned}
  \end{equation*}
  Therefore, $\widetilde{g}\in\mathcal{D}_{\operatorname{id}}$, and $\mathcal{D_E}\subset\mathcal{D}_{\operatorname{id}}$.
\end{proof}
\noindent In our situations, the data generating process is the map $g:[\phi,\bm{Z}_0]\mapsto f(\bm{Z}_0)\circ\phi^{-1}$ where $f$ is the true imaging process, and the latent transition mechanism $m$ over $\mathcal{Z}=\widehat{\mathcal{G}}\times\mathcal{Z}_0$ is from the group $G=\mathcal{G}\times\mathcal{T}$.
From the theorem, if $g$ is bijective, we conclude that the true encoder $h=g^{-1}$ is identifiable up to all equivariances of $m$ which is actually the group $G$ itself, since the subgroups $\mathcal{G}$ and $\mathcal{T}$ are both abelian.
In other words, when $\widetilde{h}^{-1}\circ m\circ\widetilde{h}=h^{-1}\circ m\circ h$, the learnt encoder $\widetilde{h}$ satisfies $\widetilde{h}=\mathfrak{g}\circ h$ for some $\mathfrak{g}=[\phi_{\mathfrak{g}},T_{\mathfrak{g}}]\in G$.
Particularly, for $U=g(\phi,\bm{Z}_0)=f(\bm{Z}_0)\circ\phi^{-1}$, we have
\begin{equation*}
  \begin{aligned}
    [\widetilde{\phi},\widetilde{\bm{Z}}_0] = \widetilde{h}(U) &= \mathfrak{g}(h(U)) \\
    &=\mathfrak{g}\diamond [\phi,\bm{Z}_0] 
    = [\phi_{\mathfrak{g}}\circ\phi,T_{\mathfrak{g}}(\bm{Z}_0)]
  \end{aligned}
\end{equation*}
Substituting the above equation back into the generating process and assuming $\widetilde{f}$ is a learned imaging process such that $U= \widetilde{f}(\widetilde{\bm{Z}}_0)\circ\widetilde{\phi}^{-1}$, we have
\begin{equation*}
  \begin{aligned}
    f(\bm{Z}_0)\circ\phi^{-1} = U &= \widetilde{f}(\widetilde{\bm{Z}}_0)\circ\widetilde{\phi}^{-1} \\
    &=\widetilde{f}(T_{\mathfrak{g}}(\bm{Z}_0))\circ\phi^{-1}\circ\phi_{\mathfrak{g}}^{-1}.
  \end{aligned}
\end{equation*}
Note that $f$, $\widetilde{f}$, and $T_{\mathfrak{g}}$ are all equivariant to $\forall\,\phi\in\mathcal{G}$.
Hence, $f=\widetilde{f}\circ T_{\mathfrak{g}}$ and $\phi_{\mathfrak{g}}=\operatorname{id}$, which can be summarized as the following corollary:
\begin{corollary*}[Identifiability for registration]
  If the learned imaging process $\widetilde{f}$ is equivariant to the diffeomorphic transformations $\phi\in\mathcal{G}$, then the true diffeomorphic registration from the observed image to the common space is identifiable via the observational identity $\widetilde{h}^{-1}\circ m\circ\widetilde{h}=h^{-1}\circ m\circ h$ for any $m\in G$, where $\widetilde{h}$ is the learned encoder, $\widetilde{h}^{-1}$ is the learned decoder, and $h^{-1}=g$ is the true data generating process.
\end{corollary*}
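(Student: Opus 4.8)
The plan is to read off the Corollary from the preceding Theorem on identifiability of latent representations. First I would apply that theorem with the true generating map $g\colon[\phi,\bm{Z}_0]\mapsto f(\bm{Z}_0)\circ\phi^{-1}$ (which is the standing hypothesis $h^{-1}=g$, together with the requirement that $g$ be bijective onto $\mathcal{U}$, i.e.\ $f$ injective and the registration-to-common-space map well defined) and with the latent transition $m$ ranging over $G=\mathcal{G}\times\mathcal{T}$ acting by $\smallsquare$. The theorem then forces any learned encoder $\widetilde{h}=\widetilde{g}^{-1}$ obeying $\widetilde{g}\circ m\circ\widetilde{g}^{-1}=g\circ m\circ g^{-1}$ for all $m\in G$ to satisfy $\widetilde{g}=g\circ a^{-1}$ with $a$ in the centraliser $\mathcal{E}$ of $G$. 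The first real step is to show $G\subseteq\mathcal{E}$: since $\mathcal{G}$ is abelian and $\mathcal{T}\cong\prod_{k=1}^{K}C_2$ is abelian, and $\mathcal{G}$ commutes with $\mathcal{T}$ in the product group, every $\mathfrak{g}\in G$ commutes with every $m\in G$. Hence $\widetilde{h}=\mathfrak{g}\circ h$ for some $\mathfrak{g}=[\phi_{\mathfrak{g}},T_{\mathfrak{g}}]\in G$.

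Next I would unwind this on a concrete observation $U=f(\bm{Z}_0)\circ\phi^{-1}$, for which $\widetilde{h}(U)=\mathfrak{g}\smallsquare[\phi,\bm{Z}_0]=[\phi_{\mathfrak{g}}\circ\phi,\,T_{\mathfrak{g}}(\bm{Z}_0)]=:[\widetilde{\phi},\widetilde{\bm{Z}}_0]$. Because the learned decoder $\widetilde{h}^{-1}$, assembled from the learned imaging process $\widetilde{f}$, must reproduce $U$, we also have $U=\widetilde{f}(\widetilde{\bm{Z}}_0)\circ\widetilde{\phi}^{-1}$, so that
\begin{equation*}
  f(\bm{Z}_0)\circ\phi^{-1}=\widetilde{f}\big(T_{\mathfrak{g}}(\bm{Z}_0)\big)\circ\phi^{-1}\circ\phi_{\mathfrak{g}}^{-1}
\end{equation*}
for every admissible pair $(\phi,\bm{Z}_0)$, since the observational identity holds on all of $\mathcal{U}$.

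The decisive step is to separate the spatial and ontological parts of this identity. Composing both sides on the right with $\phi$ gives $f(\bm{Z}_0)=\widetilde{f}(T_{\mathfrak{g}}(\bm{Z}_0))\circ(\phi^{-1}\circ\phi_{\mathfrak{g}}^{-1}\circ\phi)$; the left-hand side is independent of $\phi$, so comparing the choices $\phi$ and $\operatorname{id}$ forces $\widetilde{f}(T_{\mathfrak{g}}(\bm{Z}_0))\circ\phi_{\mathfrak{g}}^{-1}=\widetilde{f}(T_{\mathfrak{g}}(\bm{Z}_0))\circ(\phi^{-1}\circ\phi_{\mathfrak{g}}^{-1}\circ\phi)$ for all $\phi$ and $\bm{Z}_0$. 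Using that $\widetilde{f}$ is equivariant (indeed point-wise, from the kernel-size-$1$ decoder) and rich enough that its values at two distinct coordinates can be made to differ by a suitable choice of $\bm{Z}_0$, this yields $\phi_{\mathfrak{g}}^{-1}=\phi^{-1}\circ\phi_{\mathfrak{g}}^{-1}\circ\phi$ for all $\phi\in\mathcal{G}$, i.e.\ $\phi_{\mathfrak{g}}$ lies in the centre of $\mathcal{G}$, hence $\phi_{\mathfrak{g}}=\operatorname{id}$. Then $\widetilde{\phi}=\phi$, so the learned encoder returns exactly the true registration to the common space, while the residual relation $f=\widetilde{f}\circ T_{\mathfrak{g}}$ is merely a relabelling of the anatomy channels.

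I expect this last step to be the main obstacle. The purely algebraic observational identity only pins $\widetilde{h}$ down up to an element of $G$, and a nontrivial spatial factor $\phi_{\mathfrak{g}}$ would correspond to registering all images to a \emph{shifted} common frame; excluding it genuinely requires the equivariance of the learned imaging process together with an anchoring convention for the common space --- either triviality of the centre of the diffeomorphism group $\mathcal{G}$ or the zero-mean-velocity normalisation $\bm{v}_j^{+}\leftarrow\bm{v}_j^{+}-\tfrac{1}{N}\sum_j\bm{v}_j^{+}$ used in the architecture --- plus enough richness of $\widetilde{f}$ over the anatomy parameter to cancel it. A secondary point to be careful with is the bijectivity hypothesis on $g$ demanded by the Theorem, which should be stated explicitly among the assumptions.
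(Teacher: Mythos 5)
Your skeleton matches the paper's argument up to and including the displayed identity $f(\bm{Z}_0)\circ\phi^{-1}=\widetilde{f}(T_{\mathfrak{g}}(\bm{Z}_0))\circ\phi^{-1}\circ\phi_{\mathfrak{g}}^{-1}$: apply the preceding theorem with $g:[\phi,\bm{Z}_0]\mapsto f(\bm{Z}_0)\circ\phi^{-1}$ and $m$ ranging over $G$, use abelianness of the subgroups to argue that the equivariances of $m$ are exactly $G$, obtain $\widetilde{h}=\mathfrak{g}\circ h$, and unwind on a concrete observation. Where you diverge is the final step that kills $\phi_{\mathfrak{g}}$. You fix $\bm{Z}_0$, vary $\phi$, and deduce that $\phi_{\mathfrak{g}}$ must commute with every $\phi\in\mathcal{G}$, then appeal to triviality of the centre of the diffeomorphism group. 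This is internally inconsistent with the step you (and the paper) took earlier: to conclude $\mathcal{E}=G$ from the theorem one treats $\mathcal{G}$ as abelian, and if $\mathcal{G}$ is abelian the centrality of $\phi_{\mathfrak{g}}$ is vacuous and yields nothing; if instead $\mathcal{G}$ has trivial centre, it is not abelian and the earlier identification of $\mathcal{E}$ with $G$ no longer follows as stated. You flag this tension yourself, but the fallback you offer (the zero-mean-velocity normalisation) is an architectural convention, not part of the proof.

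The paper closes the argument differently, and its route avoids the group-theoretic tension: it uses the hypothesis of the corollary --- that $f$, $\widetilde{f}$, and $T_{\mathfrak{g}}$ are all \emph{equivariant} to $\phi\in\mathcal{G}$ --- to push the residual warp inside the imaging functions, rewriting both sides as $f(\bm{Z}_0\circ\phi^{-1})=\widetilde{f}\circ T_{\mathfrak{g}}(\bm{Z}_0\circ\phi^{-1}\circ\phi_{\mathfrak{g}}^{-1})$. Since $f$ and $\widetilde{f}\circ T_{\mathfrak{g}}$ act pointwise on the anatomy (this is exactly what the kernel-size-$1$ decoder enforces), comparing the two sides over all anatomies forces $\phi_{\mathfrak{g}}=\operatorname{id}$ and $f=\widetilde{f}\circ T_{\mathfrak{g}}$ simultaneously; no appeal to the centre of $\mathcal{G}$ is needed, only a mild nondegeneracy of $f$ of the kind you already invoke. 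So the missing idea in your proposal is precisely the use of the equivariance hypothesis at the last step --- it is the stated assumption of the corollary, and your argument never actually uses it to eliminate $\phi_{\mathfrak{g}}$. Your observation that bijectivity of $g$ should be listed explicitly among the assumptions is correct and is also how the paper phrases it (``if $g$ is bijective'').
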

Note that condition $\widetilde{h}^{-1}\circ m\circ\widetilde{h}=h^{-1}\circ m\circ h$ for any $m\in G$ implies that $\widetilde{h}^{-1}\circ\widetilde{h}=\operatorname{id}$, which indicates that better reconstruction ability may improve the identifiability or accuracy of the registration.
This may provoke a tradeoff between the reconstruction fidelity and the equivariance constraint on the decoder.
We expect future work to be addressed on this interesting perspective.

\section{Additional Results and Visualisation}
\subsection{Multi-Modal \& Intersubject Groupwise Registration}
\cref{fig:violin_sup} presents the violin plots of the evaluation metrics for the compared methods on all test groups of the four datasets.

\crefrange{fig:vis_mscmr_sup}{fig:vis_oasis_sup} visualise the registration results on a test image group from each dataset using different registration methods.
The predicted backward transformation fields are also presented as colorized grid maps, whose Jacobian determinant maps are also visualized. 
For the proportions of displacement voxels with negative Jacobian, \cref{tab:ratio_neg_jac_results} presents the results for all the four datasets, in addition to the values on the OASIS dataset summarised in the manuscript.

In addition to the visualization of multi-level deformations from our model \emph{Ours-CD} on the MSCMR dataset, we visualize results from the Learn2Reg, BraTS-2021 and OASIS datasets in  \cref{fig:multilevel_disps_l2r}, \cref{fig:multilevel_disps_brats} and \cref{fig:multilevel_disps_oasis}, respectively. We observe a phenomenon similar to MSCMR: the displacements at different levels exhibit different degrees of smoothness, with larger-level $\{\phi_m^l\}_m$ placing greater emphasis on local fine-grained distortion. The deformations $\{\phi_m^1,\phi_m^2\}_m$ for the OASIS image group and the deformation $\{\phi_m^1\}_m$ for the BraTS-2021 group tend to be close to zero, probably because misalignments in these groups primarily occur among local tissues rather than affecting the entire brain structure, as indicated by the original images. In conclusion, these figures of multi-level deformations showcase the effectiveness of our hierarchical decomposition strategy under various imaging conditions.

\subsection{Scalability Test on Large-scale and Variable-size Image Groups}

\cref{fig:scalability_sup} presents the evaluation metrics versus $N_{\text{test}}$ on different datasets.

\cref{fig:mscmr_scalability} demonstrates the result of groupwise registration on a very large image group (120 images from the MS-CMRSeg dataset in total). \cref{fig:l2r_scalability}, \cref{fig:brats_scalability}, \cref{fig:oasis_scalability} further present examples of large-scale groupwise registration results for the Learn2Reg (group size of 16), BraTS (group size of 16) and OASIS (group size of 20) datasets, respectively.
One can observe that the proposed model $\emph{Ours-CD}$ could accurately register all these images, while conventional iterative or learning-based methods can hardly deal with such large group sizes, highlighting the applicability and efficiency of our methods.

\cref{fig:oasis_age_group} displays the mean image of each registered age group in the OASIS dataset using different groupwise registration methods.
One can observe that the initial mean images of the unregistered images are fuzzy around the cortical surface. 
However, after registration using the iterative methods (APE/CTE/$\mathcal{X}$-CoReg), the mean images become sharper, and the proposed models (\emph{Ours-CN}/\emph{Ours-CD}) could produce even sharper mean images, especially for relatively large image groups in the regions marked by the red circles.

\subsection{Model Interpretability}

\cref{fig:features_brats_sup} presents an example of the learnt structural representation maps for the BraTS data extracted from the models \emph{Ours-CN} and \emph{Ours-CD}, respectively.
Notably, one can find that complementary regions of the brain are reflected on these representation maps, which are similar across the modalities.
Besides, note that these structural representations corresponded spatially to the original images, and thereby are independent of the registration variables.
Likewise, \cref{fig:features_mscmr} presents an example of the learnt structural representation maps for the MS-CMRSeg data from the models \emph{Ours-CN} and \emph{Ours-CD}, respectively.
One can observe that the representations extracted from \emph{Ours-CN} may be more semantic and high-level, while those from \emph{Ours-CD} could be more detailed and local.

\subsection{Multi-Modal Counterfactual Reconstructions}

\revb{}{We provide additional visualisations to show that our method can capture common anatomy. \cref{fig:mscmr_counterfactual} presents the counterfactual reconstruction results of an MS-CMRSeg image group by Ours-CD. In particular, the fourth row shows different channels of the latent structure $\boldsymbol{z}$, and the three images in the last column are the reconstruction results from the decoder based on $\boldsymbol{z}$. By setting a channel of $\boldsymbol{z}$ to zeros, \emph{i.e.}, imposing an ontological transformation by $do(\boldsymbol{z}_k=0)$, we can obtain the corresponding counterfactual reconstructions from the decoder, which are shown in the first three rows of the figure. 
In addition, the differences between the reconstructions without and with ontological transformations (\emph{a.k.a.} the direct effect) are shown in the last three rows.}{R2.4} 

\revb{}{In our model design, $\boldsymbol{z}$ is supposed to capture the common anatomy in the latent space, and the decoder aims to reconstruct from it the registered images.  One can first observe from the fourth row that $\boldsymbol{z}$ captures the structures of this image group mainly in two channels, where one channel captures the background and the internal region of the left and right ventricles, and the other captures the boundary of the ventricle, as well as the myocardium. This indicates that the model learnt to highlight focus areas that align with relevant anatomical structures.}{} 

\revb{}{Moreover, counterfactual reconstructions based on ontological transformations illustrate how each channel of $\boldsymbol{z}$ affects the reconstruction results: When a channel is set to zeros, the affected regions of the three reconstructed modalities are nearly the same as the foreground region of that channel (note that the effect of $do(\boldsymbol{z}_4=\boldsymbol{0})$ is small for the third modality, just because the ventricles naturally appear dark in this modality). In other words, a channel of $\boldsymbol{z}$ exclusively corresponds to a specific anatomy (e.g., internal area of the ventricles) shared by all three image modalities, which demonstrates that the proposed method can capture the common anatomy.}{} 

\subsection{Comparisons of Time Costs of Baseline and Proposed Methods}

\revb{}{Information about the time costs of different methods for the MS-CMRSeg dataset is presented in \cref{tab:mscmr_time}.}{R2.5}

\subsubsection{Time Costs for Achieving the Best Performance}

\revb{}{Iterative methods APE, CTE and $\mathcal{X}$-CoReg do not need training, so their time values correspond to the convergence on the test set, which typically took hours on a GPU, much slower than the test time costs (seconds to several minutes) of learning-based methods such as the proposed one on the same GPU.}{}

\revb{}{Besides, the training times of different variants of our model are comparable to the learning-based baselines. These deep learning methods took 1-3 days. Our four variants are all faster than the baseline method APE-Att; the time costs of Ours-CD and Ours-CN are close to the best baseline CTE-Att.}{} 

\subsubsection{Time Costs for Achieving the Performance of the Best Baseline}

\revb{}{The performance of both Ours-CN and Ours-CD surpass the best baseline CTE-Att. One can observe that it took 1 day for CTE-Att to achieve its best performance, while ours took only 5 hours to achieve the same performance. This indicates that our method requires significantly less training time to reach the same level of performance as the best baseline method.}{}

\subsubsection{Time Costs for Achieving 95\% of the Best Performance}

\revb{}{Our methods (expect for Ours-PN) took just 2-3 hours to reach 95\% of their own best performance. For example, compared to the time to achieve 100\%, Ours-CN (Ours-CD) only took 7\% (9\%) of the training time to achieve 95\% performance. This indicates that the convergence speed of the proposed method is relatively fast.}{}

\revb{}{In conclusion, the comparisons of time costs demonstrate that
\begin{itemize}
    \item Our method achieved optimal performance with a training time comparable to that of baseline methods.
    \item When attaining the same performance (optimal value by the baseline methods), our method is much faster. 
    \item Our method is highly efficient, achieving near-optimal results in a short period of time.
\end{itemize}}{}

\subsection{Comparisons of Time Costs of Proposed Method Integrated With State-of-the-Art Registration Backbones}

\subsubsection{Training Time}

In \cref{tab:training_time} we report the training time required to reach (i) the best Dice score (DSC) of each method, (ii) 95\% of the best DSC, and (iii) the best DSC achieved by the strongest baseline method (CTE-Att).

The results indicate that our model maintains competitive training efficiency when integrated with state-of-the-art registration backbones. In particular, integration with TransMorph leads to significantly shorter training time for reaching either 95\% of the best DSC or the best baseline DSC, highlighting its superior convergence speed. The total training time is also comparable to our original Att-UNet version, while achieving better registration performance. Similar trends are observed when using ModeT as the backbone.

The only substantial increase in training time occurs with PIViT. We would like to note that this overhead is not inherent to our model. During experiments with PIViT, we observed training instability, which was mitigated by reducing the learning rate from $10^{-3}$ to $10^{-4}$. While this change stabilized convergence, it also led to a longer training process. Notably, such instability was not encountered with TransMorph or ModeT, suggesting that the issue is more likely attributable to PIViT’s internal training dynamics rather than any limitation in our integration mechanism.

\subsubsection{Inference Time}

In \cref{tab:inference_time}, we compare the inference time of the original model and its integrated versions. It can be observed that our model achieves very fast inference speed, requiring only about 0.04 seconds per image group from the MS-CMRSeg dataset. Remarkably, integrating our model with state-of-the-art registration backbones does not introduce any inference time overhead. This demonstrates the efficiency and practicality of our framework for real-time medical image processing. Furthermore, the consistent inference speed across different integration settings highlights the modularity of our method, making it readily adaptable to various registration pipelines without compromising performance.

Overall, these results demonstrate that our framework can be integrated with various state-of-the-art registration backbones without introducing substantial training or inference time overhead. This confirms that our method is both flexible and computationally efficient in practice.

\subsection{Ablation Study of Reconstruction Fidelity}

We performed ablation experiments using Ours-CD on the MS-CMRSeg dataset to quantify the trade-off of reconstruction fidelity and registration performance. The results are shown in \cref{tab:recon_ablation}.

First, we replaced the decoder kernel size from 1 to 3. The results showed that kernel size 1 achieved slightly higher average DSC and ASSD compared to kernel size 3, confirming that the minor loss in reconstruction fidelity does not compromise registration performance. Considering kernel size 3 may introduce theoretical issues as discussed earlier, choosing kernel size 1 is thus well justified to ensure theoretical soundness without compromising performance.

Second, we conducted experiments where we kept the decoder kernel size at 1, but progressively reduced the weight of the reconstruction loss.
We observed that as the reconstruction loss weight decreased, the registration performance also declined.
This confirms that while photorealistic reconstruction is not the goal, the reconstruction loss provides a critical learning signal that supports accurate registration.

  In conclusion, the trade-off is real, but it is one we make deliberately and advantageously. By sacrificing a small degree of reconstruction fidelity, we gain a significant improvement in the identifiability and accuracy of the geometric transformations. 
  This is an example of how imposing a strong, domain-appropriate inductive bias can guide a learning system to a more robust and accurate solution for its primary task.

  \subsection{Ablation Study of Hierarchical Level $L$}

We conducted a parameter study for $L$ while keeping all other hyperparameters fixed. Specifically, we trained Ours-CD with $L = 3, 4, 5, 6$ on the MS-CMRSeg dataset, and report the corresponding average Dice scores (DSC) on the test set:

\begin{itemize}
\item $L = 3$: 0.782
\item $L = 4$: 0.802
\item $L = 5$: \textbf{0.836}  
\item $L = 6$: 0.801
\end{itemize}

These results demonstrate that our default choice of $L = 5$ yields the best performance, with both smaller and larger $L$ values leading to a noticeable drop.

This trend can be attributed to several interrelated factors:

\begin{enumerate}
\item \textit{Architectural Alignment with Hierarchical Inference.} In our current implementation, the number of levels $L$ is tightly coupled with the depth of the network (based on an Attention U-Net). A smaller $L$ results in a shallower network with fewer downsampling stages, reducing its representational capacity and limiting its ability to model complex deformations—leading to underfitting. Conversely, increasing $L$ adds deeper layers and coarser levels, which do not always translate to better performance, because larger $L$ values introduce additional layers and parameters, increasing the risk of overfitting and making optimization more challenging. 

\item \textit{Overly Coarse Deformations at Large $L$.} At $L = 6$, the coarsest feature map operates at a spatial resolution of only $5 \times 5$ (for a $160 \times 160$ input). The velocity field at this scale is tasked with modeling extremely global deformations, which are not always necessary and may introduce instability or optimization noise. In practice, such coarse scales often fail to yield meaningful gradient signals.

\item \textit{Diminishing Returns Beyond Optimal Hierarchical Depth.} The hierarchical decomposition is most effective when each scale contributes interpretable sub-fields corresponding to different spatial frequencies. When $L$ becomes too large, the added coarse levels may no longer correspond to meaningful anatomical structures, reducing the efficacy of the decomposition and potentially disrupting optimization.
\end{enumerate}

We would also like to emphasize that these limitations are not inherent to the proposed framework itself, but rather a by-product of our current architectural implementation. In principle, the model design allows for more flexible architectures that decouple $L$ from network depth. For example:

\begin{itemize}
\item One could avoid strict $2 \times$ downsampling at each level, maintaining higher spatial resolutions at deeper levels;
\item One could share network parameters across levels to avoid increasing the parameter count with $L$.
\end{itemize}

We believe that these modifications could allow for a further scaling of $L$ without the drawbacks observed here, and we plan to explore architectural variants that better adapt to varying $L$ in future work.

\section{Balancing Weights of the Loss Terms}

The balancing weights of the loss terms for different training datasets are presented in \cref{tab:balance_weights}.
We also added a local normalised cross-correlation (LNCC) term \cite{journal/media/avants2008} to improve reconstruction on brain images, with its weight marked by values in parentheses.

\begin{table}[h]
  \scriptsize
  \centering
  \caption{This table presents the respective weights of the reconstruction loss, intrinsic structural distance and registration regularization for different experimental setups.}
  \begin{tabular}{C{1.05cm}|C{1.6cm}C{1.25cm}C{1.2cm}C{1.65cm}}
    \toprule
    Method & BraTS & MS-CMR & Learn2Reg & OASIS  \\
    \midrule
    Ours-PN & 120 (60), 75, 15 & 120, 160, 10 & N/A & N/A  \\ 
    Ours-CN & 120 (60), 75, 20 & 120, 160, 10 & 120, 80, 4 & 120 (20), 40, 4  \\ 
    Ours-PD & 120 (60), 120, 8 & 120, 160, 10 & N/A & N/A  \\ 
    Ours-CD & 120 (60), 60, 8 & 120, 160, 10 & 120, 80, 1 & 120 (20), 20, 0.1  \\ 
    \bottomrule
  \end{tabular}
  \label{tab:balance_weights}
\end{table}

\begin{figure*}[t]
  \centering
  \includegraphics[width=\linewidth]{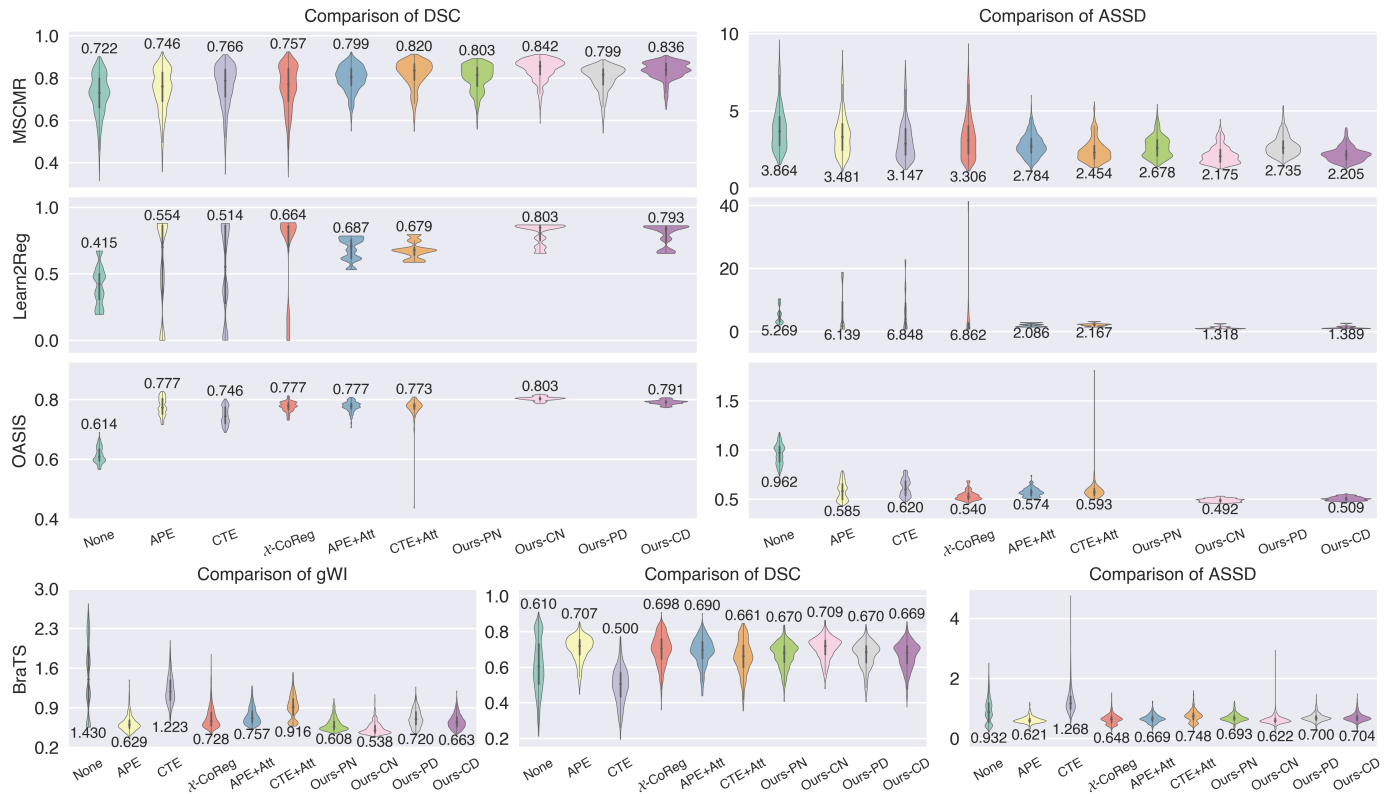}
  \caption{Quantitative evaluation metrics of the compared methods on the test groups of the four datasets. The mean values from each method are indicated.}
  \label{fig:violin_sup}
\end{figure*}

\begin{figure*}[t]
  \centering
  \includegraphics[width=\textwidth]{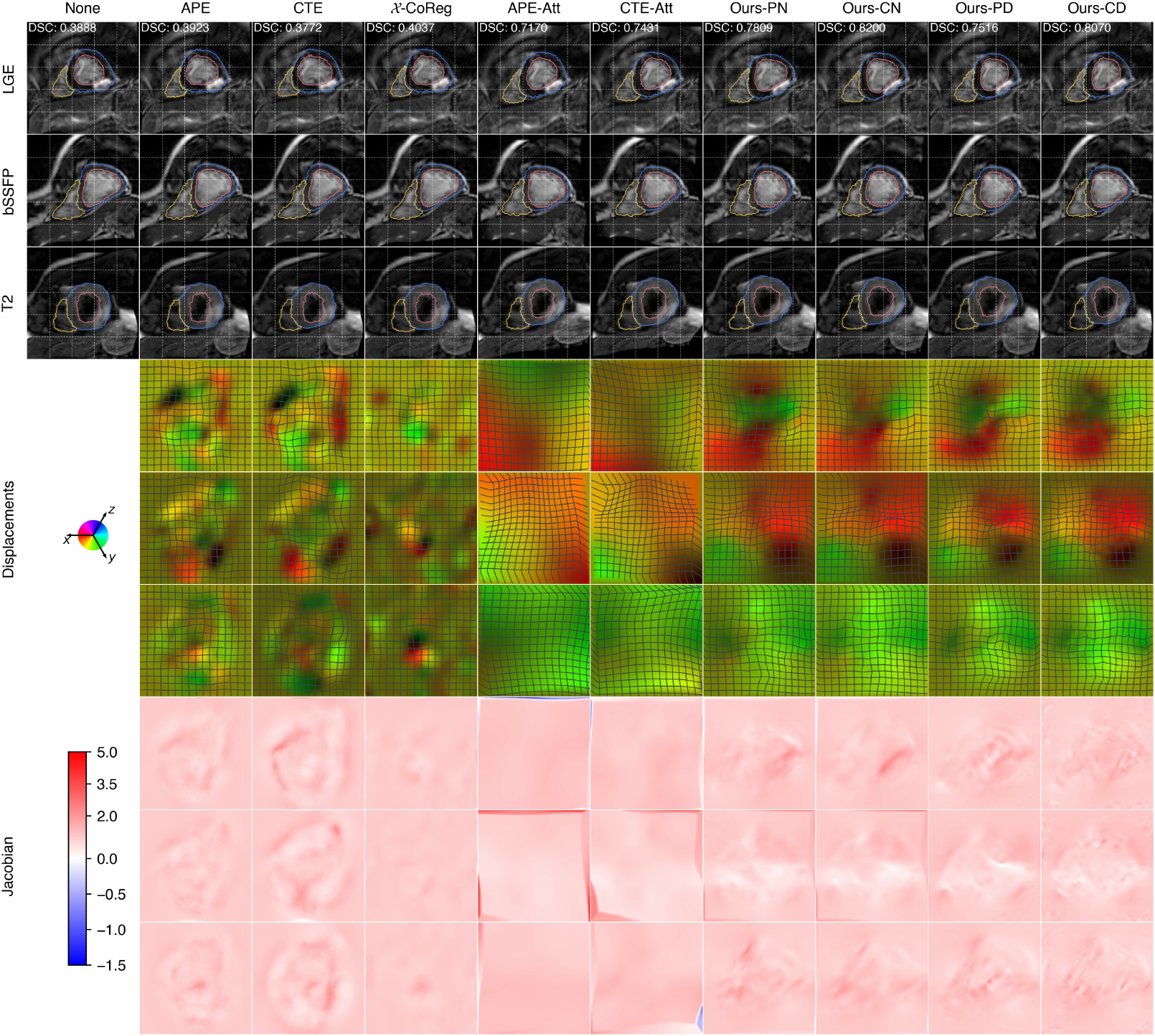}
  \caption{Results of an image group from the MS-CMRSeg dataset. The mean DSCs of all foreground classes in this group are shown for each method.}
  \label{fig:vis_mscmr_sup}
\end{figure*}

\begin{figure*}[t]
  \centering
  \includegraphics[width=\textwidth]{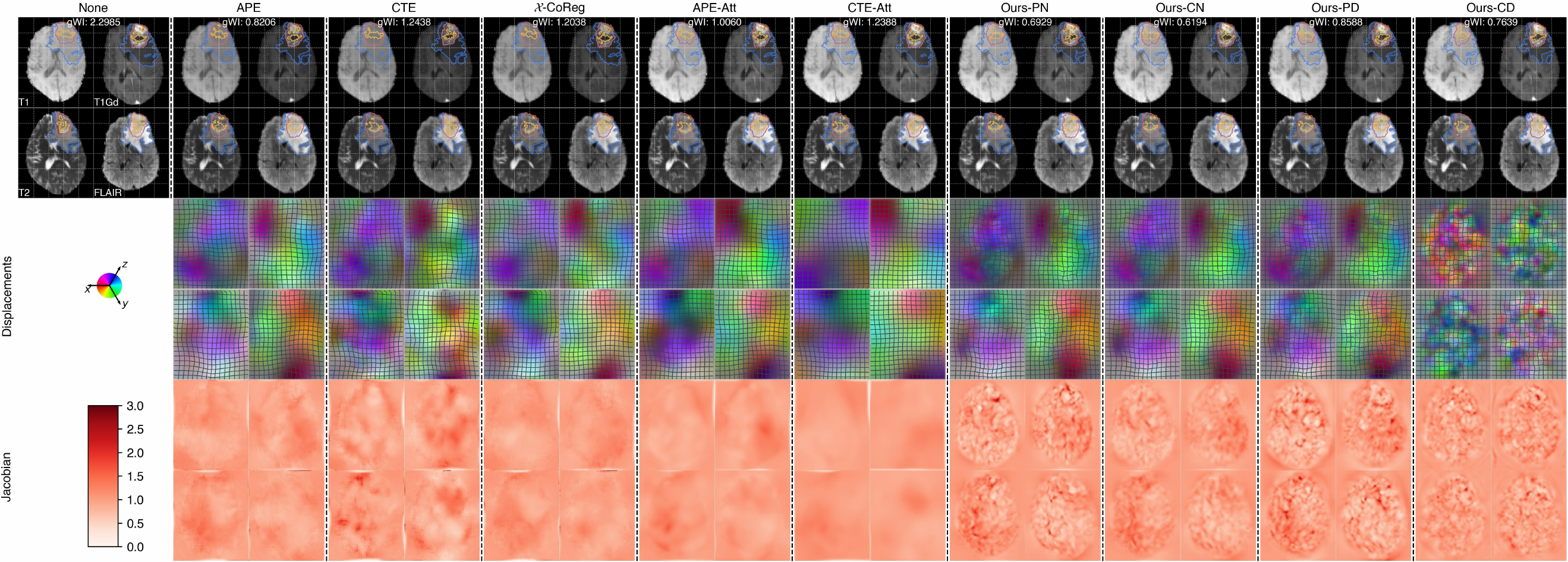}
  \caption{Results of an image group from the BraTS-2021 dataset. The mean gWIs on this group are shown for each method.}
  \label{fig:vis_brats_sup}
\end{figure*}

\begin{figure*}[t]
  \centering
\includegraphics[width=\textwidth]{vis_learn2reg.pdf}
  \caption{Results of an image group from the Learn2Reg dataset. The mean DSCs of all foreground classes in this group are shown for each method.}
  \label{fig:vis_learn2reg_sup}
\end{figure*}

\begin{figure*}[t]
  \centering
  \includegraphics[width=\textwidth]{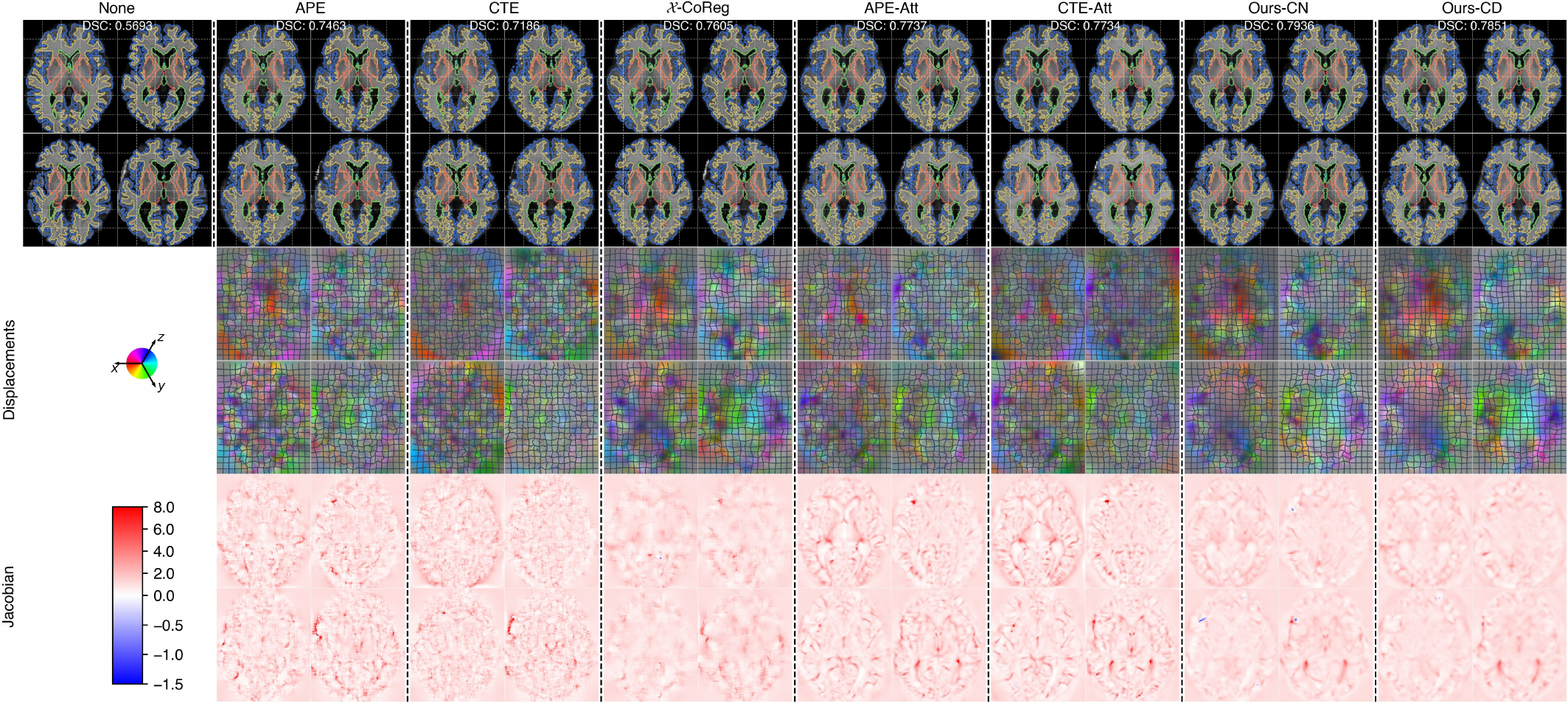}
  \caption{Results of an image group from the OASIS dataset. The mean DSCs of all foreground classes in this group are shown for each method.}
  \label{fig:vis_oasis_sup}
\end{figure*}

\begin{table*}[t]
  \centering
  \caption{The proportions (in \%) of voxels with negative Jacobian determinants in the displacements predicted by different methods on the MS-CMRSeg, Learn2Reg and BraTS-2021 datasets. The values were first calculated for the foreground region of each registered image and then averaged over all images among all test groups.}
  \begin{tabular}{C{2.5cm}|C{3cm}C{3cm}C{3cm}C{3cm}}
    \toprule
    Method & MS-CMRSeg & Learn2Reg &  BraTS-2021 & OASIS  \\
    \midrule
    APE \cite{journal/tpami/wachinger2012} & 
    $0.0002\pm 0.0010$ & $6.945\pm 3.973$  & $0.0002\pm0.0009$ & $0.5297\pm0.0814$\\
    CTE \cite{journal/media/polfliet2018}  & 
    $0.0143\pm 0.0392$ & $21.78\pm 5.445$  & $0.0022\pm 0.0065$ &$0.2291\pm0.0447$  \\
    $\mathcal{X}$-CoReg \cite{journal/tpami/luo2022} & 
    $0.0020\pm 0.0001$ & $10.91\pm 19.93$  & $0.0001\pm 0.0007$ &$0.1330\pm 0.0404$\\
    \hdashline\noalign{\vskip 0.5ex}
    APE-Att & 
    $0.0000\pm0.0000$ & $0.0000\pm0.0000$  & $0.0000\pm0.0000$ & $0.0479\pm0.0130$\\
    CTE-Att & 
    $0.0000\pm0.0000$ & $0.0000\pm0.0000$  & $0.0000\pm0.0000$ & $0.0552\pm 0.0154$\\
    \hdashline\noalign{\vskip 0.5ex}
    Ours-PN & $0.0000\pm0.0000$ & N/A & $0.0000\pm 0.0003$ & N/A\\ 
    Ours-CN & $0.0000\pm0.0000$ & $0.0000\pm 0.0001$ & $0.0000\pm0.0000$ & $0.1746\pm 0.0305$ \\ 
    Ours-PD & $0.0000\pm0.0000$ & N/A  & $0.0000\pm 0.0000$ & N/A \\ 
    Ours-CD & $0.0000\pm0.0000$ & $0.0025\pm 0.0041$  & $0.0000\pm 0.0000$ & $0.0066\pm 0.0029$\\ 
    \bottomrule
  \end{tabular}
  \label{tab:ratio_neg_jac_results}
\end{table*}

\begin{figure*}[t]
  \centering
\includegraphics[width=\linewidth]{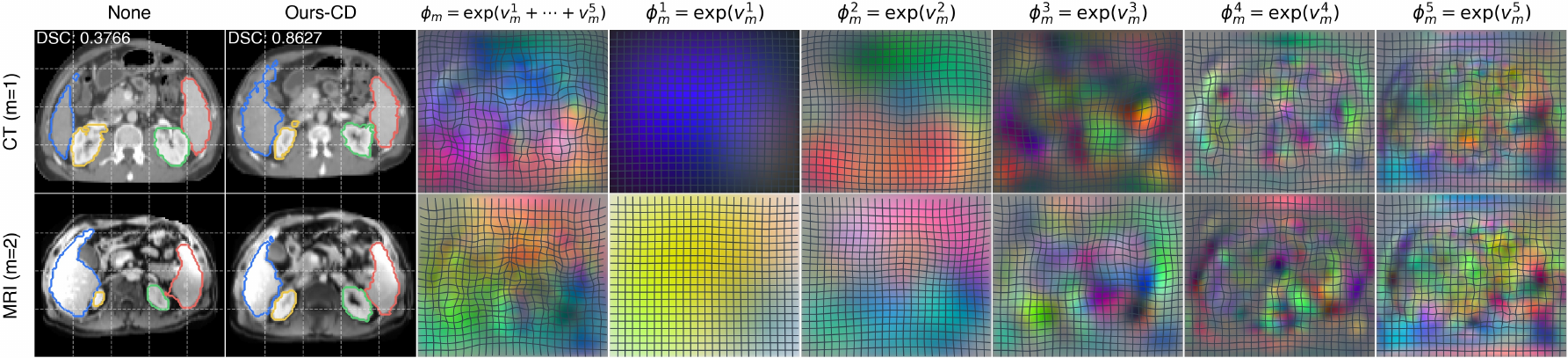}
  \caption{Multi-level deformations from our model \emph{Ours-CD} on an image group from Learn2Reg.}
\label{fig:multilevel_disps_l2r}
\end{figure*}

\begin{figure*}[t]
  \centering
\includegraphics[width=\linewidth]{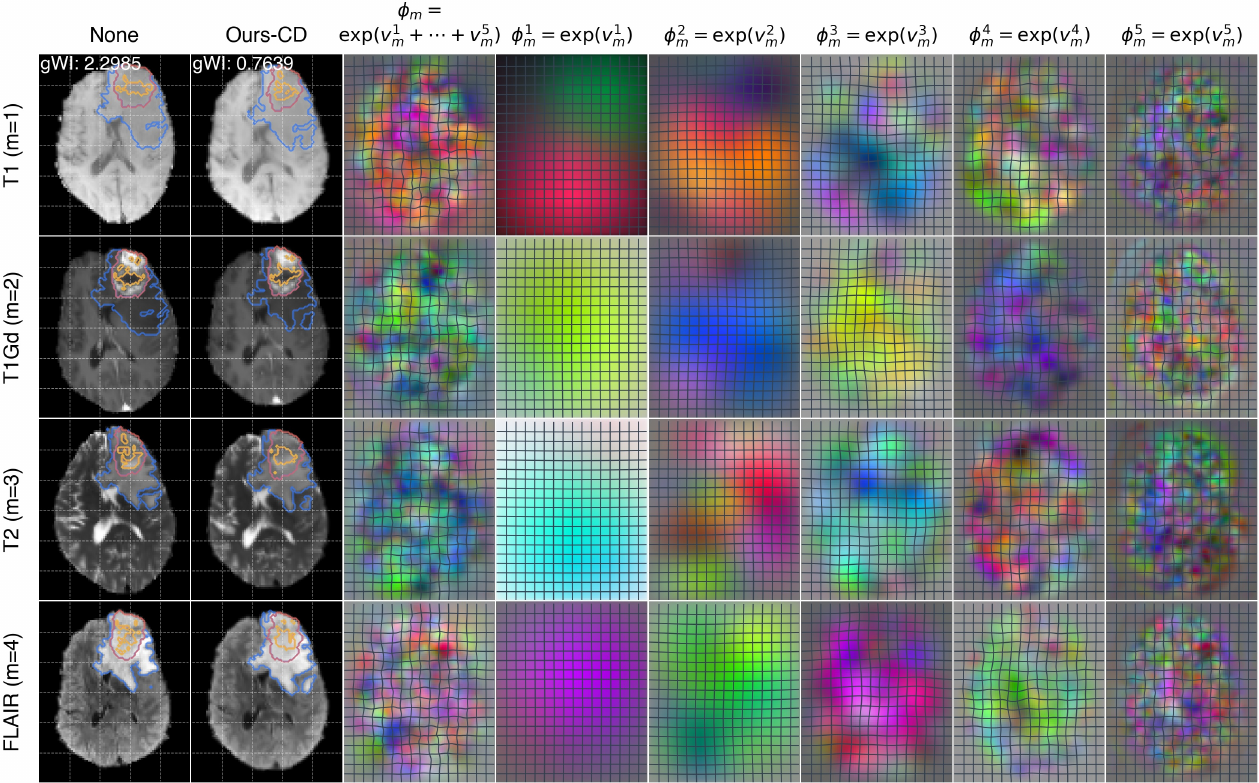}
  \caption{Multi-level deformations from our model \emph{Ours-CD} on an image group from BraTS-2021.}
\label{fig:multilevel_disps_brats}
\end{figure*}

\begin{figure*}[t]
  \centering
\includegraphics[width=\linewidth]{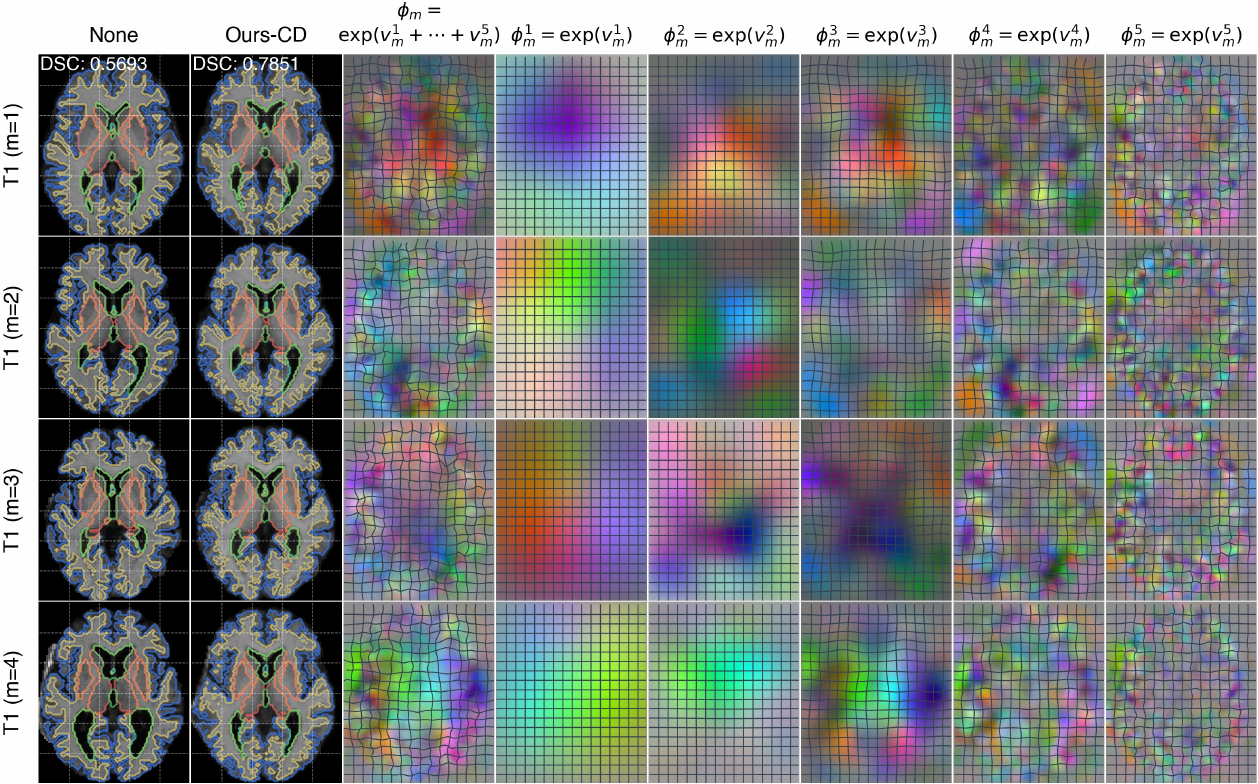}
  \caption{Multi-level deformations from our model \emph{Ours-CD} on an image group from OASIS.}
\label{fig:multilevel_disps_oasis}
\end{figure*}

\begin{figure*}[t]
  \centering
  \includegraphics[width=\linewidth]{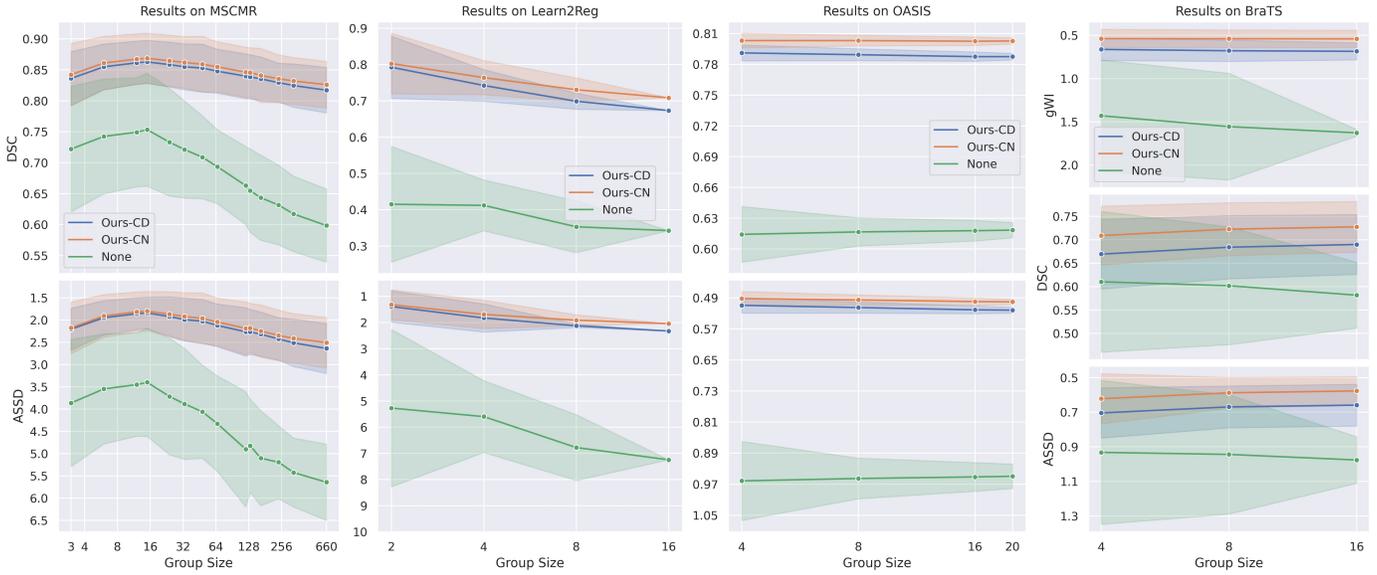}
  \caption{Evaluation metrics (mean values with one standard deviation bands) of registration results on image groups with different sizes.}
  \label{fig:scalability_sup}
\end{figure*}

\begin{figure*}
    \centering
    \includegraphics[width=\textwidth]{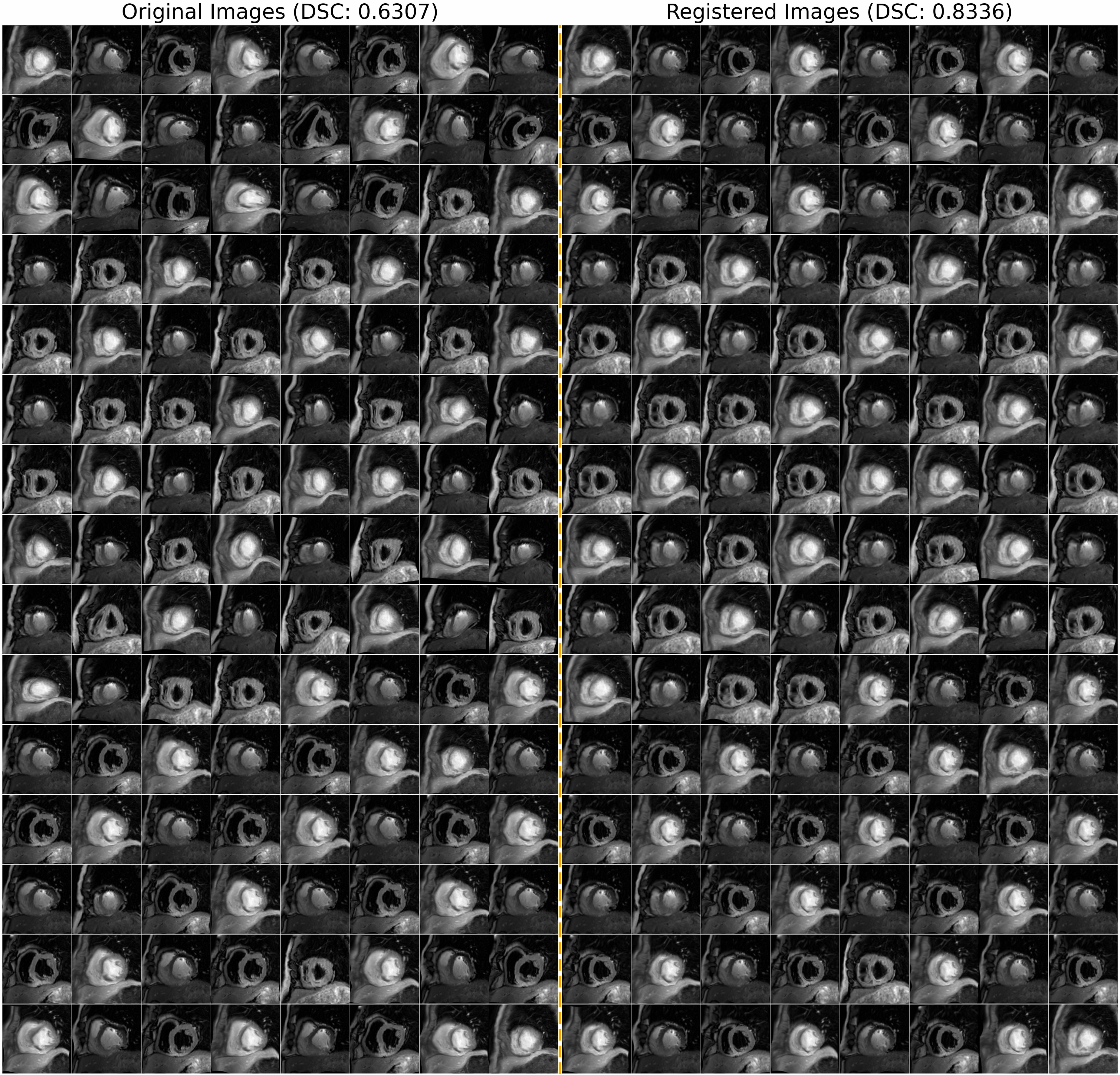}
    \caption{An example of co-registration by our model (Ours-CD) for a group of 120 images from the MS-CMRSeg dataset.}
    \label{fig:mscmr_scalability}
\end{figure*}

\begin{figure*}
    \centering
    \includegraphics[width=\textwidth]{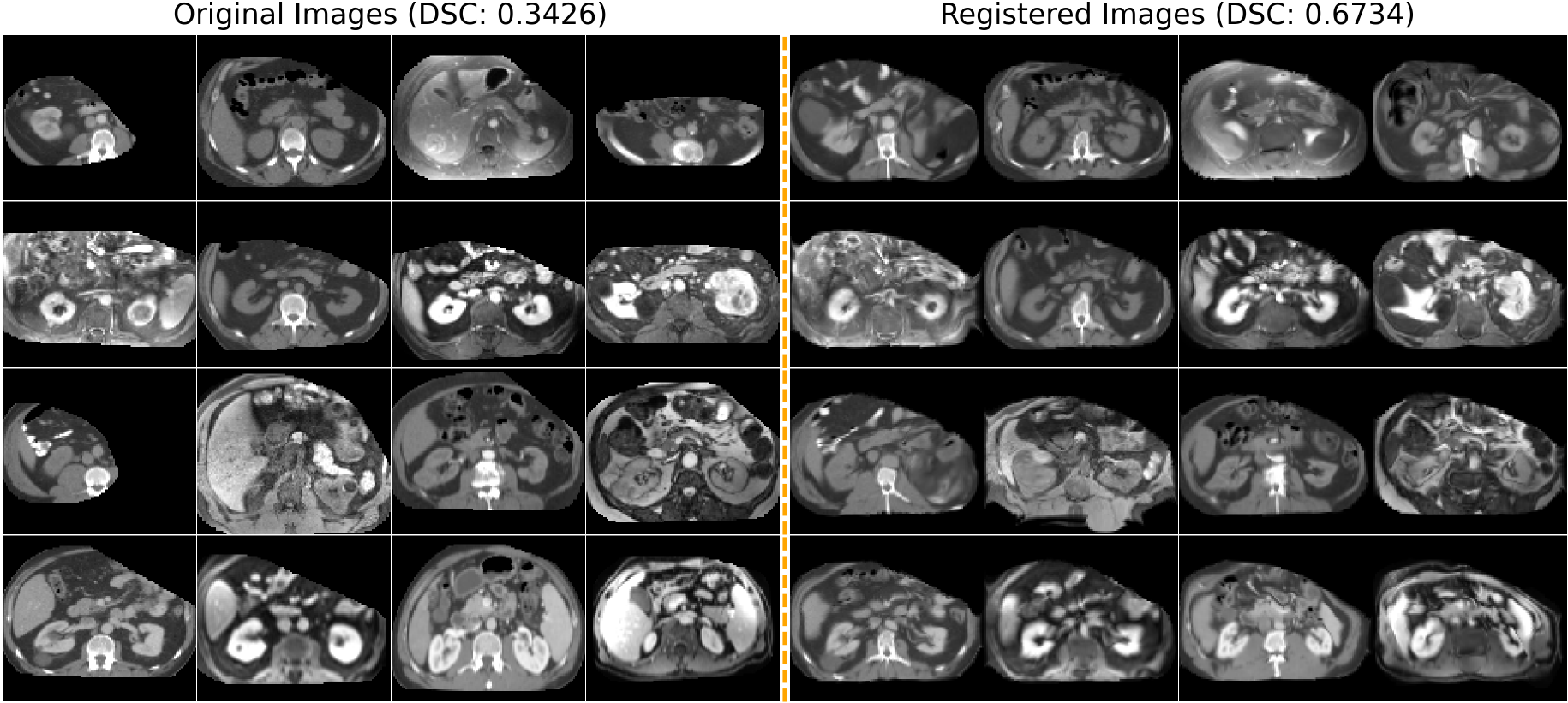}
    \caption{An example of co-registration by our model (Ours-CD) for the group of all 16 3D test images of the Learn2Reg dataset.}
    \label{fig:l2r_scalability}
\end{figure*}

\begin{figure*}
    \centering
    \includegraphics[width=\textwidth]{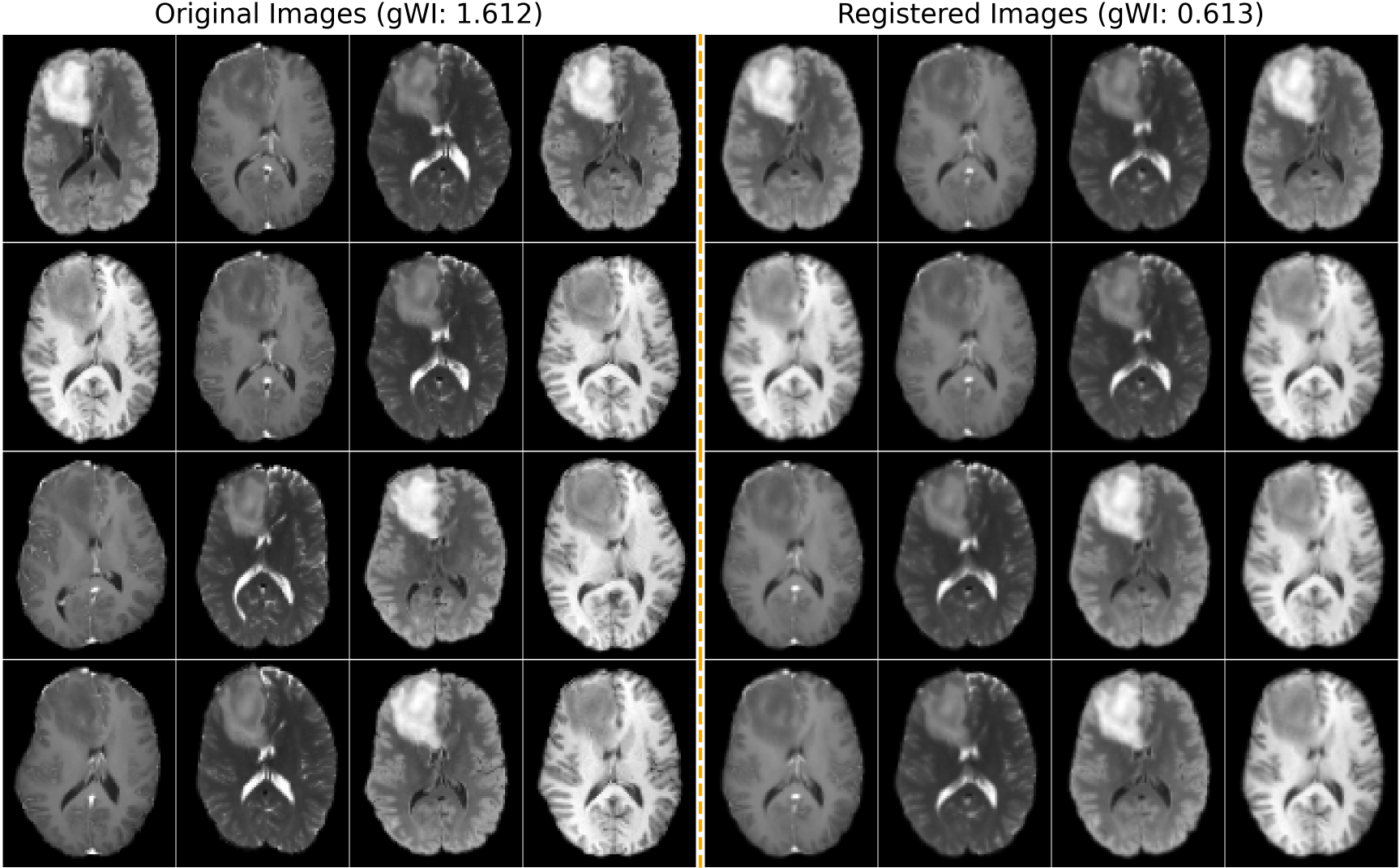}
    \caption{An example of co-registration by our model (Ours-CD) for a group of 16 3D images from the BraTS dataset.}
    \label{fig:brats_scalability}
\end{figure*}

\begin{figure*}
    \centering
    \includegraphics[width=\textwidth]{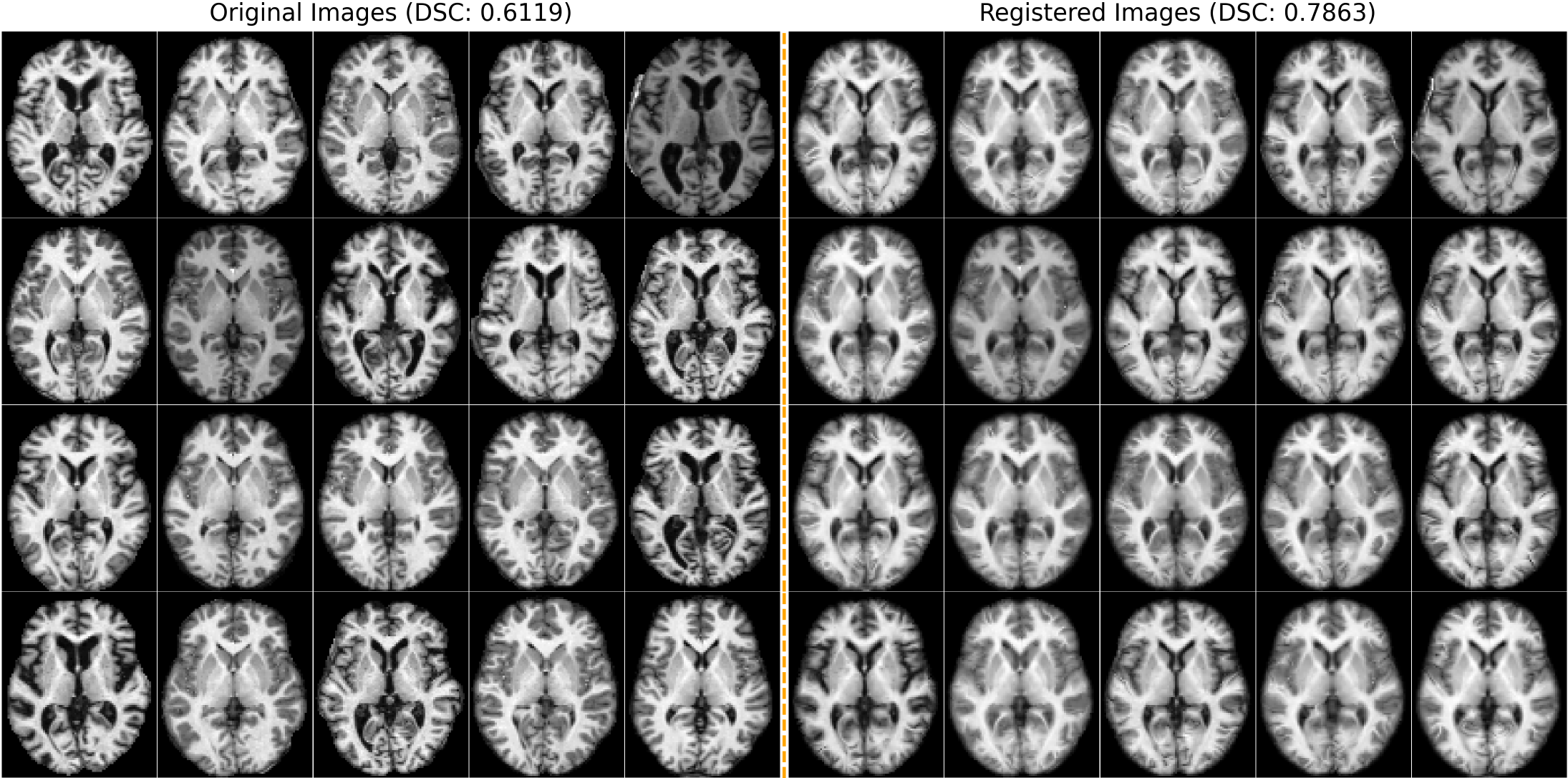}
    \caption{An example of co-registration by our model (Ours-CD) for a group of 20 3D images from the OASIS dataset.}
    \label{fig:oasis_scalability}
\end{figure*}

\begin{figure*}
    \centering
    \includegraphics[width=0.8\textwidth]{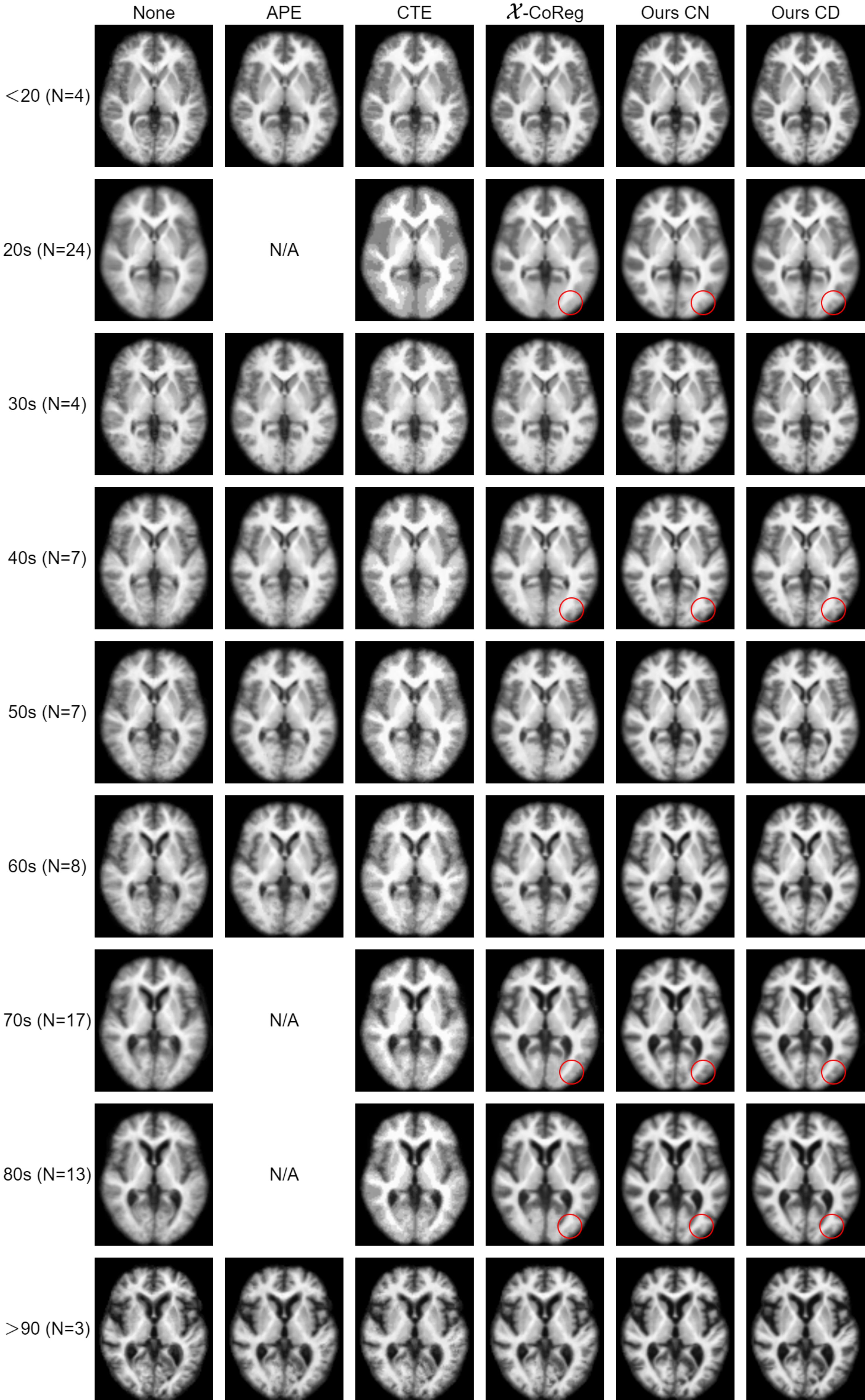}
    \caption{The mean registered images of each age group obtained from different groupwise registration methods.
    One can observe that the proposed models could preserve more anatomical details.
    Note that APE was not available on large image groups due to its quadratic computational complexity relative to the group sizes.}
    \label{fig:oasis_age_group}
\end{figure*}

\begin{figure*}[t]
  \centering
  \begin{subfigure}{\textwidth}
    \centering
    \includegraphics[width=\textwidth]{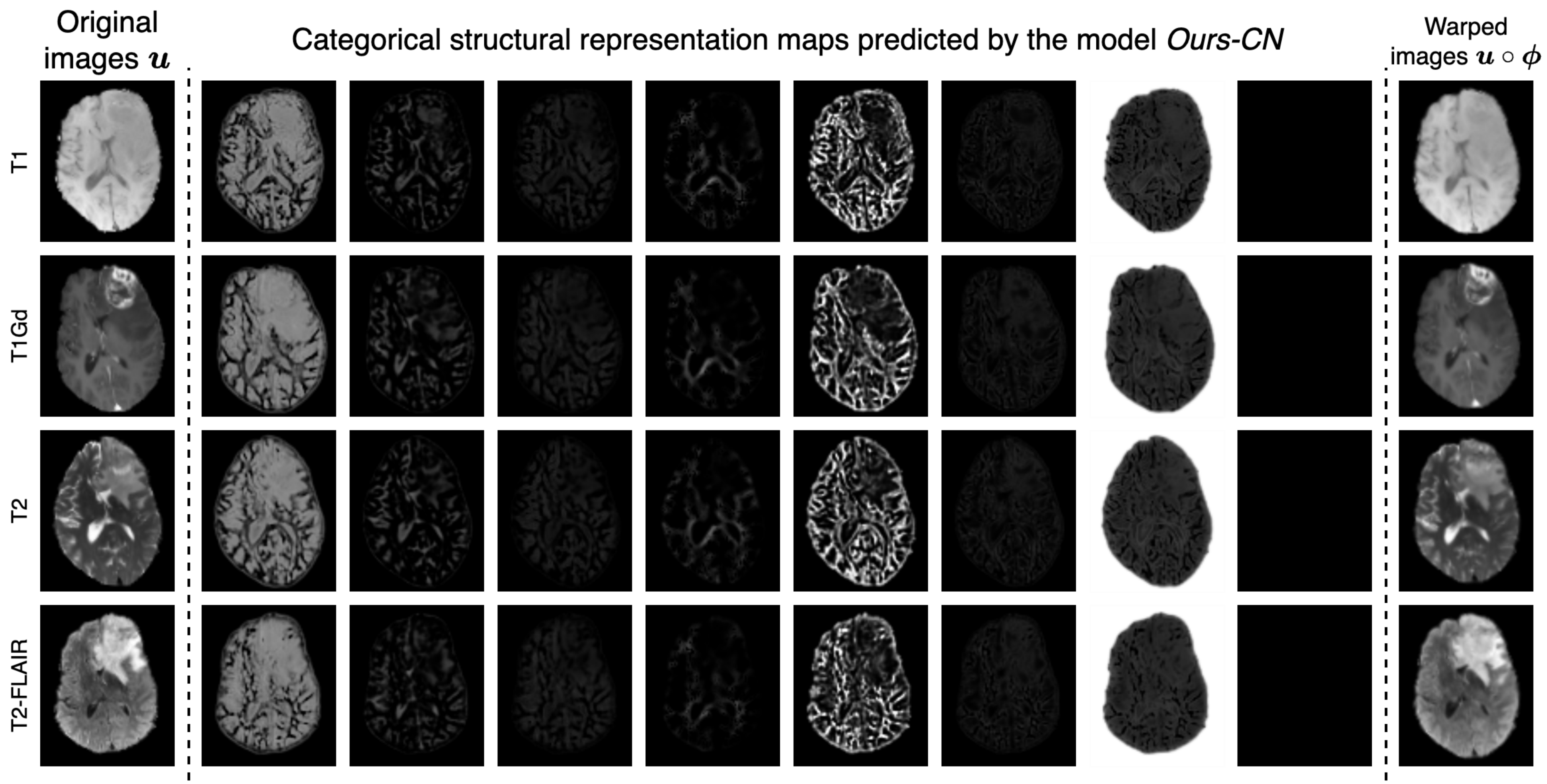}
  \end{subfigure}
  \\
  \vspace{0.5cm}
  \begin{subfigure}{\textwidth}
    \centering
    \includegraphics[width=\textwidth]{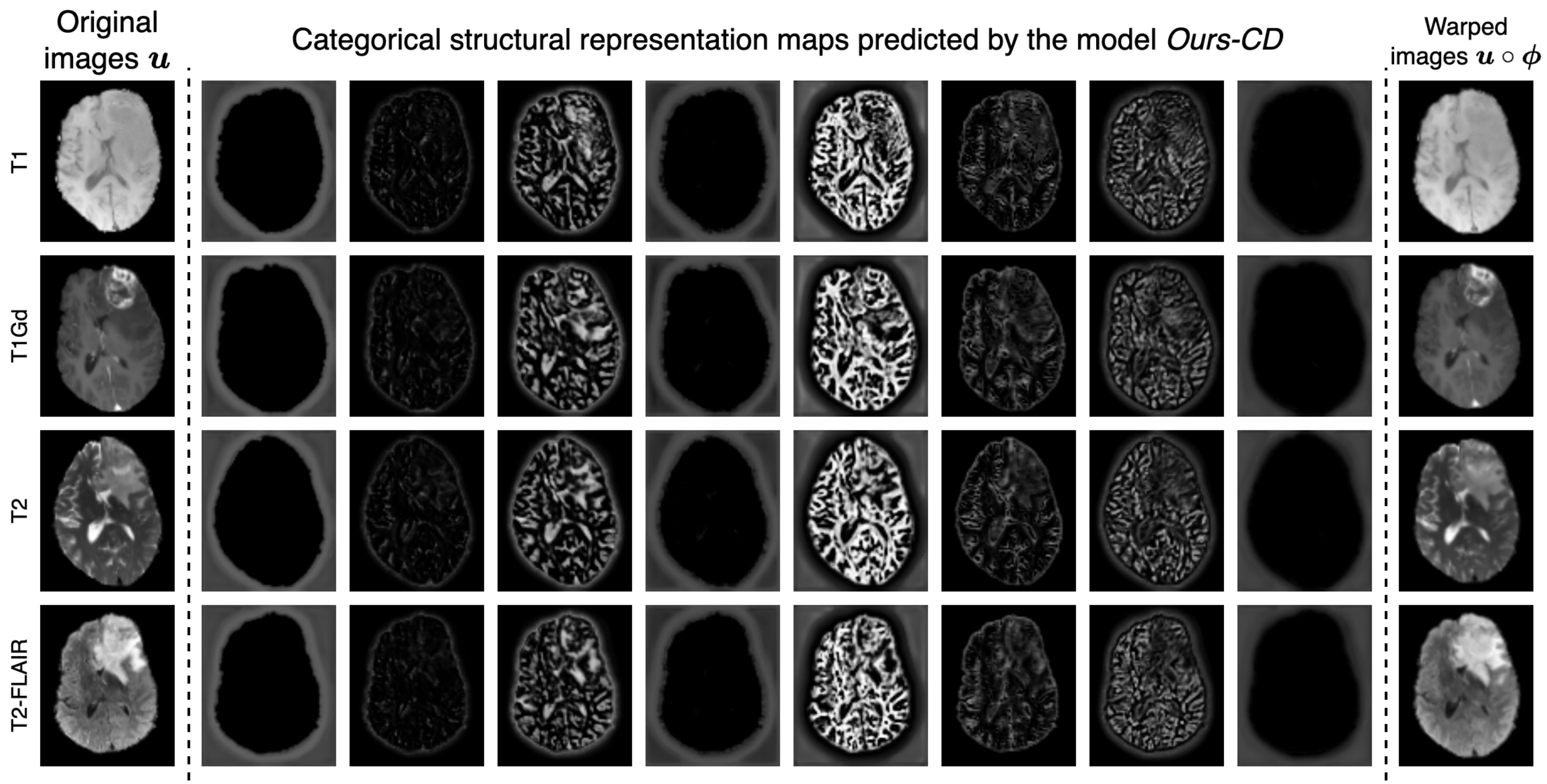}
  \end{subfigure}
  \caption{Categorical structural representations extracted by the proposed models on an image group from the BraTS dataset.}
  \label{fig:features_brats_sup}
\end{figure*}

\begin{figure*}[t]
\begin{subfigure}{\textwidth}
  \centering
  \includegraphics[width=\textwidth]{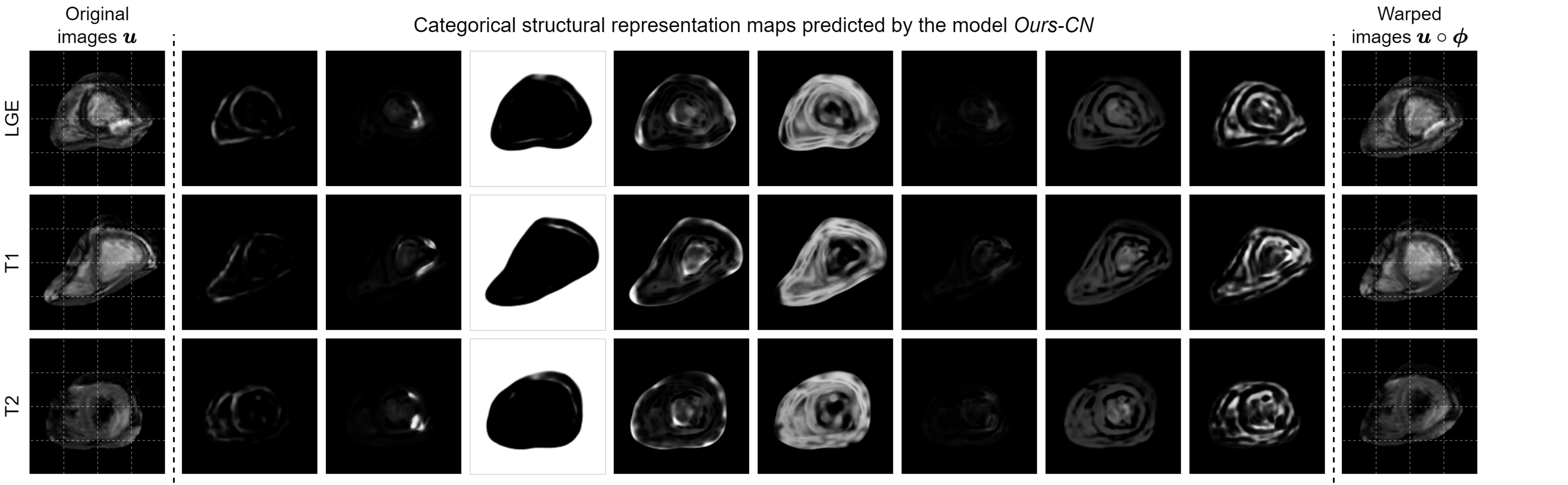}
\end{subfigure}
\\
\vspace{0.5cm}
\begin{subfigure}{\textwidth}
  \centering
  \includegraphics[width=\textwidth]{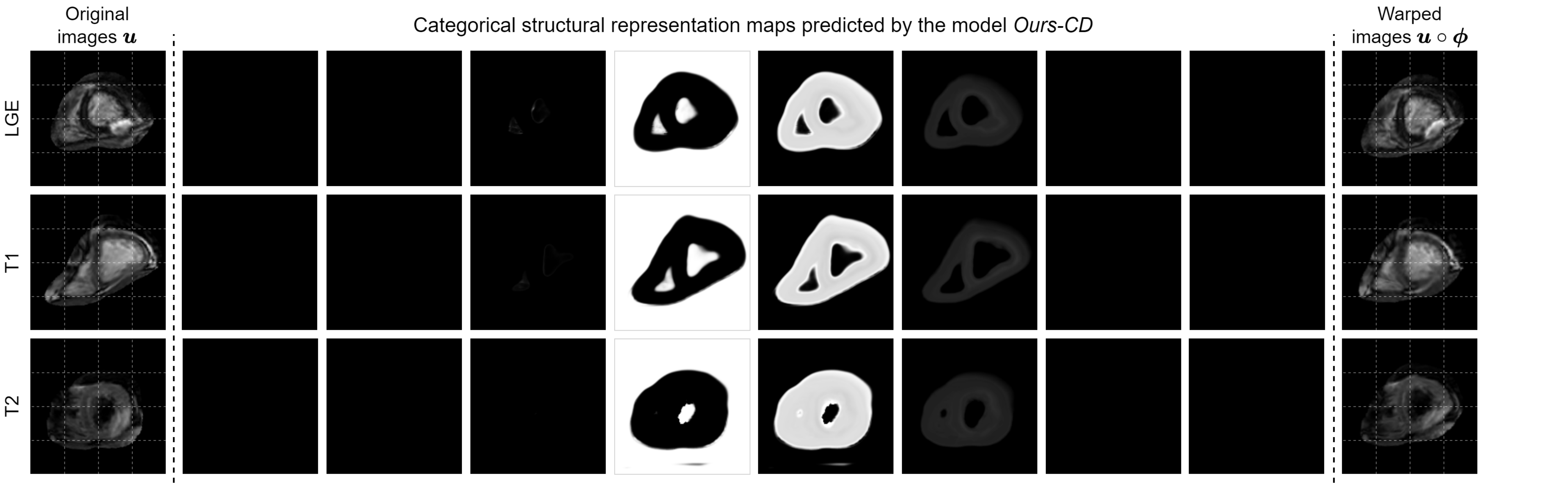}
\end{subfigure}
\caption{Categorical structural representations extracted by the proposed models on an image group from the MS-CMRSeg dataset.}
\label{fig:features_mscmr}
\end{figure*}

\begin{figure*}
\begin{subfigure}{\textwidth}
  \centering
  \includegraphics[width=\textwidth]{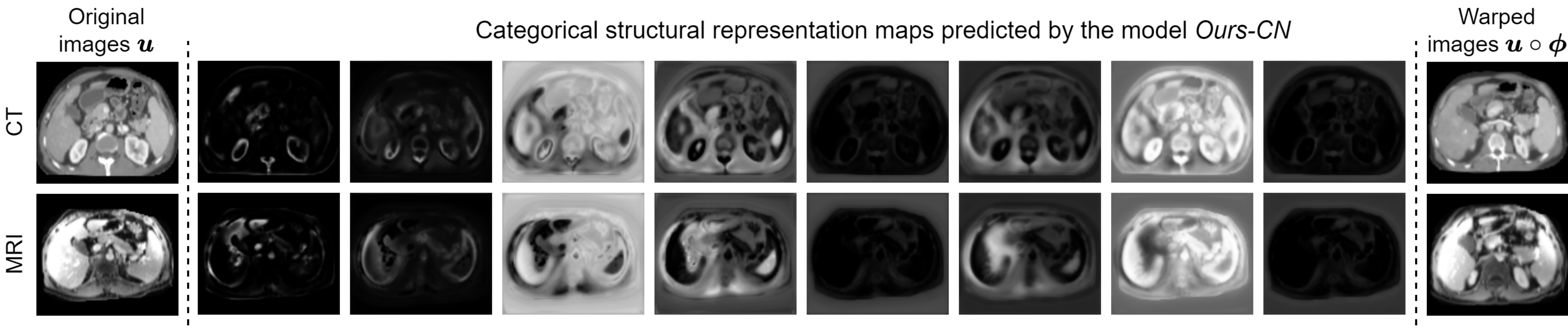}
\end{subfigure}
\\
\vspace{0.5cm}
\begin{subfigure}{\textwidth}
  \centering
  \includegraphics[width=\textwidth]{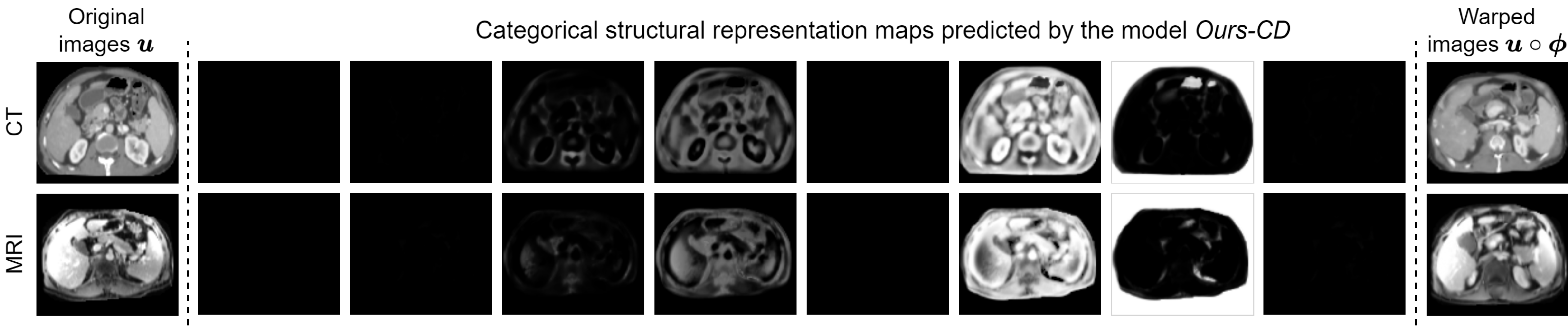}
\end{subfigure}
\caption{Categorical structural representations extracted by the proposed models on an image group from the Learn2Reg Abdominal MR-CT dataset.}
\end{figure*}

\begin{figure*}
\begin{subfigure}{\textwidth}
  \centering
  \includegraphics[width=\textwidth]{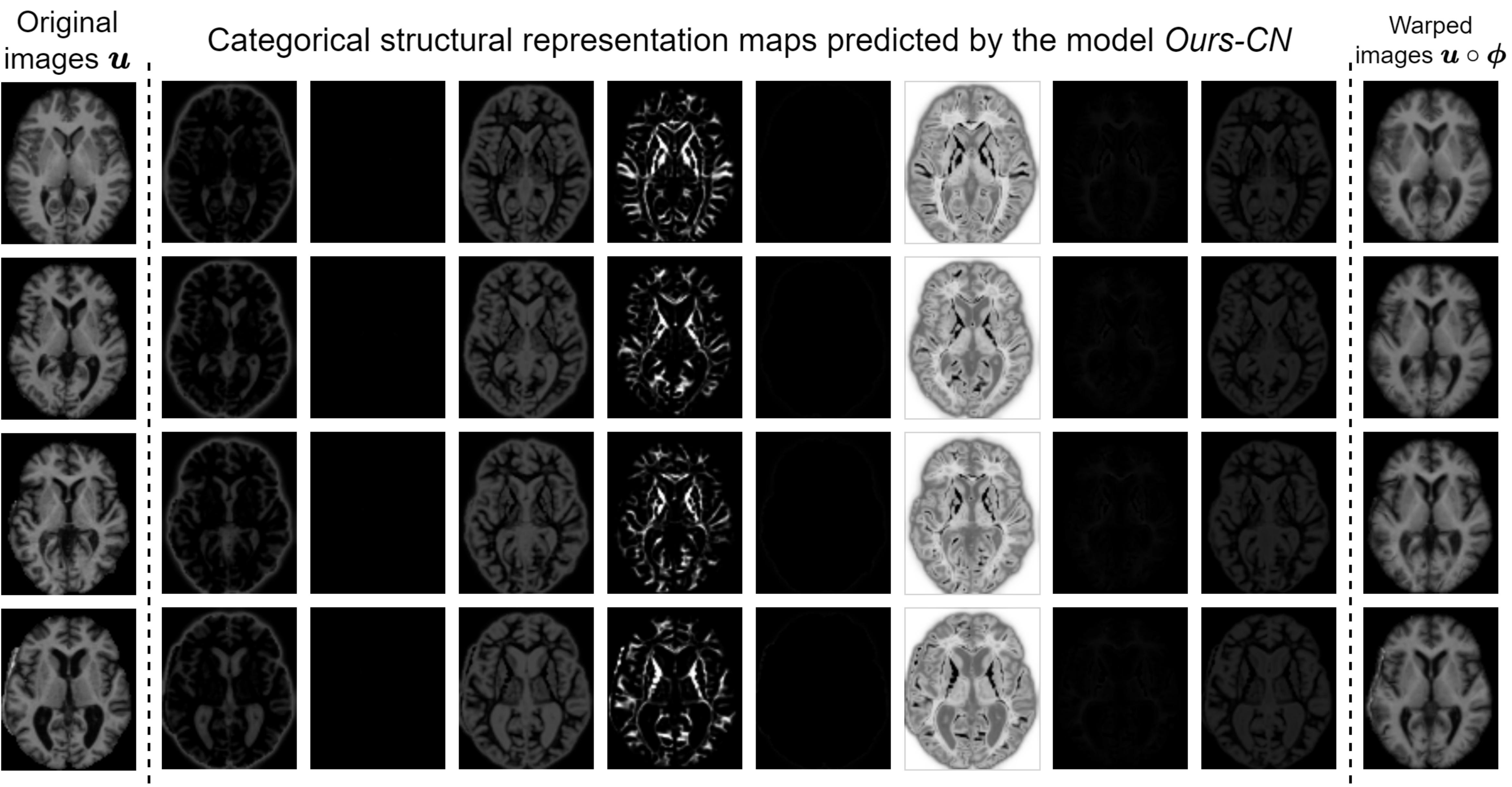}
\end{subfigure}
\\
\vspace{0.5cm}
\begin{subfigure}{\textwidth}
  \centering
  \includegraphics[width=\textwidth]{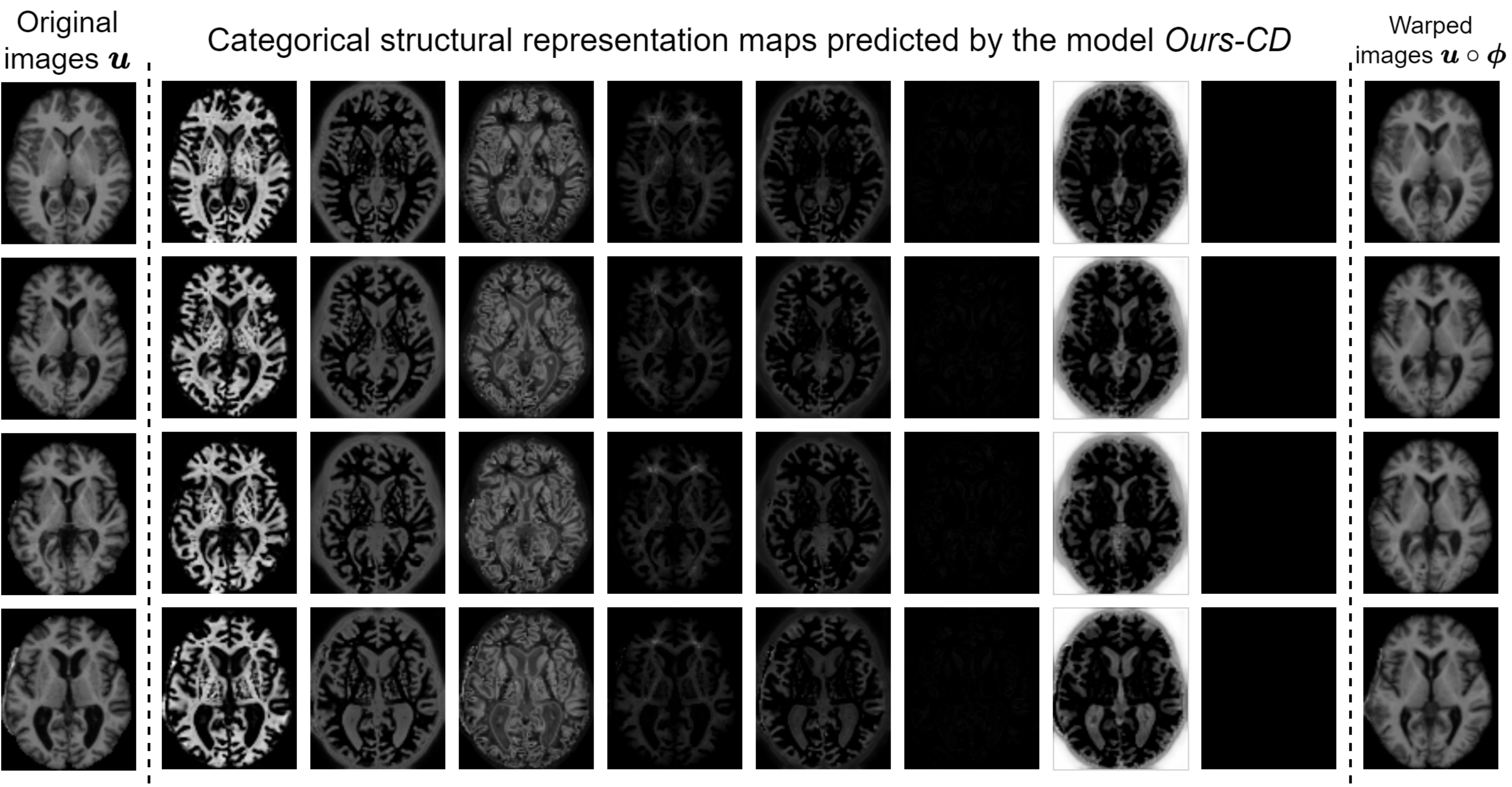}
\end{subfigure}
\caption{Categorical structural representations extracted by the proposed models on an image group from the OASIS dataset.}
\end{figure*}

 \begin{figure*}[t]
  \centering
  \includegraphics[width=\textwidth]{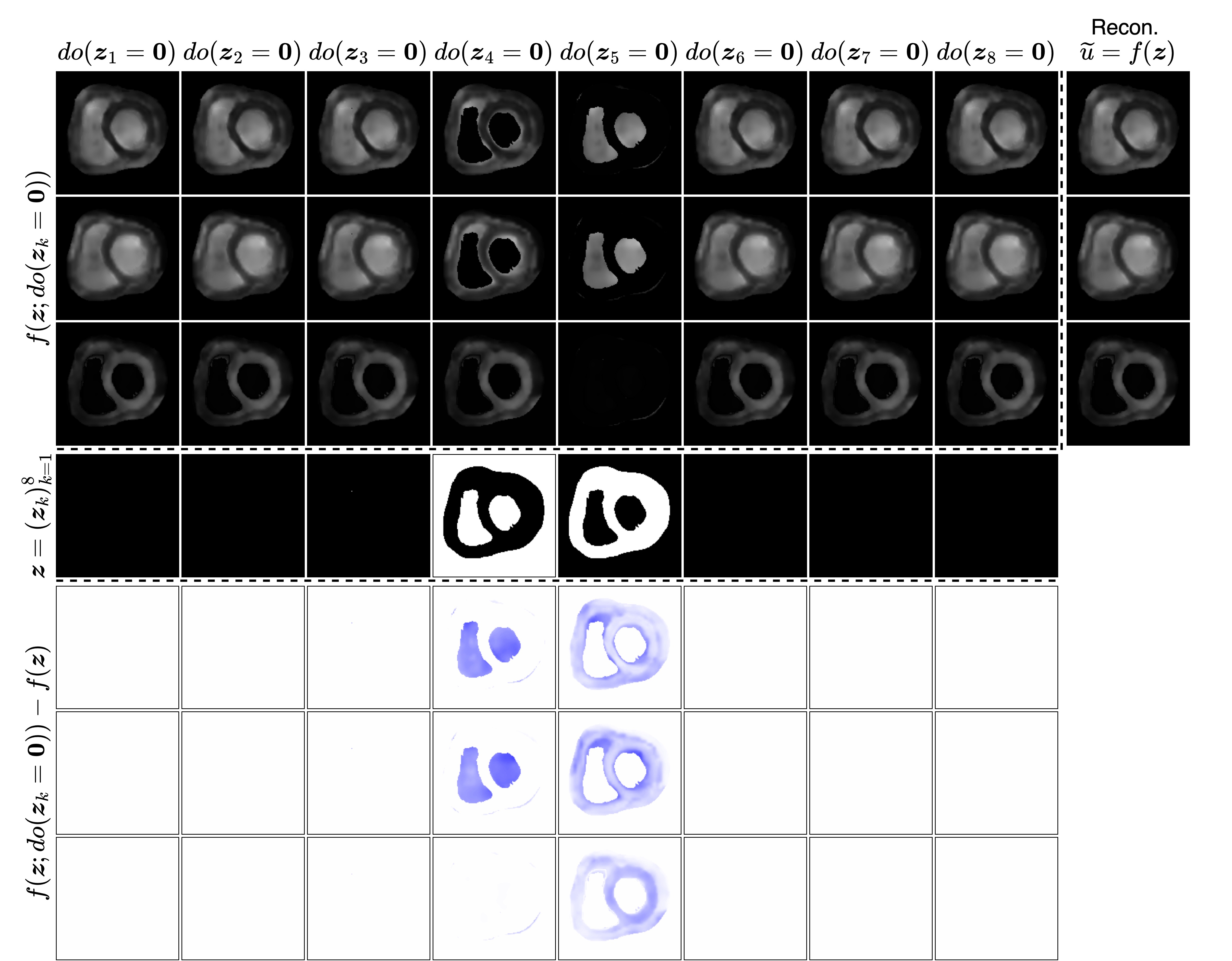}
  \caption{Counterfactual reconstruction of an image group from the MS-CMRSeg dataset by ontological transformations.
  }
  \label{fig:mscmr_counterfactual}
\end{figure*}

\begin{table*}[t]
\caption{The time costs of different methods for the MS-CMRSeg dataset. The times and numbers of epochs correspond to reaching the best DSC by each method itself, 95\% of the best DSC, or the best DSC by the best baseline (CTE-Att). For the three iterative methods APE, CTE and $\mathcal{X}$-CoReg, the values correspond to completing all iterations on the test images; for other learning-based methods, the values correspond to network training. d: days, h: hours, m: minutes.}
\label{tab:mscmr_time}
\centering
\begin{tabular}{ccccccccc}
\hline
\multirow{2}{*}{Method} & \multicolumn{2}{c}{Best DSC} &  & \multicolumn{2}{c}{95\% DSC} &  & \multicolumn{2}{c}{Best Baseline DSC} \\ \cline{2-3} \cline{5-6} \cline{8-9} 
                        & Epoch        & Time          &  & Epoch        & Time          &  & Epoch             & Time              \\ \hline
APE                     & N/A          & 13h 29m       &  & N/A          & N/A           &  & N/A               & N/A               \\
CTE                     & N/A          & 11h 45m       &  & N/A          & N/A           &  & N/A               & N/A               \\
$\mathcal{X}$-CoReg     & N/A          & 7h 47m        &  & N/A          & N/A           &  & N/A               & N/A               \\
APE-Att                 & 391          & 3d 1h         &  & 2            & 22m           &  & N/A               & N/A               \\
CTE-Att                 & 151          & 1d 1h         &  & 1            & 10m           &  & 151               & 1d 1h             \\ \hdashline
Ours-PN                 & 202          & 1d 22h        &  & 26           & 6h 5m         &  & N/A               & N/A               \\
Ours-CN                 & 135          & 1d 18h        &  & 9            & 2h 52m        &  & 17                & 5h 34m            \\
Ours-PD                 & 245          & 2d 10h        &  & 13           & 3h 6m         &  & N/A               & N/A               \\
Ours-CD                 & 88           & 1d 4h         &  & 7            & 2h 30m        &  & 15                & 5h 5m             \\ \hline
\end{tabular}
\end{table*}

\begin{table*}[h]
\caption{The training time costs of different variants of our model on the MS-CMRSeg dataset. The times and numbers of epochs correspond to reaching the best DSC by each method itself, 95\% of the best DSC, or the best DSC by the best baseline (CTE-Att). d: days, h: hours, m: minutes. *: using a learning rate of $10^{-4}$ (otherwise $10^{-3}$).}
\footnotesize
\centering
\label{tab:training_time}
\begin{tabular}{ccccccccccc}
\hline
\multirow{2}{*}{Method} & \multirow{2}{*}{Encoder}&\multirow{2}{*}{\makecell{Reg. \\Module}}& \multicolumn{2}{c}{\makecell{Best\\DSC}} &  & \multicolumn{2}{c}{\makecell{95\% \\DSC}} &  & \multicolumn{2}{c}{\makecell{Best \\Baseline DSC}} \\ \cline{4-5} \cline{7-8} \cline{10-11} 
                       && & Epoch        & Time          &  & Epoch        & Time          &  & Epoch             & Time              \\ \hline

Ours-CN  & Att-UNet& Convs              & 135          & 1d 18h        &  & 9            & 2h 52m        &  & 17                & 5h 34m            \\
Ours-CD  &Att-UNet& Convs               & 88           & 1d 4h         &  & 7            & 2h 30m        &  & 15                & 5h 5m             \\ \hdashline
Ours-CN  &TransMorph& Convs& 360 & 1d 19h  &  & 10   & 1h 12m  &  & 27  & 3h 25m \\
Ours-CD  &TransMorph& Convs& 346 & 1d 23h  &  & 14   & 1h 57m  &  & 33  & 4h 36m \\
Ours-CN  &Att-UNet& PIViT& 821* & 6d 12h*  &  & 96*   & 16h 43m*  &  & 237*  & 42h 56m* \\
Ours-CD  &Att-UNet& PIViT& 837* & 4d 13h*  &  & 58*   & 7h 34m*  &  & 166*  & 21h 36m* \\
Ours-CN  &Att-UNet& ModeT& 238 & 1d 13h  &  & 7   & 1h 3m  &  & 14  & 2h 6m \\

\hline
\end{tabular}
\end{table*}

\begin{table}[h]
\caption{The inference time costs of different variants of our model on the MS-CMRSeg dataset. The test set contains 880 image groups. Inference was conducted with a batch size of 20, which consumed less than 5 GB of GPU memory.}
\footnotesize
\centering
\label{tab:inference_time}
\begin{tabular}{cccccccccccc}
\hline
\multirow{2}{*}{Method} & \multirow{2}{*}{Encoder}&\multirow{2}{*}{\makecell{Reg. \\Module}}& \multirow{2}{*}{Total Time} & \multirow{2}{*}{\makecell{Average Time \\per Image Group}} \\ \\ \hline

Ours-CN  & Att-UNet& Convs              & 38.57s  & 0.04s      \\
Ours-CD  &Att-UNet& Convs               &  39.45s & 0.04s          \\ \hdashline
Ours-CN  &TransMorph& Convs& 36.39s & 0.04s \\
Ours-CD  &TransMorph& Convs&  37.73s & 0.04s\\
Ours-CN  &Att-UNet& PIViT*& 34.22s & 0.04s\\
Ours-CD  &Att-UNet& PIViT*& 35.83s & 0.04s \\
Ours-CN  &Att-UNet& ModeT& 35.39s & 0.04s\\

\hline
\end{tabular}
\end{table}
  
\begin{table}[h]
\caption{Ablation study on the effect of reconstruction fidelity on registration performance. }
\footnotesize
\centering
\label{tab:recon_ablation}
\begin{tabular}{ccccccccccc}
\hline
\multirow{2}{*}{Kernel Size} & \multirow{2}{*}{\makecell{Weight of \\Reconstruction Loss}} & \multirow{2}{*}{DSC $\uparrow$} & \multirow{2}{*}{ASSD $\downarrow$}\\ \\ \hline
1 & 120 & $0.836\pm 0.043$ & $2.21\pm0.47$\\\hdashline
3 & 120 & $0.835\pm 0.045$ & $2.29\pm0.52$\\
1 & 80 & $0.836\pm 0.045$ & $2.26\pm0.51$\\
1 & 55 & $0.819\pm 0.048$ & $2.56\pm0.52$\\
1 & 40 & $0.793\pm 0.052$ & $2.86\pm0.55$\\

\hline
\end{tabular}
\end{table}

\end{appendices}

\begin{scriptsize}
  \bibliographystyle{IEEEtranN}
  \bibliography{main}
\end{scriptsize}

\begin{IEEEbiography}
[{\includegraphics[width=1in,height=1.25in,clip,keepaspectratio]{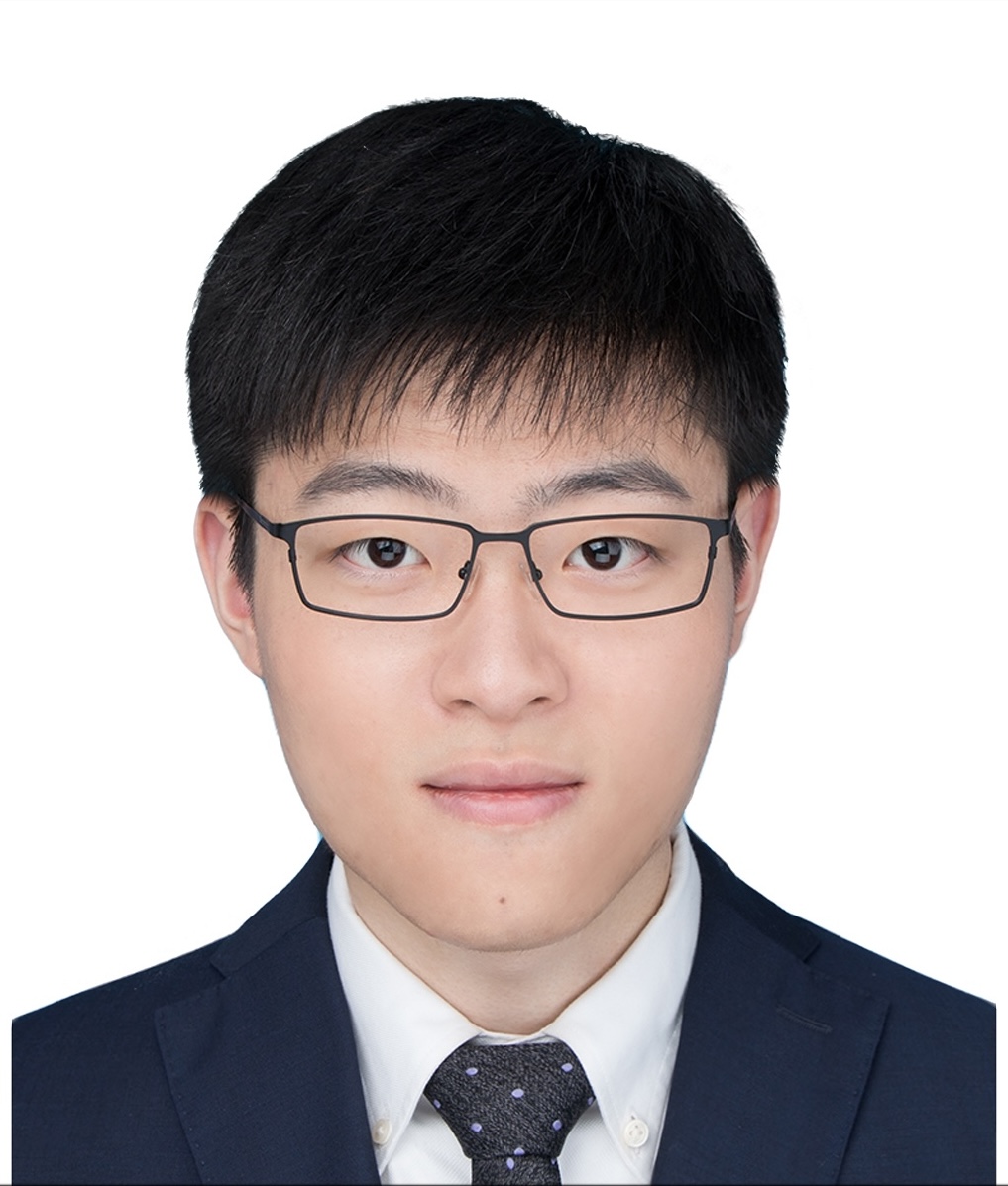}}]
{Xinzhe Luo}
  received his BSc degree in Mathematics and PhD degree in Statistics from Fudan University in 2019 and 2024, respectively.
  He is currently a postdoctoral researcher working at the Department of Electrical and Electronic Engineering and I-X, Imperial College London.
  His research interests include machine learning, medical imaging and probabilistic deep learning.
\end{IEEEbiography}

\begin{IEEEbiography}
[{\includegraphics[width=1in,height=1.25in,clip,keepaspectratio]{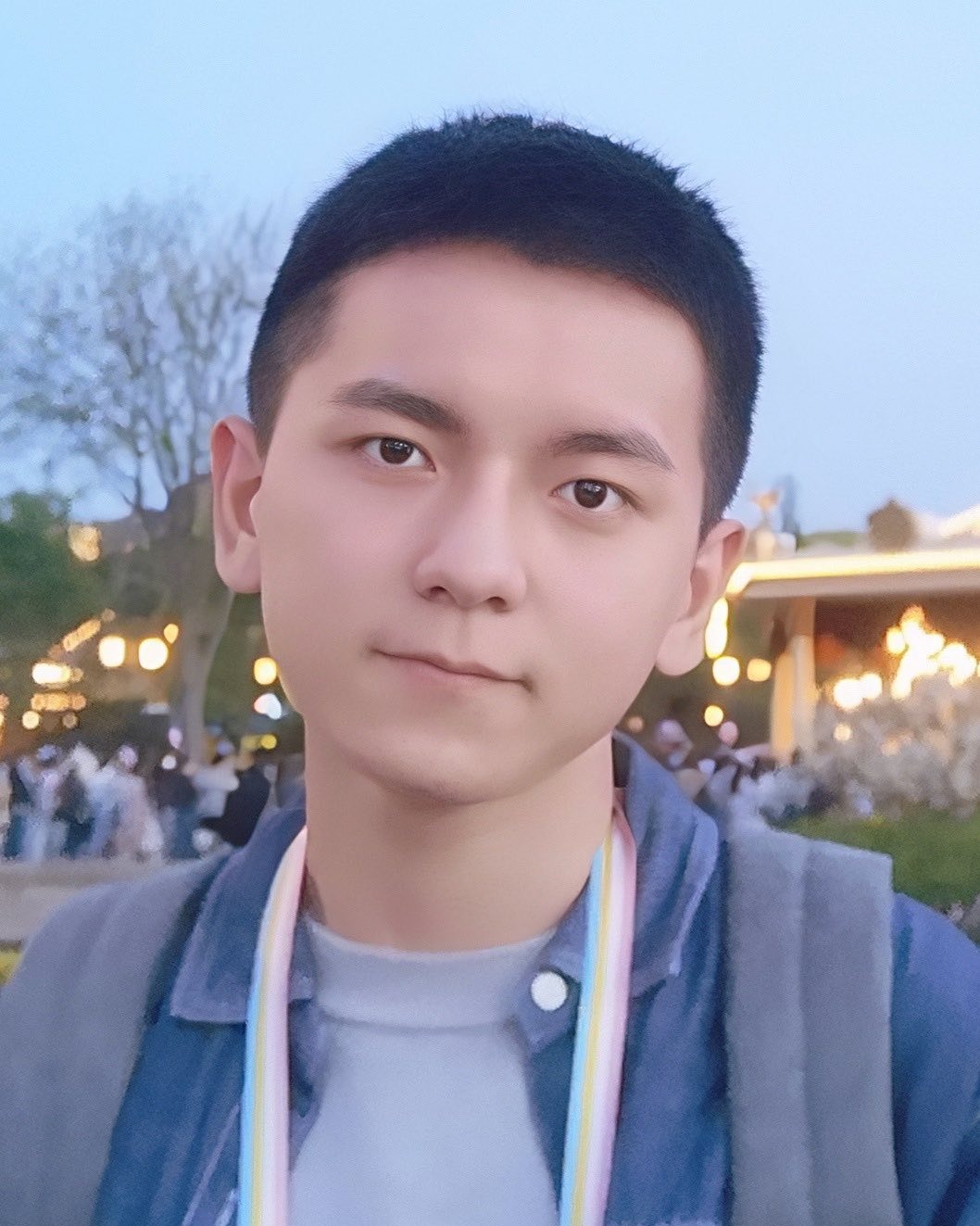}}]
{Xin Wang} received the BS degree in electrical engineering from Fudan University in 2020. He is currently pursuing a PhD degree in electrical and computer engineering at the University of Washington, supervised by Prof. Linda Shapiro and Prof. Chun Yuan. His research is centered around medical image analysis and machine learning, with a focus on statistical models for multi-modal image fusion.
\end{IEEEbiography}

\begin{IEEEbiography}
[{\includegraphics[width=1in,height=1.25in,clip,keepaspectratio]{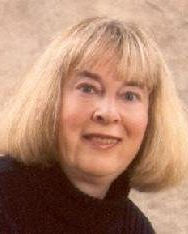}}]
{Linda Shapiro} is the Boeing Endowed Professor in Computer Science and Engineering, Professor of Electrical and Computer Engineering, and Adjunct Professor of Medical Education and Biomedical Informatics at the University of Washington. 
Her research interests
include computer vision, image database
systems, pattern recognition, and medical imaging.
She is a fellow of the IEEE and the IAPR,
and a past chair of the IEEE Computer Society
Technical Committee on Pattern Analysis and
Machine Intelligence.
\end{IEEEbiography}

\begin{IEEEbiography}
[{\includegraphics[width=1in,height=1.25in,clip,keepaspectratio]{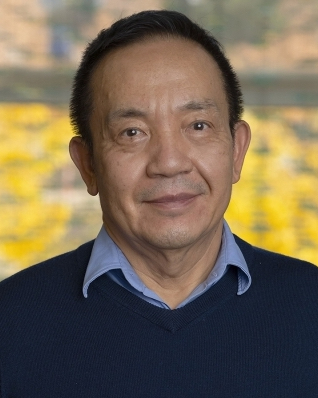}}]
{Chun Yuan} is a professor in the Department of Radiology and Imaging Sciences at the University of Utah, and a professor emeritus in the Department of Radiology at the University of Washington. He is a fellow of the ISMRM, AHA, and AIMBE. He has pioneered multiple high-resolution MRI techniques to detect vulnerable atherosclerotic plaques and led numerous MRI studies examining carotid atherosclerosis. He is a member of the editorial board for the following Journals: JACC CV imaging, Journal of Cardiovascular MR, and the Journal of Geriatric Cardiology. 
\end{IEEEbiography}

\begin{IEEEbiography}
[{\includegraphics[width=1in,height=1.25in,clip,keepaspectratio]{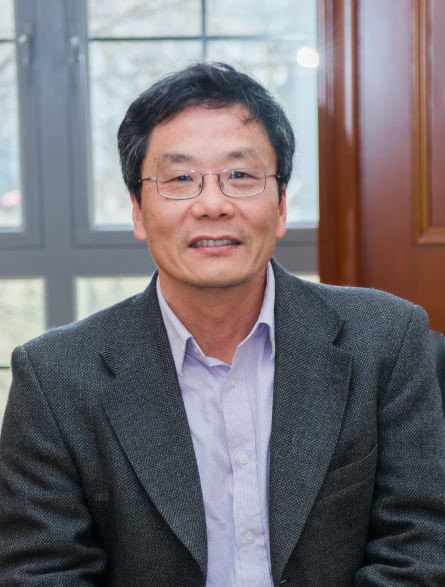}}]
{Jianfeng Feng}
  is the chair professor of Shanghai Center for Mathematical Sciences, and the Dean of Institute of Science and Technology for Brain-Inspired Intelligence and School of Data Science in Fudan University. He has been developing new mathematical, statistical and computational theories and methods to meet the challenges raised in neuroscience and mental health researches. Recently, his research interests are mainly in big data analysis and mining for neuroscience and brain diseases. He was awarded the Royal Society Wolfson Research Merit Award in 2011, as a scientist ‘being of great achievements or potentials’, and the Humboldt Research Award in 2023, for his significant contribution in psychiatry research and pioneering work in Digital Twin Brain.
\end{IEEEbiography}

\begin{IEEEbiography}
[{\includegraphics[width=1in,height=1.25in,clip,keepaspectratio]{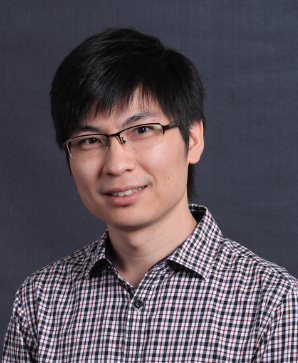}}]
{Xiahai Zhuang}
  is a professor at the School of Data Science, Fudan University. He graduated from Department of Computer Science, Tianjin University, received Master degree from Shanghai Jiao Tong University and Doctorate degree from University College London. His research interests include interpretable AI, medical image analysis and computer vision. His work won the Elsevier-MedIA 1st Prize and Medical Image Analysis MICCAI Best Paper Award 2023.
\end{IEEEbiography}

\end{document}